\documentclass[twoside]{article}

\usepackage[preprint]{aistats2027}

\usepackage[round]{natbib}

\usepackage{graphicx}

\usepackage{amsmath}
\usepackage{amssymb}
\usepackage{amsfonts}
\usepackage{bm}          

\usepackage{amsthm}      

\theoremstyle{plain}
\newtheorem{theorem}{Theorem}
\newtheorem{lemma}{Lemma}

\DeclareMathOperator*{\argmin}{arg\,min}

\theoremstyle{definition}
\newtheorem{assumption}{Assumption}
\newtheorem{definition}{Definition}

\theoremstyle{remark}
\newtheorem{remark}{Remark}

\usepackage{booktabs}

\usepackage{tabularx}

\usepackage{enumitem}
\usepackage{multirow}
\usepackage{url}

\usepackage[colorlinks=true, 
            citecolor=blue, 
            linkcolor=blue, 
            urlcolor=blue]{hyperref}

\begin{document}



\runningauthor{Alexey Zaytsev, Diana Koldasbayeva, Egor Serov}
\twocolumn[

\aistatstitle{Reliable Risk Estimation with Kernel Mean Matching under Dependent Sampling}

\aistatsauthor{ Alexey Zaytsev$^*$ \And Diana Koldasbayeva$^*$ \And Egor Serov$^{*,\dagger}$ }

\aistatsaddress{ Applied AI Institute } ]

{\renewcommand{\thefootnote}{}
\footnotetext{$^*$Equal contribution.}
\footnotetext{$^\dagger$e.serov@applied-ai.ru}
}

\begin{abstract}
Kernel Mean Matching (KMM) estimates risk under covariate shift by reweighting labelled source samples so that their kernel mean embedding matches that of unlabeled target samples. 
Existing finite-sample guarantees are usually stated for independent samples, although many applications involve temporal, spatial, or clustered dependence. We give a finite-sample KMM analysis under dependent sampling by replacing nominal sample sizes with effective sample sizes in the concentration steps used by the classical proof. 
The approach uses Bernstein inequalities for geometrically strongly mixing processes and derives a model-based effective sample size. 
The resulting bound separates overlap, dependence, and smoothness effects with specific constants derived under a Gaussian process assumption. 
This analysis yields an additional tradeoff: smoother losses preserve more input dependence and therefore reduce effective sample size, while also reducing function complexity and empirical-to-population risk gaps. 
Applying the result to prediction yields confidence intervals that achieve near-nominal coverage in synthetic experiments and substantially reduce undercoverage on real monitoring data.

\end{abstract}

\section{Introduction}
\label{sec:intro}

Machine learning models often face distribution shift: labelled training samples come from a source distribution, while predictions are required under a different target distribution.
This mismatch complicates reliable evaluation, particularly in scientific domains where observations are collected along spatial transects, temporal monitoring campaigns, or clustered sampling schemes \citep{roberts2017cross}.
Specifically, under \emph{covariate shift}, the conditional response law remains unchanged, while the marginal covariate distribution changes. In ecological and geospatial modelling, spatial transects and temporal monitoring campaigns produce structured observations, while uneven sampling may leave labelled source data systematically unrepresentative of the deployment region \citep{koldasbayeva2025foundation}.

A standard approach to estimating target risk from labelled source samples and unlabelled target covariates is to reweight source losses by the ratio of target to source covariate densities \citep{shimodaira2000improving,sugiyama2007covariate}. Estimating this ratio can be unstable when overlap is poor, source observations are sparse in high-ratio regions, or the ratio model is misspecified \citep{kouw2019review}. Kernel Mean Matching (KMM) provides an alternative to explicit density-ratio estimation: it computes source weights that align the empirical source and target kernel mean embeddings in a reproducing kernel Hilbert space (RKHS) \citep{huang2006correcting,gretton2009covariate,serov2026kernel}.

Despite KMM's practical success, its standard finite-sample theory relies on independent observations~\citep{yu2012analysis}. This assumption is routinely violated in real-world settings characterized by temporal persistence, spatial autocorrelation, environmental gradients, or cluster sampling. For these dependent regimes, treating the data as independent leads to overconfidence, as the nominal sample size substantially overstates the true amount of statistical information.

This paper develops a dependent-sampling foundation for KMM risk-estimation guarantees. The guiding principle of our approach is that the standard i.i.d. proof structure remains valid if standard concentration inequalities are replaced by their dependent counterparts, substituting the nominal sample size $n$ with an \emph{effective sample size} $n_{\mathrm{eff}}$. Although $n_{\mathrm{eff}}$ is a long-standing cornerstone for diagnosing correlation in Markov Chain Monte Carlo \citep{kass1998markov, vehtari2019rank}, bounding weight degradation in importance sampling \citep{martino2017effective}, and estimating spatial degrees of freedom \citep{bretherton1999effective}, our work formally integrates this concept into non-parametric RKHS generalization bounds \citep{gretton2012kernel} for domain adaptation. This allows us to derive risk-estimation bounds under covariate shift and non-i.i.d. sampling that decompose the distinct roles of distribution overlap, data dependence, and function smoothness. Furthermore, our theoretical analysis reveals an important underlying tradeoff: smoother target or loss functions have lower statistical complexity, but when evaluated along a correlated covariate sequence, they preserve more dependence, which inherently reduces the effective sample size.

Specifically, our main contributions are as follows:
\begin{enumerate}
    \item We formulate an effective-concentration condition, based on a Bernstein inequality for strongly $\beta$-mixing sequences, under which the classical finite-sample KMM guarantee naturally extends to dependent data.
    
    \item We explicitly decompose the generalization bottleneck into three components: independent noise, density-ratio overlap, and dependent sampling-sequence persistence.

    \item We characterize how covariate dependence and response smoothness determine effective sample size, deriving a closed-form GP-induced $n_{\mathrm{eff}}$ that yields a practical dependence correction for KMM risk estimation.


    \item We validate the GP-induced effective-size mechanism in controlled synthetic experiments and evaluate the resulting confidence correction on three real-world monitoring datasets, where it consistently reduces empirical undercoverage relative to the i.i.d. interval.
    
\end{enumerate}

\section{Related Work}
\label{sec:related}

Kernel Mean Matching (KMM) was originally introduced as a non-parametric approach to correct for sample selection bias and covariate shift \citep{huang2006correcting, gretton2009covariate}. More broadly, learning and evaluation under covariate shift are typically addressed via importance-weighted likelihoods, cross-validation schemes, and adapted risk estimators \citep{shimodaira2000improving, sugiyama2007covariate, kouw2019review}. The foundational finite-sample analysis by \citet{yu2012analysis} demonstrates that KMM effectively bounds the target expectation error, provided the regression function lies within the RKHS and the source and target samples are independent. Our theoretical analysis adopts a similar proof structure but generalises it by replacing standard independent concentration steps with effective concentration inequalities for dependent data.


Dependence is ubiquitous in environmental and geospatial data, where nearby observations in space or time often carry substantially less independent information than their nominal sample size suggests. Ignoring this structure can lead to overly optimistic evaluation and
confidence estimates, motivating dependence-aware validation and transfer procedures \citep{roberts2017cross, koldasbayeva2025foundation}. Recent work has also investigated KMM directly under spatial distribution shift, showing empirically that dependence and source--target mismatch can jointly affect transfer-risk estimation \citep{serov2026kernel}. Our contribution complements this empirical line of work by providing a finite-sample analysis of how dependent sampling modifies the KMM risk-estimation guarantee and by making the resulting loss of information explicit through an effective sample size.

The dependent-data component of our framework builds upon concentration inequalities for strongly mixing processes \citep{rio1993covariance, rio1999theorie, bosq2012nonparametric}, specifically leveraging a Bernstein bound for geometrically $\beta$-mixing sequences \citep{merlevede2009bernstein}. We translate these bounds into an \emph{effective sample size} ($n_{\mathrm{eff}}$) perspective, creating a bridge to other fields where data dependence is a central challenge. For instance, in Markov Chain Monte Carlo (MCMC), $n_{\mathrm{eff}}$ is the canonical diagnostic for chain convergence \citep{kass1998markov, vehtari2019rank}. While the MCMC literature has explored multivariate ESS for finite-dimensional outputs \citep{vats2019multivariate}, our work extends this concept by defining effective sample sizes for infinite-dimensional RKHS embeddings. Furthermore, KMM shares conceptual roots with Importance Sampling (IS), where weight degradation due to distribution discrepancy is actively quantified via ESS \citep{martino2017effective}; in our KMM bound, the domain overlap penalty $B$ formally captures this identical discrepancy bottleneck.


Finally, Gaussian-process models provide a convenient probabilistic framework for relating dependence in the inputs to dependence in a smooth response function \citep{rasmussen2003gaussian, srinivas2009gaussian}. We use this connection to obtain an explicit, model-based effective sample size for KMM risk estimation. Rather than treating $n_{\mathrm{eff}}$ as an abstract concentration parameter, the GP construction expresses it in terms of the correlation structure of the covariates and the smoothness scale of the response. This provides a practical link between the dependent-data concentration analysis and the confidence correction
evaluated in our experiments.

Table~\ref{tab:positioning} summarizes our positioning relative to this diverse theoretical landscape.

\begin{table}[t]
\centering
\caption{Positioning relative to related theory.}
\label{tab:positioning}
\vspace{1mm}
\footnotesize
\begin{tabular}{@{}p{0.28\linewidth}>{\raggedright}p{0.32\linewidth}>{\raggedright\arraybackslash}p{0.34\linewidth}@{}}
\toprule
\textbf{Reference} & \textbf{Setting} & \textbf{Relation to Ours} \\
\midrule
\citep{huang2006correcting, gretton2009covariate} & Covariate shift KMM & Introduces KMM objective \\
\addlinespace[2pt]
\citep{yu2012analysis} & Finite-sample bound & i.i.d. proof template \\
\addlinespace[2pt]
\citep{roberts2017cross, koldasbayeva2025foundation} & Spatio-temporal CV & Dependence motivation \\
\addlinespace[2pt]
\citep{merlevede2009bernstein} & $\beta$-mixing processes & Scalar concentration tools \\
\addlinespace[3pt]
\textbf{This paper} & \textbf{Dependent KMM} & \textbf{Explicit bounds via $n_{\mathrm{eff}}$} \\
\bottomrule
\end{tabular}
\end{table}

\section{Theory}
\label{sec:setup}

\subsection{Problem Setup}
\label{sec:problem_setup}

Consider a labelled source sequence $\{(\mathbf{X}_i^{\mathrm{tr}}, Y_i^{\mathrm{tr}})\}_{i=1}^{n_{\mathrm{tr}}}$ with joint distribution $P_{\mathrm{tr}}$ over $\mathcal{X} \times \mathcal{Y}$, having covariate marginal $P_{\mathrm{tr}}^{\mathcal{X}}$, and an unlabeled target sequence $\{\mathbf{X}_j^{\mathrm{te}}\}_{j=1}^{n_{\mathrm{te}}}$ with marginal $P_{\mathrm{te}}$ over $\mathcal{X} \subset \mathbb{R}^d$. The covariate space $\mathcal{X}$ is compact and measurable, and responses are normalized to $Y \in [0, 1]$ 

\begin{assumption}[Covariate Shift]
\label{ass:covariate_shift}
The conditional response law is invariant across domains: $P_{\mathrm{tr}}(dy \mid \mathbf{x}) = P_{\mathrm{te}}(dy \mid \mathbf{x})$.
\end{assumption}
Under Assumption~\ref{ass:covariate_shift}, we define the invariant regression function $m(\mathbf{x}) := \mathbb{E}[Y \mid \mathbf{X}=\mathbf{x}]$. Our objective is to estimate the target expectation $\mu_{\mathrm{te}} := \mathbb{E}_{P_{\mathrm{te}}}[Y] = \int_{\mathcal{X}} m(\mathbf{x})\, P_{\mathrm{te}}(d\mathbf{x})$.

\begin{assumption}[Conditionally Independent Residuals]
\label{ass:independent_residuals}
Conditional on all observed covariates $(\mathbf{X}^{\mathrm{tr}}, \mathbf{X}^{\mathrm{te}})$, the residual errors $\epsilon_i = Y_i^{\mathrm{tr}} - m(\mathbf{X}_i^{\mathrm{tr}})$ are mutually independent, mean-zero random variables.
\end{assumption}

\begin{remark}[Scope of Residual Independence]
\label{rem:residuals}
Assumption~\ref{ass:independent_residuals} isolates the structural challenge of covariate shift from idiosyncratic label noise. It allows us to prove Theorem~\ref{thm:kmm-dependent} at the nominal parametric rate. While this is a standard theoretical abstraction in non-parametric regression under dependent designs \citep{bosq2012nonparametric, yu2012analysis}, real-world spatial datasets may contain unobserved confounders causing the residuals to exhibit their own autocorrelation. 
\end{remark}

\begin{assumption}[Bounded Density Ratio and Compact Support]
\label{ass:bounded_ratio}
The covariate space $\mathcal{X} \subset \mathbb{R}^d$ is compact and measurable. The derivative $\omega^*(\mathbf{x}) = (dP_{\mathrm{te}} / dP_{\mathrm{tr}}^{\mathcal{X}})(\mathbf{x})$ exists and satisfies $0 \le \omega^*(\mathbf{x}) \le B < \infty$ for all $\mathbf{x} \in \mathcal{X}$.
\end{assumption}

\begin{assumption}[Moderate-Deviation Sample Size]
\label{ass:burn_in}
For a given failure probability $\delta \in (0, 1)$, the source sample size $n_{\mathrm{tr}}$ is sufficiently large relative to the sequence mixing rate to satisfy the moderate-deviation condition:
\begin{equation}
\label{eq:sample_burnin}
\frac{n_{\mathrm{tr}}}{(\log n_{\mathrm{tr}})^4} \ge \frac{B^2 K_2^2}{2 K_1 \sigma_{v,\infty}^2} \log\frac{6}{\delta},
\end{equation}
where $K_1, K_2 > 0$ and $\sigma_{v,\infty}^2$ are defined in Appendix~\ref{app:lemma1}.
\end{assumption}

\paragraph{Kernel Mean Matching (KMM).}
Let $\mathcal{H}$ be an RKHS equipped with a feature map $\Phi: \mathcal{X} \to \mathcal{H}$, and assume the features are uniformly bounded: $\sup_{\mathbf{x} \in \mathcal{X}} \|\Phi(\mathbf{x})\|_{\mathcal{H}} \le C$. Rather than explicitly estimating $\omega^*(\mathbf{x})$, KMM \citep{huang2006correcting, yu2012analysis} computes weights $\hat{\mathbf{w}} = (\hat{w}_1, \dots, \hat{w}_{n_{\mathrm{tr}}})^\top$ aligning the empirical RKHS means $\hat{\mu}_{\Phi}^{\mathrm{tr}}(\mathbf{w}) := \frac{1}{n_{\mathrm{tr}}} \sum_{i=1}^{n_{\mathrm{tr}}} w_i \Phi(\mathbf{X}_i^{\mathrm{tr}})$ and $\hat{\mu}_{\Phi}^{\mathrm{te}} := \frac{1}{n_{\mathrm{te}}} \sum_{j=1}^{n_{\mathrm{te}}} \Phi(\mathbf{X}_j^{\mathrm{te}})$:

\begin{equation}
\label{eq:kmm}
\hat{\mathbf{w}} = \argmin_{\mathbf{w} \in [0, B]^{n_{\mathrm{tr}}}} \left\| \hat{\mu}_{\Phi}^{\mathrm{tr}}(\mathbf{w}) - \hat{\mu}_{\Phi}^{\mathrm{te}} \right\|_{\mathcal{H}}.
\end{equation}
The reweighted target mean estimator is then given by:
\begin{equation}
\label{eq:kmm-estimator}
\hat{\mu}_{\mathrm{KMM}} = \tfrac{1}{n_{\mathrm{tr}}} \sum_{i=1}^{n_{\mathrm{tr}}} \hat{w}_i Y_i^{\mathrm{tr}}.
\end{equation}
For $Y$ being a loss function we obtain a risk estimate.

\subsection{Main Dependent KMM Bound}
\label{sec:main_bound}

The classical KMM guarantee \citep{yu2012analysis} relies on Hoeffding bounds for independent samples. To adapt this to dependent sampling, we distill the Bernstein inequality for geometrically $\beta$-mixing processes \citep{merlevede2009bernstein} into a deviation-independent effective sample size $n_{\mathrm{eff}}^{\mathrm{tr}}$ (detailed in Appendix~\ref{app:lemma1}). To control the empirical RKHS feature-matching error, we apply Berbee's coupling \citep{berbee1979random} and Pinelis' inequality \citep{pinelis1994optimum} to derive a Hilbert-space concentration bound governed by a blocked effective sample size $n_{\mathrm{eff, RKHS}}$ (derived in Appendix~\ref{app:lemma2}). As a result, we decompose the KMM target estimation error into three distinct operational components.

\begin{theorem}[Non-Asymptotic Dependent KMM Bound]
\label{thm:kmm-dependent}
Suppose Assumptions \ref{ass:covariate_shift}--\ref{ass:bounded_ratio} hold, and Assumption~\ref{ass:burn_in} is satisfied for a given failure probability $\delta \in (0,1)$. Assume the source and target covariate sequences are strictly stationary and geometrically $\beta$-mixing. Let the invariant regression function satisfy $m \in \mathcal{H}$, and let $v(\mathbf{x}) = \omega^*(\mathbf{x})m(\mathbf{x})$ have a long-run variance proxy $\sigma_{v,\infty}^2$.

Then, with probability at least $1-\delta$:
\begin{align}
\label{eq:main-bound}
&\left| \hat{\mu}_{\mathrm{KMM}} - \mu_{\mathrm{te}} \right| \le 
\underbrace{ B \sqrt{\frac{\log(6/\delta)}{2n_{\mathrm{tr}}}} }_{\mathcal{E}_{\mathrm{noise}}} 
+ \underbrace{ B \sqrt{\frac{\log(6/\delta)}{2n_{\mathrm{eff}}^{\mathrm{tr}}}} }_{\mathcal{E}_{\mathrm{shift}}} \nonumber \\
&+ \underbrace{ \mathcal{O}\left( \|m\|_{\mathcal{H}} \left( \frac{B}{\sqrt{n_{\mathrm{eff, RKHS}}^{\mathrm{tr}}}} + \frac{1}{\sqrt{n_{\mathrm{eff, RKHS}}^{\mathrm{te}}}} \right) \right) }_{\mathcal{E}_{\mathrm{MMD}}},
\end{align}
where $n_{\mathrm{eff}}^{\mathrm{tr}}$ is the scalar effective sample size evaluated for $v(\mathbf{X})$ derived in Appendix~\ref{app:lemma1}. The quantities $n_{\mathrm{eff, RKHS}}^{\mathrm{tr}}$ and $n_{\mathrm{eff, RKHS}}^{\mathrm{te}}$ scale as $\mathcal{O}(n / \log n)$ and denote the \emph{RKHS effective sample sizes} arising from the geometric blocking penalty. The exact non-asymptotic constants for $\mathcal{E}_{\mathrm{MMD}}$ are detailed in Appendix~\ref{app:theorem1}. If the sample size is too small to satisfy Assumption~\ref{ass:burn_in}, the bound remains valid by substituting the deviation-dependent effective sample size $n_{\mathrm{eff}}^{\mathrm{tr}}(t)$ shown in Appendix~\ref{app:lemma1}.
\end{theorem}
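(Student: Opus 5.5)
The plan follows the three-term decomposition of \citet{yu2012analysis}, replacing each i.i.d.\ concentration step with its dependent counterpart and splitting the failure probability $\delta$ into three pieces of $\delta/3$. Write $Y_i^{\mathrm{tr}} = m(\mathbf{X}_i^{\mathrm{tr}}) + \epsilon_i$. Then
\begin{align*}
\hat{\mu}_{\mathrm{KMM}} - \mu_{\mathrm{te}}
&= \underbrace{\tfrac{1}{n_{\mathrm{tr}}}\textstyle\sum_i \hat{w}_i \epsilon_i}_{(\mathrm{I})}
+ \underbrace{\Big(\tfrac{1}{n_{\mathrm{tr}}}\textstyle\sum_i \omega^*(\mathbf{X}_i^{\mathrm{tr}}) m(\mathbf{X}_i^{\mathrm{tr}}) - \mu_{\mathrm{te}}\Big)}_{(\mathrm{II})} \\
&\quad + \underbrace{\tfrac{1}{n_{\mathrm{tr}}}\textstyle\sum_i \big(\hat{w}_i - \omega^*(\mathbf{X}_i^{\mathrm{tr}})\big) m(\mathbf{X}_i^{\mathrm{tr}})}_{(\mathrm{III})}.
\end{align*}
Each of these three terms will produce one of $\mathcal{E}_{\mathrm{noise}}$, $\mathcal{E}_{\mathrm{shift}}$ and $\mathcal{E}_{\mathrm{MMD}}$.

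\textbf{Term (I).} The weights $\hat{\mathbf{w}}$ depend only on the covariates. By Assumption~\ref{ass:independent_residuals}, conditional on $(\mathbf{X}^{\mathrm{tr}},\mathbf{X}^{\mathrm{te}})$ the summands $\hat{w}_i\epsilon_i$ are independent and mean zero. They lie in intervals of length at most $B$, since $\hat w_i\in[0,B]$ and $Y,m\in[0,1]$. Conditional Hoeffding with failure probability $\delta/3$ gives $|(\mathrm{I})|\le B\sqrt{\log(6/\delta)/(2n_{\mathrm{tr}})}$; integrating over the covariates keeps the bound unconditional. This is why the nominal $n_{\mathrm{tr}}$ appears in $\mathcal{E}_{\mathrm{noise}}$.

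\textbf{Term (II).} This is a centered average of $v(\mathbf{X}_i^{\mathrm{tr}})$, since $\mathbb{E}_{\mathrm{tr}}[v]=\int m\,dP_{\mathrm{te}}=\mu_{\mathrm{te}}$. The process $v(\mathbf{X}_i^{\mathrm{tr}})$ is bounded by $B$ and inherits stationarity and geometric $\beta$-mixing. I apply the Merlev\`ede--Peligrad--Rio Bernstein inequality in the form distilled in Appendix~\ref{app:lemma1}. There the tail $\exp\!\big(-K_1 n t^2/(\sigma_{v,\infty}^2 + B^2/n + t B K_2(\log n)^2)\big)$ is rewritten as a Hoeffding-type tail with $n$ replaced by $n_{\mathrm{eff}}^{\mathrm{tr}}(t)$. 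Assumption~\ref{ass:burn_in} is exactly what is needed at the target deviation $t=B\sqrt{\log(6/\delta)/(2n_{\mathrm{eff}}^{\mathrm{tr}})}$ for the variance term to dominate the linear $tBK_2(\log n)^2$ term. That makes $n_{\mathrm{eff}}^{\mathrm{tr}}(t)$ deviation-independent, giving $\mathcal{E}_{\mathrm{shift}}$ with probability $1-\delta/3$. Without the assumption, one keeps $n_{\mathrm{eff}}^{\mathrm{tr}}(t)$, which yields the fallback clause of the theorem.

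\textbf{Term (III).} Since $m\in\mathcal{H}$, the reproducing property gives $(\mathrm{III})=\langle m,\hat{\mu}_\Phi^{\mathrm{tr}}(\hat{\mathbf w})-\hat{\mu}_\Phi^{\mathrm{tr}}(\boldsymbol{\omega}^*)\rangle$, where $\boldsymbol{\omega}^*=(\omega^*(\mathbf{X}_i^{\mathrm{tr}}))_i$. Cauchy--Schwarz bounds this by $\|m\|_{\mathcal H}$ times the norm of the difference. The vector $\boldsymbol{\omega}^*$ is feasible in $[0,B]^{n_{\mathrm{tr}}}$, so optimality of $\hat{\mathbf w}$ and the triangle inequality give
\[
\|\hat{\mu}_\Phi^{\mathrm{tr}}(\hat{\mathbf w})-\hat{\mu}_\Phi^{\mathrm{tr}}(\boldsymbol\omega^*)\|\le 2\|\hat{\mu}_\Phi^{\mathrm{tr}}(\boldsymbol\omega^*)-\mu_\Phi^{\mathrm{te}}\|+2\|\hat{\mu}_\Phi^{\mathrm{te}}-\mu_\Phi^{\mathrm{te}}\|,
\]
using $\mathbb{E}_{\mathrm{tr}}[\omega^*\Phi]=\mu_\Phi^{\mathrm{te}}$. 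Each right-hand term is a deviation of a stationary, geometrically $\beta$-mixing, Hilbert-valued average. The first has summands bounded by $BC$, the second by $C$.

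For these I use the Hilbert-space lemma of Appendix~\ref{app:lemma2}. The steps are: cut the sequence into alternating blocks of length $q\asymp\log n$; use Berbee's coupling to replace the odd (resp.\ even) blocks by independent copies, at a total-variation cost $(n/q)\beta(q)$, which is $\le\delta/12$ for $q$ a large multiple of $\log n$ by geometric decay; then apply Pinelis' inequality to the $\approx n/(2q)$ independent block sums. This yields deviations of order $BC/\sqrt{n_{\mathrm{eff,RKHS}}^{\mathrm{tr}}}$ and $C/\sqrt{n_{\mathrm{eff,RKHS}}^{\mathrm{te}}}$ with $n_{\mathrm{eff,RKHS}}\asymp n/\log n$, each with probability $1-\delta/6$. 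A union bound over the three terms gives \eqref{eq:main-bound}.

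\textbf{Main obstacle.} I expect the Hilbert-space blocking step to be hardest. One must choose $q$ so that the coupling error fits inside the allotted $\delta$ budget while the dependence on $\log(1/\delta)$ stays in the $\mathcal{O}(\cdot)$ constant. One must also handle the remainder block and the odd/even split without losing more than constants. I will first try $q=\lceil c\log(n/\delta)\rceil$ and absorb the $\log(1/\delta)$ into the constants of Appendix~\ref{app:theorem1}.
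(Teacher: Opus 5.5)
Your proposal is correct and follows the paper's proof essentially step for step. It uses the same three-term decomposition, conditional Hoeffding at the nominal $n_{\mathrm{tr}}$ for the residual term, and Lemma~\ref{lem:mixing-to-effective} under the burn-in condition for the true-ratio term. For the embedding term it uses the same reproducing-property and optimality reduction (via feasibility of $\boldsymbol{\omega}^*$), then Berbee blocking and Pinelis' inequality for the two centered embedding deviations, with the same $\delta/3+\delta/3+\delta/6+\delta/6$ budget. The only cosmetic difference is that you fold the $\log(1/\delta)$ from the block length into the $\mathcal{O}(\cdot)$ constant rather than into $n_{\mathrm{eff,RKHS}}=n/(\log n+\log(6/\delta))$.

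One caution applies equally to the paper. Written out, $t^2 = 2(\sigma_{v,\infty}^2+B^2/n_{\mathrm{tr}})\log(6/\delta)/(K_1 n_{\mathrm{tr}})$, so Assumption~\ref{ass:burn_in} as stated only gives $t \le 2\sqrt{1+B^2/(n_{\mathrm{tr}}\sigma_{v,\infty}^2)}\,\sigma_{v,\infty}^2/(BK_2(\log n_{\mathrm{tr}})^2)$. Its constant must therefore be enlarged by the factor $4(1+B^2/(n_{\mathrm{tr}}\sigma_{v,\infty}^2))$ before it is literally ``exactly what is needed'' for the moderate-deviation regime.
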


\paragraph{Interpretation and The Bottleneck of Dependence.}
Theorem~\ref{thm:kmm-dependent} isolates the three fundamental challenges of risk estimation under covariate shift. This decomposition reveals how these error terms interact under strong data dependence ($n_{\mathrm{eff}}^{\mathrm{tr}} \ll n_{\mathrm{tr}}$):
\begin{enumerate}
    \item \textbf{Independent Noise ($\mathcal{E}_{\mathrm{noise}}$):} From Assumption~\ref{ass:independent_residuals}, the conditionally independent idiosyncratic label noise concentrates at nominal rate $\mathcal{O}(1/\sqrt{n_{\mathrm{tr}}})$. Under strong spatial or temporal dependence, this noise averages out rapidly and becomes statistically negligible compared to the other terms.
    \item \textbf{Dependent Sampling Bottleneck ($\mathcal{E}_{\mathrm{shift}}$):} The error of evaluating the true density ratio over correlated covariates is penalized. Because the sampling process repeatedly observes similar regions of the covariate space, this term scales with the deflated effective sample size $\mathcal{O}(1/\sqrt{n_{\mathrm{eff}}^{\mathrm{tr}}})$. When $n_{\mathrm{eff}}^{\mathrm{tr}} \ll n_{\mathrm{tr}}$, $\mathcal{E}_{\mathrm{shift}}$ completely dominates $\mathcal{E}_{\mathrm{noise}}$, making sequence persistence the primary bottleneck of scalar generalization.
    \item \textbf{RKHS Matching ($\mathcal{E}_{\mathrm{MMD}}$) and Overlap Penalty ($B$):} The feature-matching error decays at the blocked rate governed by the exact RKHS effective sample sizes. While this worst-case blocking penalty appears mathematically less severe than the scalar long-run variance deflation, both scalar terms ($\mathcal{E}_{\mathrm{noise}}$ and $\mathcal{E}_{\mathrm{shift}}$) are multiplicatively scaled by $B$. Poor source-target overlap (manifested as large $B$) quadratically inflates the required effective sample size.
\end{enumerate}

\paragraph{Application to Risk Estimation}
While Theorem~\ref{thm:kmm-dependent} is formulated for a general bounded scalar response $Y \in [0, 1]$, its primary operational application is target risk evaluation. Specifically, let $g: \mathcal{X} \to \mathcal{Y}_{\mathrm{pred}}$ be a fixed hypothesis, and let $\ell: \mathcal{Y}_{\mathrm{pred}} \times \mathcal{Y} \to [0, 1]$ be a bounded loss function. By setting the response to the sample loss $Y_i^{\mathrm{tr}} = \ell(g(\mathbf{X}_i^{\mathrm{tr}}), Y_i^{\mathrm{tr},\mathrm{label}})$, the true target risk is given by $R_{\mathrm{te}}(g) := \mathbb{E}_{P_{\mathrm{te}}}[\ell(g(\mathbf{X}), Y_{\mathrm{label}})]$, and its reweighted KMM estimate is $\hat{R}_{\mathrm{KMM}}(g) := \frac{1}{n_{\mathrm{tr}}} \sum_{i=1}^{n_{\mathrm{tr}}} \hat{w}_i Y_i^{\mathrm{tr}}$. Under Assumption~\ref{ass:covariate_shift}, the invariant regression function represents the conditional risk surface $m_\ell(\mathbf{x}) := \mathbb{E}[\ell(g(\mathbf{X}), Y_{\mathrm{label}}) \mid \mathbf{X}=\mathbf{x}]$. Assuming $m_\ell \in \mathcal{H}$, Theorem~\ref{thm:kmm-dependent} directly bounds the risk generalization error $\left| \hat{R}_{\mathrm{KMM}}(g) - R_{\mathrm{te}}(g) \right|$ by substituting $\|m\|_{\mathcal{H}} \gets \|m_\ell\|_{\mathcal{H}}$. For uniform convergence guarantees over entire hypothesis classes via Block Rademacher complexity, we refer to Appendix~\ref{app:rademacher}.

\paragraph{From Theory to Practice: A Generative Surrogate.}
Theorem~\ref{thm:kmm-dependent} establishes that under spatial or temporal dependence, KMM generalisation is governed by sequence persistence through $n_{\mathrm{eff}}^{\mathrm{tr}}$. However, as a worst-case non-asymptotic bound, it presents practical limitations: it depends on unobservable mixing constants ($K_1, K_2$), and its reliance on the RKHS norm $\|m\|_{\mathcal{H}}$ obscures how the smoothness of the regression function governs effective sample size deflation. 

To make these abstract mixing rates practically computable, we introduce a Gaussian Process (GP) covariance surrogate that yields a computable, closed-form effective sample size $n_{\mathrm{eff}}^{\mathrm{GP}}(h)$.

\subsection{Separating Overlap, Dependence, and Smoothness via GP Modeling}
\label{sec:density-smoothness}

Following core insights of Theorem~\ref{thm:kmm-dependent}, we want to analyze how the smoothness of the conditional loss surface mathematically dictates the effective sample size. By modeling the loss as a GP, we derive an closed-form effective sample size $n_{\mathrm{eff}}^{\mathrm{GP}}(h)$ that calibrates practical confidence intervals.

\paragraph{Gaussian Surrogate and Sequence Autocorrelation.}
\label{sec:gp_realization}
In Theorem~\ref{thm:kmm-dependent}, sequence persistence depends on the composite function $v(\mathbf{X}) = \omega^*(\mathbf{X})m_\ell(\mathbf{X})$. In practice, we decouple the global density ratio $\omega^*$ from the sequence autocorrelation of the loss surface. Specifically, the scale of $\omega^*$ is directly captured by the empirical squared weights $\hat{w}_i^2$ in the numerator of the variance estimator. Meanwhile, the autocorrelation across lags is driven by the smoothness of the conditional loss $m_\ell(\mathbf{x}) = \mathbb{E}[\ell(g(\mathbf{X}), Y_{\mathrm{label}}) \mid \mathbf{X}=\mathbf{x}]$

While $\|m_\ell\|_{\mathcal{H}}$ bounds the deterministic worst-case complexity in Theorem~\ref{thm:kmm-dependent}, we model the latent variation of this conditional risk surface via a zero-mean stationary GP surrogate $u \sim \mathcal{GP}(0, k_h)$ to extract a computable, data-driven length-scale $h$: $m_\ell(\mathbf{x}) = u(\mathbf{x})$. To evaluate sequence covariances analytically, we model local feature variations as stationary Gaussian $\mathbf{X}_t \sim \mathcal{N}(\bm{\mu}, \sigma_x^2 \mathbf{I}_d)$ with formal truncation to the compact domain $\mathcal{X}$ to preserve Assumption~\ref{ass:bounded_ratio}. As detailed in Appendix~\ref{app:gp_derivations}) we evaluated it with spatial or temporal correlation $r_k = \operatorname{Corr}((\mathbf{X}_s)_l, (\mathbf{X}_{s+k})_l)$.

For the squared-exponential (SE) kernel $k_h(\mathbf{x}, \mathbf{x}') = \sigma_u^2 \exp(-\|\mathbf{x}-\mathbf{x}'\|^2 / 2h^2)$, we define the ratio of input spread to function smoothness as $a_h = 2\sigma_x^2 / h^2$. Marginalizing over the covariate distribution yields the exact prior-averaged sequence autocorrelation in closed form:
\begin{equation}
\label{eq:gp-centered-rho-vector-main}
\bar{\rho}_u(k) = \frac{ \left(1 + a_h\{1 - r_k\}\right)^{-d/2} - \left(1 + a_h\right)^{-d/2} }{ 1 - \left(1 + a_h\right)^{-d/2} }.
\end{equation}
Full derivations, along with the corresponding expressions for rough Mat\'ern kernels, are provided in Appendix~\ref{app:gp_derivations}.

The resulting GP-induced finite-sample effective sample size is thus:
\begin{equation}
\label{eq:gp-improved-neff-finite}
n_{\mathrm{eff}}^{\mathrm{GP}}(h)= \frac{n_{\mathrm{tr}}}{ 1 + 2\sum_{k=1}^{n_{\mathrm{tr}}-1} \left(1 - \tfrac{k}{n_{\mathrm{tr}}}\right) \bar{\rho}_u(k) }.
\end{equation}

\paragraph{The Smoothness-Dependence Tradeoff.}
Equation~\eqref{eq:gp-improved-neff-finite} exposes a bandwidth tradeoff that is absent in standard i.i.d. estimations. For smooth functions ($h \gg \sigma_x$, so that $a_h \ll 1$), Taylor expansion of \eqref{eq:gp-centered-rho-vector-main} shows that the sequence autocorrelation  inherits the persistence of the raw covariates: $\bar{\rho}_u(k) = r_k + \mathcal{O}(a_h)$, which severely deflates $n_{\mathrm{eff}}^{\mathrm{GP}}$. Conversely, rough functions ($a_h \gg 1$) decorrelate rapidly across lags.

Importantly, this does not imply that rough functions are statistically preferable. It highlights a tension within Theorem~\ref{thm:kmm-dependent}: increasing smoothness reduces the hypothesis complexity $\|m_\ell\|_{\mathcal{H}}$, tightening the RKHS feature-matching error $\mathcal{E}_{\mathrm{MMD}}$. But inflates sequence persistence, penalizing the effective sample size and amplifying the dependent sampling error ($\mathcal{E}_{\mathrm{shift}}$).

\paragraph{Practical Confidence Calibration: The GP-Calibrated Variance Radius.}
\label{sec:refined_risk_bound}
By substituting the abstract theoretical scale $n_{\mathrm{eff}}^{\mathrm{tr}}$ in Theorem~\ref{thm:kmm-dependent} with our computable GP-realization $n_{\mathrm{eff}}^{\mathrm{GP}}(h_\ell)$, we obtain a theoretically grounded, practically evaluable measure of sampling variance. The loss length-scale $h_\ell$ is directly estimated from the source data via empirical variogram fitting without requiring access to target labels. 

Guided by this substitution, we construct an asymptotic Wald-type confidence interval. Because the target sample size $n_{\mathrm{te}}$ governs the feature-matching discrepancy $\mathcal{E}_{\mathrm{MMD}}$—which acts as a lower-order bias under stable optimization—the dominant sampling variability is isolated entirely within the reweighted source labels. Consequently, the practical confidence radius evaluates the empirical loss variance scaled by the source persistence:
\begin{equation}
\label{eq:gp-plugin-radius}
\hat{r}_{\mathrm{eff}} = z_{1 - \alpha/2} \sqrt{\frac{\widehat{\operatorname{Var}}\left[\hat{w}_i(Y_i^{\mathrm{tr}} - \hat{R}_{\mathrm{KMM}}(g))\right]}{n_{\mathrm{eff}}^{\mathrm{GP}}(h_\ell)}},
\end{equation}
where $z_{1 - \alpha/2} = \Phi^{-1}(1 - \alpha/2)$ denotes the standard normal quantile ($z_{0.975} \approx 1.96$ for the nominal $95\%$ level).

This interval treats the estimated weights $\hat{\mathbf{w}}$ as fixed post-optimization, a standard asymptotic approximation in importance weighting. 
The GP-based effective sample size adjusts the variance estimate for dependence propagated through the conditional loss surface. In settings with additional residual dependence beyond that represented by the covariates, this correction may partially absorb the resulting variance inflation, but it does not remove the conditional-independence requirement of Assumption~\ref{ass:independent_residuals} from the finite-sample guarantee. We evaluate the resulting practical interval empirically in
Section~\ref{sec:experiments}.

\section{Experiments}
\label{sec:experiments}

We design our experiments to isolate and evaluate the three fundamental effects highlighted by our theory: covariate shift through distribution overlap, data dependence through effective sample size, and function smoothness through GP-induced loss autocorrelation. We validate our approach on controlled synthetic data and benchmark it against other corrections.

\subsection{Artificial Data}
\label{sec:artificial_data}

\paragraph{Experimental Setup and Objectives}
We evaluate our dependent KMM framework on synthetic environments where ground-truth target expectations are analytically known. Covariates $\mathbf{X}_t$ are generated as continuous dependent sequences with correlation length $\ell_x$ under covariate shift ($\bm{\mu}_{\mathrm{tr}} \neq \bm{\mu}_{\mathrm{te}}$). We evaluate both a matched Gaussian Process response surface $f_0 \sim \mathcal{GP}(0, k_h)$ and a deterministic misspecified harmonic surface. We benchmark our GP-calibrated confidence radius against unweighted estimation, classical i.i.d. KMM (\textit{Nominal $n$}), raw-covariate correction (\textit{Input $n_{\mathrm{eff}}$}), Block-Bootstrap, and theoretical Oracle baselines. Complete generative specifications, solver configurations, and extensive ablation grids are detailed in Appendices~\ref{app:exp_setup}--\ref{app:param_ablations}.

\paragraph{Evaluation Metrics}
Since KMM-based variance estimators share identical weights, yielding identical point estimates, detailed in Appendix~\ref{app:full_metrics}, we focus exclusively on confidence interval quantification. We evaluate nominal $0.95$ \textit{Coverage}, considered valid if $\ge 0.925$ over $N=300$ trials, interval \textit{Radius} (half-width), and the Winkler Score (WS; \citep{winkler1972decision}), an additional scoring rule penalizing both excessive width and undercoverage.

\paragraph{Validating the Closed-Form Effective Sample Size}
\label{sec:exp_validation}

We first verify whether the closed-form $n_{\mathrm{eff}}^{\mathrm{GP}}(h)$ from \eqref{eq:gp-improved-neff-finite} accurately predicts empirical sample-mean variance. As shown in Figure~\ref{fig:neff_validation}, the variance-based empirical $n_{\mathrm{eff}}$ tracks our analytical prediction along the identity line across matched GP realizations. Because the target loss typically has a different characteristic length-scale than the raw response $Y$, the formula automatically yields estimand-specific effective sample sizes without refitting KMM weights. Validation on misspecified deterministic surfaces is provided in Appendix~\ref{app:target_det_validation}.

\begin{figure}[t]
\centering
\includegraphics[width=\columnwidth]{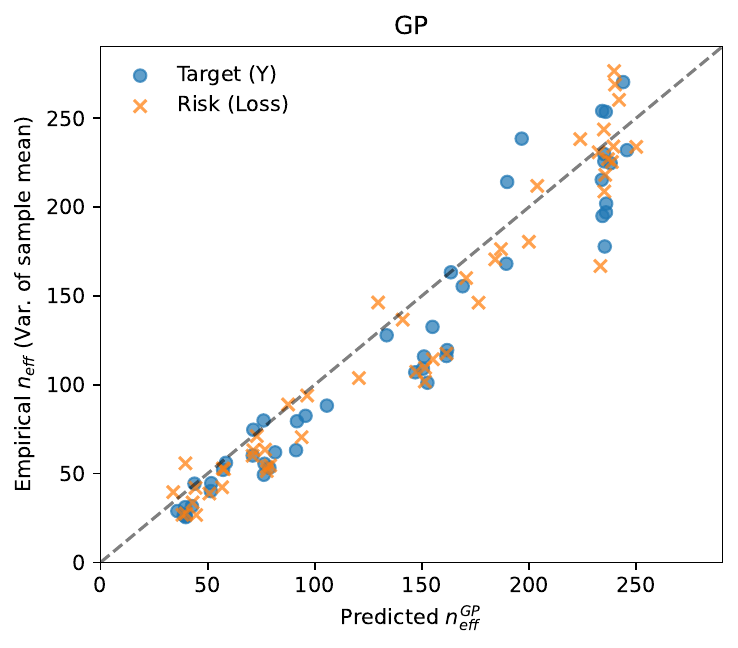}
\caption{\textbf{Validation of the GP-induced effective sample size.} The variance-based empirical $n_{\mathrm{eff}}$ aligns with our closed-form prediction (diagonal line) for both Target $Y$ (point estimation) and Risk (loss evaluation) across matched GP realizations.}
\label{fig:neff_validation}
\end{figure}

\begin{table}[!ht]
\centering
\caption{
\textbf{Confidence Interval Calibration under Dependent Sampling.}
Coverage and interval radius for target-mean and target-risk estimation.
Asterisks ($^*$) indicate significant empirical undercoverage below the
nominal 0.95 level ($<0.925$). Bold denotes the best valid practical estimator.
}
\label{tab:main-benchmark}
\vspace{1.5mm}
\resizebox{\columnwidth}{!}{%
\begin{tabular}{@{}lcccc@{}}
\toprule
& \multicolumn{2}{c}{\textbf{Target Mean ($Y$)}} & \multicolumn{2}{c}{\textbf{Target Risk ($L$)}} \\
\cmidrule(lr){2-3} \cmidrule(lr){4-5}
\textbf{Method} & \textbf{Coverage} & \textbf{Radius} & \textbf{Coverage} & \textbf{Radius} \\
\midrule
Unweighted & 0.017$^*$ & 0.0362 & 0.080$^*$ & 0.0019 \\
Nominal $n$ & 0.917$^*$ & 0.0626 & 0.853$^*$ & 0.0055 \\
Block-Bootstrap & 1.000\phantom{$^*$} & 0.4538 & 0.967\phantom{$^*$} & 0.0133 \\
Input $n_{\mathrm{eff}}$ & 1.000\phantom{$^*$} & 0.1696 & 0.987\phantom{$^*$} & 0.0150 \\
\textbf{GP $\bm{n_{\mathrm{eff}}}$ [Ours]} & \textbf{1.000}\phantom{$^*$} & \textbf{0.1380} & \textbf{0.970}\phantom{$^*$} & \textbf{0.0120} \\
\addlinespace[3pt]
\textit{Strict Bound (Theorem~\ref{thm:kmm-dependent})} & \textit{1.000}\phantom{$^*$} & \textit{5.2189} & \textit{1.000}\phantom{$^*$} & \textit{5.1220} \\
\textit{Oracle IW} & \textit{0.477}$^*$ & \textit{0.0275} & \textit{0.627}$^*$ & \textit{0.0033} \\
\textit{Oracle i.i.d.} & \textit{0.903}$^*$ & \textit{0.0257} & \textit{0.917}$^*$ & \textit{0.0024} \\
\bottomrule
\end{tabular}%
}
\end{table}

\paragraph{The Smoothness-Dependence Trade-off}
\label{sec:exp_benchmark}

Table~\ref{tab:main-benchmark} benchmarks confidence interval calibration. Applying KMM with nominal i.i.d.\ variance leads to severe overconfidence, failing on both Target $Y$ ($0.917^*$) and Target Risk ($0.853^*$). While Block-Bootstrap achieves valid coverage, it yields an excessively wide radius on Target $Y$ ($0.4538$ vs. $0.1380$) and requires expensive resampling.

These results empirically confirm the theoretical trade-off established in Section~\ref{sec:density-smoothness}. While naive \textit{Input $n_{\mathrm{eff}}$} safely restores coverage, it conservatively assumes worst-case sequence persistence ($\bar{\rho}_u(k) \approx r_k$), yielding unnecessarily wide intervals. In contrast, any response surface with finite length-scale decorrelates faster than raw input coordinates ($\bar{\rho}_u(k) \le r_k$). By analytically incorporating functional smoothness, our GP~$n_{\mathrm{eff}}$ shrinks the confidence radius by $18.6\%$ for Target $Y$ and $20.0\%$ for Target Risk relative to \textit{Input $n_{\mathrm{eff}}$}, providing the sharpest confidence bounds that maintain valid coverage and minimal WS (Table~\ref{tab:main-benchmark-full}).

Finally, while the non-asymptotic bound (Theorem~\ref{thm:kmm-dependent}) strictly guarantees coverage, its worst-case geometric penalty yields impractically wide intervals (Radius $> 5.1$), highlighting the necessity of our GP surrogate for empirical practice.

\paragraph{Sample-Size Scaling of Error Bounds}
\label{sec:exp_scaling}

Theorem~\ref{thm:kmm-dependent} dictates that valid KMM intervals must scale with $n_{\mathrm{eff}}$ rather than nominal $n$. Table~\ref{tab:n_scaling_main} tracks risk interval calibration across $n \in \{48, 64, 128, 256\}$ (full sweep across $n \in \{32, \dots, 256\}$ in Table~\ref{tab:n_scaling_risk}). 

\begin{table}[t]
\centering
\caption{\textbf{Sample-Size Scaling of Target Risk Intervals.} Nominal intervals systematically under-cover across all sample sizes. Asterisks ($^*$) indicate significant undercoverage below nominal 0.95 ($< 0.925$). Bold denotes the best valid practical estimator.}
\label{tab:n_scaling_main}
\vspace{1.5mm}
\resizebox{\columnwidth}{!}{%
\begin{tabular}{@{}llccc@{}}
\toprule
$\bm{n}$ & \textbf{Method} & \textbf{Coverage} & \textbf{Radius} & \textbf{WS} $\downarrow$ \\
\midrule
\multirow{3}{*}{\textbf{48}} 
& Nominal $n$ & 0.773$^*$ & 0.0048 & 0.0275 \\
& Input $n_{\mathrm{eff}}$ & 0.910$^*$ & 0.0128 & 0.0290 \\
& \textbf{GP $\bm{n_{\mathrm{eff}}}$ [Ours]} & \textbf{0.943}\phantom{$^*$} & \textbf{0.0101} & \textbf{0.0260} \\
\midrule
\multirow{3}{*}{\textbf{64}} 
& Nominal $n$ & 0.773$^*$ & 0.0046 & 0.0218 \\
& Input $n_{\mathrm{eff}}$ & 0.980\phantom{$^*$} & 0.0124 & 0.0258 \\
& \textbf{GP $\bm{n_{\mathrm{eff}}}$ [Ours]} & \textbf{0.947}\phantom{$^*$} & \textbf{0.0098} & \textbf{0.0217} \\
\midrule
\multirow{3}{*}{\textbf{128}} 
& Nominal $n$ & 0.823$^*$ & 0.0038 & 0.0155 \\
& Input $n_{\mathrm{eff}}$ & 0.983\phantom{$^*$} & 0.0104 & 0.0220 \\
& \textbf{GP $\bm{n_{\mathrm{eff}}}$ [Ours]} & \textbf{0.970}\phantom{$^*$} & \textbf{0.0082} & \textbf{0.0181} \\
\midrule
\multirow{3}{*}{\textbf{256}} 
& Nominal $n$ & 0.913$^*$ & 0.0031 & 0.0076 \\
& Input $n_{\mathrm{eff}}$ & 1.000\phantom{$^*$} & 0.0085 & 0.0170 \\
& \textbf{GP $\bm{n_{\mathrm{eff}}}$ [Ours]} & \textbf{0.997}\phantom{$^*$} & \textbf{0.0067} & \textbf{0.0134} \\
\bottomrule
\end{tabular}%
}
\end{table}

Nominal $n$ fails across all evaluated sample sizes reaching only $0.913^*$ coverage even at $n=256$. It confirms that sample size alone cannot cure dependence-induced underconfidence. Notably, in small-sample regimes such as $n=48$, Input $n_{\mathrm{eff}}$ also exhibits undercoverage ($0.910^*$), whereas GP $n_{\mathrm{eff}}$ reliably maintains nominal coverage ($0.943$). Across all sample sizes, GP~$n_{\mathrm{eff}}$ consistently achieves $\approx 21\%$ narrower intervals than Input~$n_{\mathrm{eff}}$, confirming the scalability of the surrogate across data regimes.

\paragraph{Robustness to Misspecification}
\label{sec:exp_robustness}
We evaluate the framework when modeling assumptions are violated. As detailed in Appendix~\ref{app:additional_experiments}, the calibration ratio is robust across 2D grids varying dependence length $\ell_x$ and loss smoothness $h$ (Appendix~\ref{app:heatmaps}), maintaining valid coverage across sweeps over sample size $n$, $\ell_x$, $h$, and clipping bound $B$ (Appendix~\ref{app:param_ablations}). In addition, the method preserves valid coverage under non-differentiable Mat\'ern-1/2 paths (Appendix~\ref{app:matern}) and non-stationary correlation structures (Appendix~\ref{app:nonstat}).

\subsection{Real Data}
\label{sec:real_data}

\paragraph{Controlled Overlap on Real Covariates}
Before evaluating real-world monitoring data, we isolate the role of source--target overlap using a semi-synthetic stress test on California Housing covariates (detailed with Figure~\ref{fig:california_overlap} and Table~\ref{tab:california_overlap} in Appendix~\ref{app:california_overlap}). By constructing source and target distributions via exponential tilting, we demonstrate a fundamental overlap--stability trade-off: KMM successfully controls target-mean error under moderate shift, but its bounded weights cannot fully reproduce the density ratio under severe overlap loss. Bounding weights limits variance amplification but leaves residual bias when adequate overlap requires more extreme weights.
Appendix~\ref{app:california_overlap} (Figure 6) further illustrates the bias--variance trade-off induced by the clipping bound $B$.

\paragraph{Three-Domain Real-Data Benchmark}
\label{sec:real_benchmark}
We next evaluate KMM in three real-data monitoring domains: NOAA weather stations~\cite{menne2012overview}, PM$_{2.5}$ monitoring stations~\cite{beijing_multi-site_air_quality_501}, and EPA ozone monitoring stations~\cite{epa_aqs}. Unlike the matched synthetic experiments, these observational datasets do not provide known population expectations or guarantee the strict covariate-shift assumption. We therefore treat them as empirical stress tests rather than finite-sample theorem verification. 

NOAA and EPA Ozone evaluate spatial transfer across disjoint monitoring stations, while PM$_{2.5}$ evaluates temporal transfer (training on 2013--2014, evaluating on 2015--2016). In all experiments, a random-forest predictor is trained separately and frozen before evaluation. Target outcomes are used only for final scoring. While KMM improves point estimation in several domains (full point estimation comparisons, including unweighted and logistic importance weighting baselines, are reported in Table~\ref{tab:real_point}, Appendix~\ref{app:real_point}), our primary focus remains on evaluating confidence interval calibration under data dependence.

\paragraph{GP-Based Dependence Correction}
We evaluate the proposed GP-derived effective sample size directly on these real dependent sequences. For each task, the KMM point estimate is held fixed, and only the variance calculation is modified: the classical interval uses the nominal source size (\textit{Nominal $n$}), whereas our interval replaces $n$ with $n_{\mathrm{eff}}^{\mathrm{GP}}$ estimated from the source data. To evaluate finite-sample coverage rigorously, we utilize a rolling-window protocol over 250 independent trials, extracting contiguous sequences of length $n$ from the source stations. Target outcomes are revealed only after all intervals have been fixed. Full details of the experimental protocol, data splits, and grid-search hyperparameter selection are provided in Appendix~\ref{app:real_intervals}.

\begin{table}[!ht]
\centering
\caption{\textbf{Confidence Interval Calibration under Dependent Sampling on Real Datasets.} Empirical coverage and interval radius for target-mean and target-risk estimation. Asterisks ($^*$) indicate significant empirical undercoverage below the relaxed nominal $0.92$ level. Bold denotes the best valid practical estimator (tightest radius among methods with valid coverage).}
\label{tab:real_gp_coverage}
\vspace{1.5mm}
\resizebox{\columnwidth}{!}{%
\begin{tabular}{@{}lcccc@{}}
\toprule
& \multicolumn{2}{c}{\textbf{Target Mean ($Y$)}} & \multicolumn{2}{c}{\textbf{Target Risk ($L$)}} \\
\cmidrule(lr){2-3} \cmidrule(lr){4-5}
\textbf{Method} & \textbf{Coverage} & \textbf{Radius} & \textbf{Coverage} & \textbf{Radius} \\
\midrule
\multicolumn{5}{c}{\textbf{Dataset: PM$_{2.5}$} (Temporal Shift)} \\
\midrule
Unweighted & 0.604$^*$ & 0.0189 & 0.852$^*$ & 0.0016 \\
Nominal $n$ & 0.848$^*$ & 0.0672 & 0.900$^*$ & 0.0056 \\
Block-Bootstrap & 0.884$^*$ & 0.0832 & 0.916$^*$ & 0.0068 \\
Input $n_{\mathrm{eff}}$ & \textbf{0.928}\phantom{$^*$} & 0.1359 & 0.952\phantom{$^*$} & 0.0104 \\
\textbf{GP $\bm{n_{\mathrm{eff}}}$ [Ours]} & 0.924\phantom{$^*$} & \textbf{0.1231} & 0.\textbf{944}\phantom{$^*$} & \textbf{0.0098} \\

\midrule
\multicolumn{5}{c}{\textbf{Dataset: EPA Ozone} (Spatial Shift)} \\
\midrule
Unweighted & 0.144$^*$ & 0.0061 & 0.428$^*$ & 0.0004 \\
Nominal $n$ & 0.580$^*$ & 0.0243 & 0.744$^*$ & 0.0014 \\
Block-Bootstrap & 0.640$^*$ & 0.0285 & 0.772$^*$ & 0.0016 \\
Input $n_{\mathrm{eff}}$ & 0.980\phantom{$^*$} & 0.1026 & \textbf{0.976}\phantom{$^*$} & 0.0056 \\
\textbf{GP $\bm{n_{\mathrm{eff}}}$ [Ours]} & \textbf{0.968}\phantom{$^*$} & \textbf{0.0757} & 0.884$^*$ & \textbf{0.0031} \\

\midrule
\multicolumn{5}{c}{\textbf{Dataset: NOAA} (Extreme Spatial Shift Stress-Test)} \\
\midrule
Unweighted & 0.088$^*$ & 0.0128 & 0.152$^*$ & 0.0005 \\
Nominal $n$ & 0.772$^*$ & 0.0589 & 0.504$^*$ & 0.0020 \\
Block-Bootstrap & 0.776$^*$ & 0.0810 & 0.568$^*$ & 0.0021 \\
Input $n_{\mathrm{eff}}$ & \textbf{0.944}\phantom{$^*$} & 0.2179 & \textbf{0.884}$^*$ & 0.0071 \\
\textbf{GP $\bm{n_{\mathrm{eff}}}$ [Ours]} & 0.908$^*$ & \textbf{0.1794} & 0.712$^*$ & \textbf{0.0033} \\
\bottomrule
\end{tabular}%
}
\end{table}


Across all three domains, replacing the nominal sample size with the GP-derived effective sample size increases empirical coverage relative to the nominal i.i.d.\ KMM interval (Table~\ref{tab:real_gp_coverage}). For target-mean estimation, GP $n_{\mathrm{eff}}$ reaches the prespecified $0.92$ reference level for PM$_{2.5}$ (0.924) and EPA Ozone (0.968), while
remaining below it for NOAA (0.908). In all three cases, the GP intervals are narrower than those obtained using Input $n_{\mathrm{eff}}$.

For target-risk estimation, the results are more mixed. GP $n_{\mathrm{eff}}$ reaches the reference level for PM$_{2.5}$ (0.944), but
coverage remains below the reference level for EPA Ozone (0.884) and NOAA (0.712). Nevertheless, it improves substantially over the corresponding nominal intervals (0.744 and 0.504, respectively). For EPA Ozone, this
increase in coverage is obtained with a smaller radius than Input
$n_{\mathrm{eff}}$ (0.0031 versus 0.0056); the corresponding Winkler scores
are 0.0081 and 0.0114.

These results illustrate the coverage--width trade-off between the two dependence corrections. Input $n_{\mathrm{eff}}$ is generally more conservative, whereas GP $n_{\mathrm{eff}}$ produces narrower intervals
but does not uniformly attain the reference coverage level. The discrepancy is most pronounced for NOAA target-risk estimation.

Because these are observational transfer tasks, departures from the
covariate-shift and residual-independence assumptions may contribute to the
remaining errors, together with limited source--target overlap and finite
target references. We therefore interpret the experiment as an empirical
stress test of the dependence correction rather than as verification of
finite-sample coverage guarantees. In particular, dependence correction
changes the confidence interval assigned to correlated sampling but does not correct
systematic transfer bias.

\section{Conclusion and Limitations}
\label{sec:conclusion}

We develop a finite-sample analysis of Kernel Mean Matching under dependent sampling. The resulting risk bounds replace nominal sample sizes by effective sample sizes and separate three sources of difficulty: dependence, source--target overlap, and residual RKHS mean-matching error. This extends the usual i.i.d. analysis of KMM to settings in which source and target observations are spatially or temporally dependent.

A key implication is that the effective sample size relevant for risk estimation is not determined by the input sequence alone. It depends on the function of the data being averaged. We make this dependence explicit through a GP surrogate that propagates input correlation through functional smoothness, yielding a closed-form, estimand-dependent approximation to $n_{\mathrm{eff}}$. This distinction also clarifies the role of overlap: dependence correction changes the confidence associated with correlated sampling, but cannot compensate for insufficient source--target overlap.

\textbf{Limitations and Future Work.}
Our main analysis assumes stationary dependence and focuses on risk estimation for a fixed predictor. Appendix~\ref{app:rademacher} provides a uniform extension over hypothesis classes using Block Rademacher Complexity, while
a fuller treatment of model selection and training under dependence remains open. The practical GP approximation additionally relies on estimating the input dependence structure and the functional length scale, and therefore inherits errors from these estimates and from misspecification of the GP surrogate. Extending the framework to nonstationary and more general dependence structures, and developing data-driven procedures for selecting the KMM kernel bandwidth and clipping level $B$, are natural directions for future work.


\subsection*{AI use statement}
In this work, we used generative AI tools for editing and formatting the manuscript to ensure clarity and adherence to the conference guidelines. We have not used generative AI tools for the derivation of mathematical proofs, generation of experimental results, or code writing. We have reviewed all AI-assisted text modifications line by line. We take responsibility for the final content of this work, including text, claims, and artifacts.

\bibliography{references}

\section*{Checklist}

\begin{enumerate}

  \item For all models and algorithms presented, check if you include:
  \begin{enumerate}
    \item A clear description of the mathematical setting, assumptions, algorithm, and/or model. [Yes]
    \item An analysis of the properties and complexity (time, space, sample size) of any algorithm. [Yes]
    \item (Optional) Anonymized source code, with specification of all dependencies, including external libraries. [Yes] Available at \href{https://anonymous.4open.science/r/Reliable-Risk-Estimation-with-Kernel-Mean-Matching-under-Dependent-Sampling-2CBE/README.md}{Anonymous GitHub}.
  \end{enumerate}

  \item For any theoretical claim, check if you include:
  \begin{enumerate}
    \item Statements of the full set of assumptions of all theoretical results. [Yes]
    \item Complete proofs of all theoretical results. [Yes]
    \item Clear explanations of any assumptions. [Yes]
  \end{enumerate}

  \item For all figures and tables that present empirical results, check if you include:
  \begin{enumerate}
    \item The code, data, and instructions needed to reproduce the main experimental results (either in the supplemental material or as a URL). [Yes]
    \item All the training details (e.g., data splits, hyperparameters, how they were chosen). [Yes]
    \item A clear definition of the specific measure or statistics and error bars (e.g., with respect to the random seed after running experiments multiple times). [Yes]
    \item A description of the computing infrastructure used. (e.g., type of GPUs, internal cluster, or cloud provider). [Yes]
  \end{enumerate}

  \item If you are using existing assets (e.g., code, data, models) or curating/releasing new assets, check if you include:
  \begin{enumerate}
    \item Citations of the creator If your work uses existing assets. [Yes]
    \item The license information of the assets, if applicable. [Not Applicable]
    \item New assets either in the supplemental material or as a URL, if applicable. [Not Applicable]
    \item Information about consent from data providers/curators. [Not Applicable]
    \item Discussion of sensible content if applicable, e.g., personally identifiable information or offensive content. [Not Applicable]
  \end{enumerate}

  \item If you used crowdsourcing or conducted research with human subjects, check if you include:
  \begin{enumerate}
    \item The full text of instructions given to participants and screenshots. [Not Applicable]
    \item Descriptions of potential participant risks, with links to Institutional Review Board (IRB) approvals if applicable. [Not Applicable]
    \item The estimated hourly wage paid to participants and the total amount spent on participant compensation. [Not Applicable]
  \end{enumerate}

\end{enumerate}

\clearpage
\appendix
\thispagestyle{empty}

\onecolumn
\aistatstitle{Kernel Mean Matching under Dependent Sampling: \\
Supplementary Materials}

\section{Taxonomy of Effective Sample Sizes}
\label{app:taxonomy}

Throughout our theoretical analysis, we encounter different forms of the effective sample size $n_{\mathrm{eff}}$, depending on whether we bound scalar averages, infinite-dimensional RKHS embeddings, or evaluate specific Gaussian Process realisations. Table~\ref{tab:effective-sizes} summarises these quantities to help navigate the proofs.

\begin{table}[ht]
\centering
\caption{Taxonomy of effective sample sizes used in the analysis.}
\label{tab:effective-sizes}
\begin{tabular}{@{}p{0.18\linewidth}p{0.45\linewidth}p{0.32\linewidth}@{}}
\toprule
\textbf{Symbol} & \textbf{Controls} & \textbf{Source of Value} \\
\midrule
$n_{\mathrm{eff}}(t)$ & Scalar average at deviation $t$ & Mixing Bernstein bound \\
\addlinespace
$n_{\mathrm{eff}}^{\mathrm{tr}}$ & Source scalar effective size (moderate regime) & Long-run variance proxy \\
\addlinespace
$n_{\mathrm{eff, RKHS}}^{\mathrm{tr}}, n_{\mathrm{eff, RKHS}}^{\mathrm{te}}$ & RKHS empirical feature-matching (MMD) & Geometric block size \\
\addlinespace
$n_{\mathrm{eff}}^{\mathrm{GP}}(h)$ & GP-modeled conditional loss average & Closed-form GP covariance \\
\bottomrule
\end{tabular}
\end{table}

\section{Scalar Effective Concentration from \texorpdfstring{$\beta$}{beta}-Mixing (Proof of Lemma~\ref{lem:mixing-to-effective})}
\label{app:lemma1}

\begin{lemma}[Scalar Concentration under $\beta$-Mixing]
\label{lem:mixing-to-effective}
Let $f: \mathcal{X} \to [a,b]$ with $A = b - a$. Assume the strictly stationary sequence $\{\mathbf{X}_i\}_{i=1}^n$ is geometrically $\beta$-mixing with coefficients $\beta(k) \le c_1\exp(-c_2 k)$ at lag $k \ge 1$ ($c_1, c_2 > 0$), and has long-run variance proxy $\sigma_{f,\infty}^2 = \operatorname{Var}(f(\mathbf{X}_1)) + 2\sum_{k=1}^{\infty} |\operatorname{Cov}(f(\mathbf{X}_1), f(\mathbf{X}_{1+k}))|$. There exist constants $K_1, K_2 > 0$, depending only on $c_1$ and $c_2$, such that for any deviation $t$ in the moderate regime (Definition~\ref{def:moderate-regime} in Appendix~\ref{app:lemma1}):
\begin{align}
\label{eq:mixing-effective-t}
& \mathbb{P} \left( \left| \tfrac{1}{n} \sum_{i=1}^n f(\mathbf{X}_i) - \mathbb{E}[f(\mathbf{X})] \right| \ge t \right) \nonumber\\
& \qquad \le 2\exp\left( - \frac{2 n_{\mathrm{eff}} t^2}{A^2} \right),
\end{align}
where the effective sample size is given by:
\begin{equation}
\label{eq:neff-mixing}
n_{\mathrm{eff}} = \frac{K_1 n A^2}{4 \left( \sigma_{f,\infty}^2 + A^2/n \right)}.
\end{equation}
\end{lemma}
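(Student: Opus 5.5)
We apply the Bernstein inequality of \citet{merlevede2009bernstein} for geometrically $\beta$-mixing sequences to the centered, bounded variables $Z_i = f(\mathbf{X}_i) - \mathbb{E}[f(\mathbf{X})]$. We then show that, in the moderate regime, its exponent dominates a Hoeffding-type exponent $2n_{\mathrm{eff}}t^2/A^2$ with $n_{\mathrm{eff}}$ as in \eqref{eq:neff-mixing}.

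\textbf{Step 1 (reduction).} Since $f$ takes values in $[a,b]$, we have $|Z_i| \le A$ and $\mathbb{E}Z_i = 0$. Strict stationarity and $\beta$-mixing are preserved under the measurable map $f$, because $\beta$-coefficients can only decrease under measurable transformations. Hence $\{Z_i\}$ is geometrically $\beta$-mixing (a fortiori $\alpha$-mixing) with $\beta(k)\le c_1 e^{-c_2 k}$. Rescaling to $\tilde Z_i = Z_i/A$, so that $|\tilde Z_i|\le 1$, we invoke the Merlevède–Peligrad–Rio theorem. It gives constants $C_1, C_2$, depending only on $c_1, c_2$, such that for all $x>0$
\begin{equation*}
\mathbb{P}\Bigl(\Bigl|\sum_{i=1}^n \tilde Z_i\Bigr|\ge x\Bigr) \le 2\exp\Bigl(-\frac{C_1 x^2}{n v^2 + 1 + x (\log n)^2}\Bigr),
\end{equation*}
where $v^2 = \operatorname{Var}(\tilde Z_1) + 2\sum_{k\ge1}|\operatorname{Cov}(\tilde Z_1,\tilde Z_{1+k})| = \sigma_{f,\infty}^2/A^2$. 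The factor $2$ accounts for applying the one-sided bound to $\pm\tilde Z_i$. A possible $\exp(-x C_2/\ldots)$ term in some versions of the theorem is absorbed using the moderate-regime condition below.

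\textbf{Step 2 (substitution and the moderate regime).} Take $x = nt/A$. The exponent becomes
\begin{equation*}
\frac{C_1 n^2 t^2/A^2}{n\sigma_{f,\infty}^2/A^2 + 1 + (nt/A)(\log n)^2}.
\end{equation*}
Definition~\ref{def:moderate-regime} should be exactly the condition that the linear term is dominated by the variance terms, namely $(nt/A)(\log n)^2 \le n\sigma_{f,\infty}^2/A^2 + 1$, i.e. $t(\log n)^2 \le \sigma_{f,\infty}^2/A + A/n$. Under this condition the denominator is at most $2(n\sigma_{f,\infty}^2 + A^2)/A^2$. The exponent is then at least
\begin{equation*}
\frac{C_1 n^2 t^2}{2(n\sigma_{f,\infty}^2 + A^2)} = \frac{C_1 n t^2}{2(\sigma_{f,\infty}^2 + A^2/n)}.
\end{equation*}
Equating this with $2n_{\mathrm{eff}}t^2/A^2$ gives $n_{\mathrm{eff}} = C_1 n A^2/(4(\sigma_{f,\infty}^2 + A^2/n))$, which is \eqref{eq:neff-mixing} with $K_1 = C_1$. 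We let $K_2$ collect the constant multiplying $(\log n)^2$ in the regime definition, which is the constant appearing in Assumption~\ref{ass:burn_in}.

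\textbf{Main obstacle.} The delicate point is matching the exact form of the Merlevède–Peligrad–Rio inequality. The published version is stated for $\alpha$-mixing with geometric rate, uses a variance quantity $v^2$ defined as a supremum over blocks, and has a $(\log n)(\log\log n)$ factor in place of $(\log n)^2$. We would first verify that $v^2 \le \sigma_{f,\infty}^2/A^2$, using $|\operatorname{Cov}|$ summability: $\operatorname{Var}(\sum_{i\in I}\tilde Z_i)/|I| \le v^2$ for every block $I$. We then bound $\log\log n \le \log n$, absorbing the difference into $K_2$. If the sup-variance comparison fails, we would instead use Rio's covariance inequality $|\operatorname{Cov}| \le 4\alpha(k)A^2$ to bound $v^2$ by a constant times $\sigma_{f,\infty}^2 + A^2/n$, at the cost of adjusting $K_1$. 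Both routes keep $K_1, K_2$ dependent only on $c_1, c_2$.
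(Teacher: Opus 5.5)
Your proposal is correct and follows essentially the same route as the paper. Both apply the Merlev\`ede--Peligrad--Rio Bernstein inequality at deviation $nt$, use the moderate regime to bound the $(\log n)^2$ term in the denominator by the variance part, and read off $n_{\mathrm{eff}}$, including the factor $4$, by matching to the Hoeffding exponent $2n_{\mathrm{eff}}t^2/A^2$. The differences are cosmetic: you rescale by $A$, and your regime $K_2 A t(\log n)^2 \le \sigma_{f,\infty}^2 + A^2/n$ is implied by the paper's Definition ($K_2 A t(\log n)^2 \le \sigma_{f,\infty}^2$), so the stated regime is covered. Also, under strict stationarity the MPR variance quantity $\sup_i(\cdots)$ coincides with $\sigma_{f,\infty}^2$, so the fallback in your ``main obstacle'' paragraph is unnecessary; as stated, it would give a bound of order $A^2$ rather than one in terms of $\sigma_{f,\infty}^2$.
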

For general deviations outside the moderate regime, the deviation-dependent $n_{\mathrm{eff}}(t)$ derived in \eqref{eq:neff-t-general} applies.

In this section, we provide the full algebraic details of Lemma~\ref{lem:mixing-to-effective}, showing how the Merlev\`ede--Peligrad--Rio Bernstein inequality transforms into a Hoeffding-style effective concentration bound.

\paragraph{Formal Definition of $\beta$-Mixing.}
Let $\{\mathbf{X}_i\}_{i \in \mathbb{Z}}$ be a strictly stationary sequence on a probability space $(\Omega, \mathcal{A}, \mathbb{P})$. Let $\mathcal{F}_{-\infty}^0 = \sigma(\dots, \mathbf{X}_0)$ and $\mathcal{F}_k^\infty = \sigma(\mathbf{X}_k, \mathbf{X}_{k+1}, \dots)$ denote the past and future $\sigma$-algebras separated by a time lag $k \ge 1$. The $\beta$-mixing coefficient at lag $k$ is defined by:
\begin{equation}
\label{eq:beta-mixing-def}
\beta(k) := \mathbb{E} \left[ \sup_{B \in \mathcal{F}_k^\infty} \left| \mathbb{P}(B \mid \mathcal{F}_{-\infty}^0) - \mathbb{P}(B) \right| \right].
\end{equation}
The sequence is geometrically $\beta$-mixing if there exist constants $c_1, c_2 > 0$ such that $\beta(k) \le c_1 \exp(-c_2 k)$ for all $k \ge 1$.

\begin{theorem}[Merlev\`ede--Peligrad--Rio Bernstein Inequality \citep{merlevede2009bernstein}]
\label{thm:mpr}
Let $\{U_i\}_{i=1}^{n}$ be a centered, strictly stationary sequence with $|U_i| \le A$ almost surely. Assume the sequence is geometrically $\beta$-mixing with coefficients $\beta(k) \le c_1\exp(-c_2 k)$ for $k \ge 1$. (Note: \citet{merlevede2009bernstein} prove this inequality under the weaker $\alpha$-mixing condition, which applies directly since $\alpha(k) \le \frac{1}{2}\beta(k)$). Let the long-run variance proxy be defined as:
\begin{equation*}
\label{eq:long-run-var-proxy}
\sigma_{\infty}^2 = \operatorname{Var}(U_1) + 2\sum_{k=1}^{\infty} \left| \operatorname{Cov}(U_1, U_{1+k}) \right|.
\end{equation*}
Then there exist constants $K_1, K_2 > 0$, depending only on $c_1$ and $c_2$, such that for all $t > 0$:
\begin{align}
    \label{eq:mpr-bernstein}
\mathbb{P} \left( \left| \sum_{i=1}^{n} U_i \right| \ge t \right) \le 2\exp\left( - \frac{K_1 t^2}{n\sigma_{\infty}^2 + A^2 + K_2 A t (\log n)^2} \right).
\end{align}
\end{theorem}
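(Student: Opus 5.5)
The plan is to follow the Cantor-set blocking argument of \citet{merlevede2009bernstein}, specialised to the $\beta$-mixing case, where Berbee's coupling \citep{berbee1979random} makes the decoupling step exact rather than approximate. Write $S_n=\sum_{i=1}^n U_i$. By a Chernoff argument it suffices to bound the log-Laplace transform. Specifically, I aim to show that for $0<\lambda<1/(K_2 A(\log n)^2)$,
\begin{equation*}
\log \mathbb{E}\, e^{\lambda S_n} \le \frac{C\lambda^2\left(n\sigma_\infty^2 + A^2\right)}{1-K_2 A\lambda(\log n)^2},
\end{equation*}
and similarly for $-S_n$. Optimising $\lambda$ in $\mathbb{P}(S_n\ge t)\le e^{-\lambda t}\mathbb{E}e^{\lambda S_n}$, with the standard choice $\lambda = t/(2C(n\sigma_\infty^2+A^2)+K_2At(\log n)^2)$, then gives the stated Bernstein form with $K_1$ absorbing $C$. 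The prefactor $2$ comes from the union over the two tails. For the regime $t>nA$ there is nothing to prove, because $|S_n|\le nA$ almost surely.

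The first step is the blocking construction. I partition $\{1,\dots,n\}$ recursively, Cantor-style. At each level, every current interval is split into two halves separated by a removed gap. The gap lengths shrink geometrically with the level and are of order $c_2^{-1}\log n$ at the coarsest scale, so that the total mixing error $\sum \beta(\text{gap})\le \sum c_1e^{-c_2\,\text{gap}}$ is $O(n^{-1})$. The recursion stops after $L=O(\log n)$ levels, once blocks reach length about $(\log n)/c_2$.

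The second step is decoupling. At each level I apply Berbee's lemma to replace the two surviving pieces by independent copies with the same marginals. This costs at most $\beta(\text{gap})\cdot e^{\lambda A\cdot(\text{block length})}$ in the Laplace transform, which is negligible once $\lambda A(\log n)\lesssim 1$.

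The third step handles the terminal blocks and the removed gaps. On a terminal block of length $q\asymp\log n$, the block sum $T$ satisfies $|T|\le qA$. Its variance is bounded via covariance summability by $q\sigma_\infty^2$, since $\operatorname{Var}(\sum_{i\le q}U_i)\le q\,(\operatorname{Var}U_1+2\sum_k|\operatorname{Cov}(U_1,U_{1+k})|)$. The elementary inequality $\log\mathbb{E}e^{\lambda T}\le \lambda^2\operatorname{Var}(T)/(1-\lambda qA)$ then gives the per-block contribution. The removed gaps have total length $O(n/\log n)$ at the top level and shrink down the recursion. I handle them by repeating the same construction on the leftover index set, and combine the pieces through H\"older's inequality on the Laplace transform, $\mathbb{E}e^{\lambda(X+Y)}\le(\mathbb{E}e^{p\lambda X})^{1/p}(\mathbb{E}e^{p'\lambda Y})^{1/p'}$. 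The additive $A^2$ in the denominator absorbs the boundary and rounding remainder, which is of bounded length.

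The main obstacle is the interplay between H\"older's inequality and the recursion depth. Each H\"older split inflates the effective $\lambda$ by a factor $p$, and doing this at $L\asymp\log n$ levels could blow up the admissible range of $\lambda$. To avoid this, I will choose the H\"older exponents at level $\ell$ as $p_\ell=1+O(2^{-\ell/2})$, so that $\prod_\ell p_\ell$ stays bounded. The accumulated scale constraint is then $\lambda\cdot qA\cdot L\lesssim 1$, that is, $\lambda A(\log n)^2\lesssim 1$; this is exactly the source of the $(\log n)^2$ factor. Checking that the variance contributions sum to $Cn\sigma_\infty^2$ rather than picking up an extra $\log n$ is the delicate bookkeeping, and I would follow Rio's covariance inequality to control it. Since all constants arise only from the gap choices and the coupling error, $K_1$ and $K_2$ depend only on $c_1$ and $c_2$, as claimed.
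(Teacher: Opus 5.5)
The paper does not prove this statement; it imports it from \citet{merlevede2009bernstein}, and its only added step is the observation $\alpha(k)\le\frac12\beta(k)$. Absorbing $c_1$ into the constants is also harmless, since $\alpha\le\frac14$ always. Your outline reconstructs the Merlev\`ede--Peligrad--Rio Cantor-set argument, which is the right route. The Chernoff optimisation, the terminal-block bound $\log\mathbb{E}e^{\lambda T}\le\lambda^2\operatorname{Var}(T)/(1-\lambda qA)$, and $\operatorname{Var}(T)\le q\sigma_\infty^2$ are all fine.

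\textbf{The gap is the H\"older step you flag as the main obstacle; your proposed fix does not work.} A two-term split $\mathbb{E}e^{\lambda(X+Y)}\le(\mathbb{E}e^{p\lambda X})^{1/p}(\mathbb{E}e^{p'\lambda Y})^{1/p'}$ inflates \emph{both} pieces. With $p_\ell=1+O(2^{-\ell/2})$ the conjugate exponent is $p_\ell'=p_\ell/(p_\ell-1)\asymp 2^{\ell/2}$.
\begin{itemize}
\item If the Cantor piece at stage $\ell$ receives $p_\ell'$, it carries the parameter $\asymp\lambda 2^{\ell/2}$.
\item If the recursed leftover receives it, the factors compound to $\lambda\prod_{j<\ell}p_j'\asymp\lambda 2^{\ell(\ell-1)/4}$.
\end{itemize}
With $\asymp\log n$ leftover stages, either case shrinks the admissible range of $\lambda$ by a factor polynomial in $n$, which destroys the $K_2At(\log n)^2$ term. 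Boundedness of $\prod_\ell p_\ell$ controls nothing here.

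The missing ingredient is the summation lemma for sub-gamma log-Laplace bounds, which is what MPR use. If $\log\mathbb{E}e^{\lambda Z_\ell}\le\sigma_\ell^2\lambda^2/(1-c_\ell\lambda)$ for each $\ell$, then $\log\mathbb{E}e^{\lambda\sum_\ell Z_\ell}\le\sigma^2\lambda^2/(1-c\lambda)$, where $\sigma=\sum_\ell\sigma_\ell$ and $c=\sum_\ell c_\ell$. It is proved by a single multi-term H\"older with $\lambda$-dependent weights $1/p_\ell=(\sigma_\ell/\sigma)(1-c\lambda)+c_\ell\lambda$.
\begin{itemize}
\item The stage-$\ell$ index set has at most $n2^{-\ell}$ elements, so the $\sigma_\infty^2$ parts combine to $(\sum_\ell\sqrt{n2^{-\ell}})^2\sigma_\infty^2\lesssim n\sigma_\infty^2$. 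No covariance inequality is needed for this bookkeeping.
\item The scale parameters add to $c=\sum_\ell c_\ell\asymp\log n\cdot A\log n$. This is the real origin of the $(\log n)^2$.
\item Fixed exponents can also work, but they must grow, for example $p_\ell\asymp\min\{2^{\ell/2},\log n\}$ with $\sum_\ell 1/p_\ell\le 1$. That is the opposite of $p_\ell\to 1$.
\end{itemize}

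\textbf{A second, easier problem is that the gap schedule is stated backwards.} Decoupling an interval $I$ across a gap $g$ costs a term of size $\beta(g)e^{\lambda A|I|}$, so every split needs $c_2g\gtrsim\lambda A|I|$. At the top level $|I|\asymp n$. A gap of order $c_2^{-1}\log n$ there leaves an error of order $n^{-O(1)}e^{\lambda An/2}$, which is enormous for $\lambda$ of order $1/(A(\log n)^2)$. It also contradicts your later remark that the top-level gaps have total length of order $n/\log n$.

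The workable choice is $g=\max\{\delta|I|,\,Cc_2^{-1}\log n\}$ with $\delta\asymp 1/\log n$.
\begin{itemize}
\item Proportional gaps turn the decoupling condition into $\lambda A\lesssim c_2\delta$.
\item Keeping $\delta$ times the number of levels small, and $q$ a large multiple of $c_2^{-1}\log n$, retains at least half of the indices.
\item The floor makes the $O(n/q)$ coupling errors summable.
\end{itemize}

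Two smaller points remain.
\begin{itemize}
\item To get the log-Laplace bound in the form you state, with no additive constant, bound the coupling error through $\operatorname{Cov}(e^{\lambda U}-1,e^{\lambda V}-1)=O(\lambda^2A^2|I_U||I_V|\beta(g)e^{\lambda A(|I_U|+|I_V|)})$. These $\lambda^2A^2$-scaled remainders are what the $+A^2$ in the denominator absorbs. A bare additive $O(1)$ would only cost a constant in $K_1$ at the tail-bound level.
\item The relabelled leftover index set is no longer stationary. The induction must therefore be run for nonstationary sequences, with $\sigma_\infty^2$ replaced by $\sup_i\{\operatorname{Var}(U_i)+2\sum_{j>i}|\operatorname{Cov}(U_i,U_j)|\}$, as in the original paper. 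Relabelling only increases separations, so the mixing bound is inherited and this supremum is at most $\sigma_\infty^2$.
\end{itemize}
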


\begin{definition}[Moderate-Deviation Regime]
\label{def:moderate-regime}
Let $K_2 > 0$ be the mixing constant from Theorem~\ref{thm:mpr}. A deviation threshold $t > 0$ is said to fall within the moderate-deviation regime if:
\begin{equation}
\label{eq:moderate-deviation-threshold}
t \le \frac{\sigma_{f,\infty}^2}{A K_2 (\log n)^2}.
\end{equation}
\end{definition}

\begin{proof}[Proof of Lemma~\ref{lem:mixing-to-effective}]
Define the centered variables $U_i = f(\mathbf{X}_i) - \mathbb{E}[f(\mathbf{X})]$. Since $f$ takes values in $[a,b]$, the sequence $\{U_i\}$ is strictly stationary, centered, and bounded by $A = b - a$. Applying Theorem~\ref{thm:mpr} with the scaled deviation $x = nt$, we obtain:
\begin{equation}
\mathbb{P} \left( \left| \frac{1}{n} \sum_{i=1}^n U_i \right| \ge t \right) = \mathbb{P} \left( \left| \sum_{i=1}^n U_i \right| \ge nt \right) \le 2\exp\left( - \frac{K_1 (nt)^2}{n\sigma_{f,\infty}^2 + A^2 + K_2 A (nt) (\log n)^2} \right).
\end{equation}
Dividing both the numerator and the denominator of the exponent by $n$ gives:
\begin{equation}
2\exp\left( - \frac{K_1 n t^2}{\sigma_{f,\infty}^2 + A^2/n + K_2 A t (\log n)^2} \right).
\end{equation}
Equating this exponent with the standard Hoeffding form $-2 n_{\mathrm{eff}}(t) t^2 / A^2$ yields the deviation-dependent effective sample size:
\begin{equation}
\label{eq:neff-t-general}
n_{\mathrm{eff}}(t) = \frac{K_1 n A^2}{2 \left( \sigma_{f,\infty}^2 + A^2/n + K_2 A t (\log n)^2 \right)}.
\end{equation}
Now, under the moderate-deviation regime (Definition~\ref{def:moderate-regime}), the logarithmic dependence penalty is bounded by the long-run variance:
\begin{equation}
K_2 A t (\log n)^2 \le \sigma_{f,\infty}^2.
\end{equation}
Substituting this upper bound into the denominator of $n_{\mathrm{eff}}(t)$, we obtain:
\begin{equation}
\sigma_{f,\infty}^2 + A^2/n + K_2 A t (\log n)^2 \le 2\sigma_{f,\infty}^2 + A^2/n \le 2\left( \sigma_{f,\infty}^2 + A^2/n \right).
\end{equation}
Replacing the denominator with this upper bound yields the deviation-independent effective sample size:
\begin{equation}
n_{\mathrm{eff}} = \frac{K_1 n A^2}{4 \left( \sigma_{f,\infty}^2 + A^2/n \right)},
\end{equation}
which completes the proof.
\end{proof}

\section{Hilbert-Valued Concentration via Blocking (Proof of Lemma~\ref{lem:hilbert-mixing-blocking})}
\label{app:lemma2}

\begin{lemma}[Hilbert Concentration via Blocking]
\label{lem:hilbert-mixing-blocking}
Let $\{\mathbf{Z}_i\}_{i=1}^{n}$ be a centered, strictly stationary sequence in a separable Hilbert space $\mathcal{H}$ with $\|\mathbf{Z}_i\|_{\mathcal{H}} \le R$ almost surely, and $\beta$-mixing coefficients satisfying $\beta(k) \le c_1\exp(-c_2 k)$ at lag $k \ge 1$ ($c_1, c_2 > 0$). Then there exists a constant $K_H > 0$, depending only on $c_1$ and $c_2$, such that for all $\delta \in (0,1)$, with probability at least $1-\delta$:
\begin{equation*}
\left\| \tfrac{1}{n}\sum_{i=1}^{n}\mathbf{Z}_i \right\|_{\mathcal{H}} \le K_H R \sqrt{\frac{(\log n + \log(1/\delta))\log(8/\delta)}{n}}.
\end{equation*}
\end{lemma}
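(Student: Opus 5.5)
The plan is the standard blocking argument: Berbee's coupling \citep{berbee1979random} reduces the sum to independent block sums, and Pinelis' inequality \citep{pinelis1994optimum} then gives Hilbert-space concentration for those block sums. The block length is chosen logarithmic so that the coupling error is at most a fraction of $\delta$.

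\textbf{Step 1: blocking.} Fix a block length $q\ge 1$ and set $\mu=\lfloor n/(2q)\rfloor$. Partition $\{1,\dots,n\}$ into alternating odd blocks $O_1,\dots,O_\mu$ and even blocks $E_1,\dots,E_\mu$, each of length $q$, plus a remainder of fewer than $2q$ indices. Define the block sums $\mathbf{S}_j=\sum_{i\in O_j}\mathbf{Z}_i$, and similarly for the even blocks. Each block sum satisfies $\|\mathbf{S}_j\|_{\mathcal{H}}\le qR$, and the remainder contributes at most $2qR/n$ to the norm of the average. By the triangle inequality it suffices to bound the odd-block average and the even-block average separately, each with failure probability $\delta/4$, reserving the remaining budget for the coupling.

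\textbf{Step 2: coupling.} Odd blocks are separated by gaps of length $q$. Berbee's lemma, applied iteratively, gives independent copies $\tilde{\mathbf{S}}_j\overset{d}{=}\mathbf{S}_j$ with
\[
\mathbb{P}\bigl(\exists j:\ \tilde{\mathbf{S}}_j\ne\mathbf{S}_j\bigr)\le \mu\,\beta(q)\le n c_1 e^{-c_2 q}.
\]
Choosing $q=\lceil c_2^{-1}\log(8c_1 n/\delta)\rceil$ makes this at most $\delta/8$ for each parity class. This choice produces the $\log n+\log(1/\delta)$ factor, since $q\lesssim \log n+\log(1/\delta)$ with constants depending only on $c_1$ and $c_2$.

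\textbf{Step 3: concentration of the independent block sums.} The coupled sums $\tilde{\mathbf{S}}_j$ are independent, centered, and bounded by $qR$ in a separable Hilbert space, which is $(2,1)$-smooth. Pinelis' inequality gives
\[
\mathbb{P}\Bigl(\bigl\|\textstyle\sum_j\tilde{\mathbf{S}}_j\bigr\|\ge s\Bigr)\le 2\exp\Bigl(-\frac{s^2}{2\mu q^2R^2}\Bigr).
\]
Hence, with probability at least $1-\delta/8$,
\[
\bigl\|\textstyle\sum_j\tilde{\mathbf{S}}_j\bigr\|\le qR\sqrt{2\mu\log(16/\delta)}.
\]
Dividing by $n$ and using $\mu\le n/(2q)$, the odd-block term is at most $R\sqrt{q\log(16/\delta)/n}$.

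\textbf{Step 4: union bound.} Combine the two parity classes, the two coupling events, and the remainder term. Substituting $q\lesssim \log n+\log(1/\delta)$ and absorbing $\log(16/\delta)\le 2\log(8/\delta)$ together with the other numeric factors into $K_H$ gives the stated rate. The remainder $2qR/n$ is dominated by $R\sqrt{q/n}$ whenever $q\le n$. When $q>n$ the claimed right-hand side already exceeds $R$, after enlarging $K_H$, so the bound holds trivially from $\|\tfrac1n\sum_i\mathbf{Z}_i\|\le R$.

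\textbf{Main obstacle.} The delicate step is Step 2. Berbee's coupling must be applied to Hilbert-valued blocks, so the underlying space must be Polish, which holds because $\mathcal{H}$ is separable. It must also be applied sequentially across $\mu$ blocks so that the total variation errors add up to at most $\mu\beta(q)$. I would follow the standard construction of Doukhan--Massart--Rio, coupling each new block to an independent copy conditional on the previously constructed ones. I would also check that the constant in $q$ can absorb $\log c_1$ without any dependence on $R$ or $n$ beyond $\log n$.
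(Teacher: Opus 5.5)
Your proposal is correct and follows essentially the same route as the paper: alternating blocks of logarithmic length, sequential Berbee coupling with error at most $\mu\beta(q)$, Pinelis' inequality with variance proxy $\mu q^2R^2$ for the independent block sums, a union bound over the two parity classes, and absorption of the remainder into $K_H$. Your use of $\mu\le n/(2q)$ and your handling of the remainder and the case $q>n$ are, if anything, tidier than the paper's; the one caveat, which the paper shares, is that the step $q\lesssim \log n+\log(1/\delta)$ needs $n\ge 2$, since at $n=1$ with $\delta\to 1$ the stated right-hand side tends to $0$ while $\|\mathbf{Z}_1\|_{\mathcal{H}}$ can equal $R$ almost surely, so the lemma as written requires that harmless restriction.
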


In this section, we provide the full proof of the Hilbert-space concentration bound for geometrically $\beta$-mixing sequences. The proof relies on the blocking technique and Berbee's coupling, decoupling the dependent sequence into approximately independent blocks.

\begin{proof}[Proof of Lemma~\ref{lem:hilbert-mixing-blocking}]
Fix a block length $q \ge 1$. We partition the sequence of length $n$ into alternating blocks of size $q$:
\begin{equation}
I_j = \{(2j-2)q + 1, \dots, (2j-1)q\}, \quad J_j = \{(2j-1)q + 1, \dots, 2jq\},
\end{equation}
for $j = 1, \dots, L$, where $L = \lfloor n/(2q) \rfloor$. The remaining observations (at most $2q$) contribute at most $2qR/n$ to the norm of the empirical average, which can be safely absorbed into the universal constants of the final bound.

Define the sum over the odd blocks as $\mathbf{S}_j = \sum_{i \in I_j} \mathbf{Z}_i$. By the almost sure boundedness of $\mathbf{Z}_i$, we have $\|\mathbf{S}_j\|_{\mathcal{H}} \le qR$.

Because the original sequence is $\beta$-mixing, we can apply Berbee's coupling lemma \citep{berbee1979random} sequentially to the blocks. This guarantees the existence of a sequence of \emph{independent} block copies $\mathbf{S}_1^\ast, \dots, \mathbf{S}_L^\ast$, where each $\mathbf{S}_j^\ast$ has the exact same marginal distribution as $\mathbf{S}_j$, such that the probability of the coupled sequence differing from the original sequence is strictly bounded by the sum of the mixing coefficients:
\begin{equation}
\mathbb{P}\left(\exists j : \mathbf{S}_j \ne \mathbf{S}_j^\ast \right) \le L\beta(q).
\end{equation}
Therefore, we can bound the tail probability of the original odd block sums using their independent copies:
\begin{equation}
\label{eq:proof_coupling_bound}
\mathbb{P}\left( \left\| \sum_{j=1}^{L} \mathbf{S}_j \right\|_{\mathcal{H}} \ge t \right) \le \mathbb{P}\left( \left\| \sum_{j=1}^{L} \mathbf{S}_j^\ast \right\|_{\mathcal{H}} \ge t \right) + L\beta(q).
\end{equation}

Since the original variables are centered ($\mathbb{E}[\mathbf{Z}_i] = \mathbf{0}$), the independent block variables $\mathbf{S}_j^\ast$ are also centered. They are independent, Hilbert-valued, and strictly bounded by $qR$. We can therefore apply Pinelis' Hoeffding inequality for Hilbert spaces \citep{pinelis1994optimum}:
\begin{equation}
\mathbb{P}\left( \left\| \sum_{j=1}^{L} \mathbf{S}_j^\ast \right\|_{\mathcal{H}} \ge t \right) \le 2\exp\left( - \frac{t^2}{2L q^2 R^2} \right).
\end{equation}

We apply the exact same blocking and coupling argument to the even blocks $J_j$. Since $\sum_{i=1}^n \mathbf{Z}_i$ is bounded by the sum of the odd and even block sequences via the triangle inequality, we apply a union bound over these two sequences evaluated at the threshold $t/2$. Combining these estimates yields:
\begin{equation}
\label{eq:proof_combined_blocks}
\mathbb{P}\left( \left\| \sum_{i=1}^{n} \mathbf{Z}_i \right\|_{\mathcal{H}} \ge t \right) \le 4\exp\left( - \frac{t^2}{K L q^2 R^2} \right) + 2 L \beta(q),
\end{equation}
for some universal constant $K > 0$. 

Using $L = \lfloor n/(2q) \rfloor \ge n/(4q)$, the exponent scales as $t^2 / (K n q R^2)$. To balance the exponential term with the mixing penalty $2 L \beta(q)$, we utilise the assumption that the sequence is geometrically $\beta$-mixing. We select the block size $q$ to grow logarithmically with $n$ and $1/\delta$:
\begin{equation}
q = \left\lceil \frac{\log n + \log(2 c_1/\delta)}{c_2} \right\rceil.
\end{equation}
With this choice of $q$, the mixing penalty becomes negligible, $2 L \beta(q) \le \delta/2$. Setting the exponential term in \eqref{eq:proof_combined_blocks} to be at most $\delta/2$ and solving for $t$ yields the desired bound:
\begin{equation}
\left\| \sum_{i=1}^{n} \mathbf{Z}_i \right\|_{\mathcal{H}} \le K_H n R \sqrt{\frac{(\log n + \log(1/\delta))\log(8/\delta)}{n}}.
\end{equation}
Dividing by $n$ finishes the proof of Lemma~\ref{lem:hilbert-mixing-blocking}.
\end{proof}

\section{Proof of the Main KMM Bound (Theorem~\ref{thm:kmm-dependent})}
\label{app:theorem1}

\begin{proof}
Let $\omega^*_i = \omega^*(\mathbf{X}_i^{\mathrm{tr}})$. We decompose the estimation error of $\hat{\mu}_{\mathrm{KMM}}$ into three distinct operational terms. This decomposition provides: Term 1 captures the conditional variance of the labels, Term 2 captures the generalization of the optimal density ratio over the dependent covariate sequence, and Term 3 captures the empirical RKHS feature-matching error.
\begin{align}
\hat{\mu}_{\mathrm{KMM}} - \mu_{\mathrm{te}} 
&= \underbrace{ \frac{1}{n_{\mathrm{tr}}} \sum_{i=1}^{n_{\mathrm{tr}}} \hat{w}_i (Y_i^{\mathrm{tr}} - m(\mathbf{X}_i^{\mathrm{tr}})) }_{\text{Term 1: Conditional Residual Noise}} \label{eq:proof_term1} \\
&\quad + \underbrace{ \left( \frac{1}{n_{\mathrm{tr}}} \sum_{i=1}^{n_{\mathrm{tr}}} \omega^*_i m(\mathbf{X}_i^{\mathrm{tr}}) - \mu_{\mathrm{te}} \right) }_{\text{Term 2: True-Ratio Generalization}} \label{eq:proof_term3} \\
&\quad + \underbrace{ \frac{1}{n_{\mathrm{tr}}} \sum_{i=1}^{n_{\mathrm{tr}}} (\hat{w}_i - \omega^*_i) m(\mathbf{X}_i^{\mathrm{tr}}) }_{\text{Term 3: RKHS Feature Matching Error}}. \label{eq:proof_term2}
\end{align}

\paragraph{Term 1 (Conditional Residual Noise):}
Under Assumption~\ref{ass:independent_residuals}, the residual errors $\epsilon_i = Y_i^{\mathrm{tr}} - m(\mathbf{X}_i^{\mathrm{tr}})$ are conditionally independent given all observed covariates $(\mathbf{X}^{\mathrm{tr}}, \mathbf{X}^{\mathrm{te}})$, with mean zero. Since $Y_i^{\mathrm{tr}} \in [0, 1]$ and $m(\mathbf{X}_i^{\mathrm{tr}}) \in [0, 1]$, each residual $\epsilon_i$ is supported on the interval $[-m(\mathbf{X}_i^{\mathrm{tr}}),\, 1 - m(\mathbf{X}_i^{\mathrm{tr}})]$, whose length is $(1 - m(\mathbf{X}_i^{\mathrm{tr}})) - (-m(\mathbf{X}_i^{\mathrm{tr}})) = 1$.

Because the weights $\hat{w}_i$ are measurable with respect to the covariates $(\mathbf{X}^{\mathrm{tr}}, \mathbf{X}^{\mathrm{te}})$ and satisfy $0 \le \hat{w}_i \le B$, they act as fixed constants when conditioning on the features. Applying Hoeffding's inequality to the sum of independent, bounded variables $\sum_{i=1}^{n_{\mathrm{tr}}} \hat{w}_i \epsilon_i$ conditional on $\mathbf{X}^{\mathrm{tr}}, \mathbf{X}^{\mathrm{te}}$, the effective variance factor is bounded by:
\[
\sum_{i=1}^{n_{\mathrm{tr}}} \hat{w}_i^2 \cdot 1^2 \le n_{\mathrm{tr}} B^2.
\]
Consequently, for any $t > 0$,
\[
\mathbb{P}\left( \left| \frac{1}{n_{\mathrm{tr}}} \sum_{i=1}^{n_{\mathrm{tr}}} \hat{w}_i \epsilon_i \right| \ge t \;\middle|\; \mathbf{X}^{\mathrm{tr}}, \mathbf{X}^{\mathrm{te}} \right) \le 2\exp\left( - \frac{2 (n_{\mathrm{tr}} t)^2}{n_{\mathrm{tr}} B^2} \right) = 2\exp\left( - \frac{2 n_{\mathrm{tr}} t^2}{B^2} \right).
\]
Setting this probability to $\delta/3$, we obtain that with probability at least $1 - \delta/3$:
\begin{equation}
\label{eq:bound_term1}
\left| \frac{1}{n_{\mathrm{tr}}} \sum_{i=1}^{n_{\mathrm{tr}}} \hat{w}_i (Y_i^{\mathrm{tr}} - m(\mathbf{X}_i^{\mathrm{tr}})) \right| \le B \sqrt{\frac{1}{2 n_{\mathrm{tr}}} \log\frac{6}{\delta}}.
\end{equation}
Notice that because the label noise is conditionally independent, Term~1 concentrates with the \emph{nominal} sample size $n_{\mathrm{tr}}$, entirely unaffected by covariate persistence.

\paragraph{Term 2 (True-Ratio Generalization):}
For \eqref{eq:proof_term3}, define the function $v(\mathbf{x}) = \omega^*(\mathbf{x}) m(\mathbf{x})$. Since $0 \le m(\mathbf{x}) \le 1$ and $0 \le \omega^*(\mathbf{x}) \le B$, $v(\mathbf{x})$ takes values in $[0, B]$ with range $A = B$. Under covariate shift (Assumption~\ref{ass:covariate_shift} and \ref{ass:bounded_ratio}), $\mathbb{E}_{P_{\mathrm{tr}}}[v(\mathbf{X})] = \int_{\mathcal{X}} \omega^*(\mathbf{x}) m(\mathbf{x}) P_{\mathrm{tr}}(d\mathbf{x}) = \int_{\mathcal{X}} m(\mathbf{x}) P_{\mathrm{te}}(d\mathbf{x}) = \mu_{\mathrm{te}}$. 

Applying Lemma~\ref{lem:mixing-to-effective} with confidence level $\delta/3$ to the geometrically $\beta$-mixing sequence $\{\mathbf{X}_i^{\mathrm{tr}}\}$, the deviation threshold is $t = B \sqrt{\frac{\log(6/\delta)}{2 n_{\mathrm{eff}}^{\mathrm{tr}}}} \le \sqrt{\frac{2 (\sigma_{v,\infty}^2 + B^2/n_{\mathrm{tr}}) \log(6/\delta)}{K_1 n_{\mathrm{tr}}}}$. The burn-in condition~\eqref{eq:sample_burnin} guarantees that $t$ satisfies the moderate-deviation regime of Definition~\ref{def:moderate-regime} ($t \le \frac{\sigma_{v,\infty}^2}{B K_2 (\log n_{\mathrm{tr}})^2}$). Consequently, with probability at least $1 - \delta/3$:
\begin{equation}
\label{eq:bound_term3}
\left| \frac{1}{n_{\mathrm{tr}}} \sum_{i=1}^{n_{\mathrm{tr}}} \omega^*_i m(\mathbf{X}_i^{\mathrm{tr}}) - \mu_{\mathrm{te}} \right| \le B \sqrt{\frac{1}{2 n_{\mathrm{eff}}^{\mathrm{tr}}} \log\frac{6}{\delta}}.
\end{equation}

\paragraph{Term 3 (RKHS Feature Matching Error):}
For \eqref{eq:proof_term2}, since $m \in \mathcal{H}$, the reproducing property gives $m(\mathbf{x}) = \langle m, \Phi(\mathbf{x}) \rangle_{\mathcal{H}}$. Applying Cauchy--Schwarz, the triangle inequality, and the optimality of $\hat{\mathbf{w}}$ in the KMM objective \eqref{eq:kmm} (observing that $\bm{\omega}^* = (\omega^*(\mathbf{X}_1^{\mathrm{tr}}), \dots, \omega^*(\mathbf{X}_{n_{\mathrm{tr}}}^{\mathrm{tr}}))^\top \in [0, B]^{n_{\mathrm{tr}}}$ is a feasible candidate):
\begin{align}
\left| \frac{1}{n_{\mathrm{tr}}} \sum_{i=1}^{n_{\mathrm{tr}}} (\hat{w}_i - \omega^*_i) m(\mathbf{X}_i^{\mathrm{tr}}) \right| 
&= \left| \left\langle m, \frac{1}{n_{\mathrm{tr}}} \sum_{i=1}^{n_{\mathrm{tr}}} (\hat{w}_i - \omega^*_i)\Phi(\mathbf{X}_i^{\mathrm{tr}}) \right\rangle_{\mathcal{H}} \right| \nonumber \\
&\le \|m\|_{\mathcal{H}} \left\| \frac{1}{n_{\mathrm{tr}}} \sum_{i=1}^{n_{\mathrm{tr}}} (\hat{w}_i - \omega^*_i) \Phi(\mathbf{X}_i^{\mathrm{tr}}) \right\|_{\mathcal{H}} \nonumber \\
&\le \|m\|_{\mathcal{H}} \left( \left\| \hat{\mu}_{\Phi}^{\mathrm{tr}}(\hat{\mathbf{w}}) - \hat{\mu}_{\Phi}^{\mathrm{te}} \right\|_{\mathcal{H}} + \left\| \hat{\mu}_{\Phi}^{\mathrm{tr}}(\bm{\omega}^*) - \hat{\mu}_{\Phi}^{\mathrm{te}} \right\|_{\mathcal{H}} \right) \nonumber \\
&\le 2 \|m\|_{\mathcal{H}} \left\| \frac{1}{n_{\mathrm{tr}}} \sum_{i=1}^{n_{\mathrm{tr}}} \omega^*_i \Phi(\mathbf{X}_i^{\mathrm{tr}}) - \frac{1}{n_{\mathrm{te}}} \sum_{j=1}^{n_{\mathrm{te}}} \Phi(\mathbf{X}_j^{\mathrm{te}}) \right\|_{\mathcal{H}}. \label{eq:term3_opt}
\end{align}
Centering both empirical embeddings around the population target mean embedding $\mu_\Phi = \mathbb{E}_{P_{\mathrm{te}}}[\Phi(\mathbf{X})] = \mathbb{E}_{P_{\mathrm{tr}}}[\omega^*(\mathbf{X})\Phi(\mathbf{X})]$, we decompose:
\begin{equation}
\label{eq:mmd_decomp}
\left\| \frac{1}{n_{\mathrm{tr}}} \sum_{i=1}^{n_{\mathrm{tr}}} \omega^*_i \Phi(\mathbf{X}_i^{\mathrm{tr}}) - \frac{1}{n_{\mathrm{te}}} \sum_{j=1}^{n_{\mathrm{te}}} \Phi(\mathbf{X}_j^{\mathrm{te}}) \right\|_{\mathcal{H}}
\le \left\| \frac{1}{n_{\mathrm{tr}}} \sum_{i=1}^{n_{\mathrm{tr}}} \mathbf{Z}_i^{\mathrm{tr}} \right\|_{\mathcal{H}} + \left\| \frac{1}{n_{\mathrm{te}}} \sum_{j=1}^{n_{\mathrm{te}}} \mathbf{Z}_j^{\mathrm{te}} \right\|_{\mathcal{H}},
\end{equation}
where $\mathbf{Z}_i^{\mathrm{tr}} = \omega^*(\mathbf{X}_i^{\mathrm{tr}})\Phi(\mathbf{X}_i^{\mathrm{tr}}) - \mu_\Phi$ and $\mathbf{Z}_j^{\mathrm{te}} = \Phi(\mathbf{X}_j^{\mathrm{te}}) - \mu_\Phi$.

As established in the main text, $\|\Phi(\mathbf{x})\|_{\mathcal{H}} \le C$. Hence $\|\mu_\Phi\|_{\mathcal{H}} \le C$, giving the uniform bounds:
\[
\|\mathbf{Z}_i^{\mathrm{tr}}\|_{\mathcal{H}} \le \|\omega^*(\mathbf{X}_i^{\mathrm{tr}})\Phi(\mathbf{X}_i^{\mathrm{tr}})\|_{\mathcal{H}} + \|\mu_\Phi\|_{\mathcal{H}} \le BC + C \le 2BC,
\]
where the final inequality holds since $B = \sup_{\mathbf{x}}\omega^*(\mathbf{x}) \ge 1$ for any valid density ratio. For the target sequence, $\|\mathbf{Z}_j^{\mathrm{te}}\|_{\mathcal{H}} \le 2C$. Applying Lemma~\ref{lem:hilbert-mixing-blocking} separately to the source sequence with $R = 2BC$ at confidence level $\delta/6$, and to the target sequence with $R = 2C$ at confidence level $\delta/6$, each bound holds with probability at least $1 - \delta/6$. By a union bound, with probability at least $1 - \delta/3$:
\begin{equation}
\left\| \frac{1}{n_{\mathrm{tr}}} \sum_{i=1}^{n_{\mathrm{tr}}} \omega^*_i \Phi(\mathbf{X}_i^{\mathrm{tr}}) - \frac{1}{n_{\mathrm{te}}} \sum_{j=1}^{n_{\mathrm{te}}} \Phi(\mathbf{X}_j^{\mathrm{te}}) \right\|_{\mathcal{H}} \le C \Delta_{\mathrm{MMD}}(\delta),
\end{equation}
where the empirical RKHS feature-matching discrepancy $\Delta_{\mathrm{MMD}}(\delta)$ is given in closed form by:
\begin{align}
\label{eq:delta-mmd}
\Delta_{\mathrm{MMD}}(\delta) &:= K_H \sqrt{\log\frac{48}{\delta}} \Biggl[ B \sqrt{\frac{\log n_{\mathrm{tr}} + \log(6/\delta)}{n_{\mathrm{tr}}}} + \sqrt{\frac{\log n_{\mathrm{te}} + \log(6/\delta)}{n_{\mathrm{te}}}} \Biggr],
\end{align}
with the blocking constant $K_H > 0$ from Lemma~\ref{lem:hilbert-mixing-blocking}. 

For interpretability and physical intuition in the main text (Theorem~\ref{thm:kmm-dependent}), we abstract this exact non-asymptotic bound by defining the \emph{RKHS effective sample sizes} as:
\begin{equation}
\label{eq:rkhs_neff_def}
n_{\mathrm{eff, RKHS}}^{\mathrm{tr}} := \frac{n_{\mathrm{tr}}}{\log n_{\mathrm{tr}} + \log(6/\delta)}, \quad \text{and} \quad n_{\mathrm{eff, RKHS}}^{\mathrm{te}} := \frac{n_{\mathrm{te}}}{\log n_{\mathrm{te}} + \log(6/\delta)}.
\end{equation}
This definition allows us to express the MMD error conceptually as $\Delta_{\mathrm{MMD}}(\delta) \propto \frac{B}{\sqrt{n_{\mathrm{eff, RKHS}}^{\mathrm{tr}}}} + \frac{1}{\sqrt{n_{\mathrm{eff, RKHS}}^{\mathrm{te}}}}$. This formulation explicitly exposes how data dependence deflates the effective sample size for infinite-dimensional embeddings via the geometric blocking penalty, and reintroduces the target sequence sample size ($n_{\mathrm{te}}$) into the final theoretical guarantee.

Substituting \eqref{eq:delta-mmd} into \eqref{eq:term3_opt} yields:
\begin{equation}
\label{eq:bound_term2_final}
\left| \frac{1}{n_{\mathrm{tr}}} \sum_{i=1}^{n_{\mathrm{tr}}} (\hat{w}_i - \omega^*_i) m(\mathbf{X}_i^{\mathrm{tr}}) \right| \le 2C \|m\|_{\mathcal{H}} \Delta_{\mathrm{MMD}}(\delta).
\end{equation}

\paragraph{Final Synthesis (Proof of Theorem~\ref{thm:kmm-dependent}):}
Finally, we assemble the complete bound. We apply a union bound across the three failure events associated with Term 1 \eqref{eq:bound_term1}, Term 2 \eqref{eq:bound_term3}, and Term 3 \eqref{eq:bound_term2_final}. Since each event fails with probability at most $\delta/3$, their intersection holds with probability at least $1 - \delta$. 

By applying the triangle inequality to the initial decomposition, the absolute estimation error is strictly bounded by the sum of these three non-asymptotic components:
\begin{align}
\label{eq:appendix-final-bound}
\left| \hat{\mu}_{\mathrm{KMM}} - \mu_{\mathrm{te}} \right| 
&\le \underbrace{ B \sqrt{\frac{\log(6/\delta)}{2 n_{\mathrm{tr}}}} }_{\text{Term 1: } \mathcal{E}_{\mathrm{noise}}} \quad + \underbrace{ B \sqrt{\frac{\log(6/\delta)}{2 n_{\mathrm{eff}}^{\mathrm{tr}}}} }_{\text{Term 2: } \mathcal{E}_{\mathrm{shift}}} \nonumber \\
&\quad + \underbrace{ 2C K_H \sqrt{\log\frac{48}{\delta}} \|m\|_{\mathcal{H}} \left( \frac{B}{\sqrt{n_{\mathrm{eff, RKHS}}^{\mathrm{tr}}}} + \frac{1}{\sqrt{n_{\mathrm{eff, RKHS}}^{\mathrm{te}}}} \right) }_{\text{Term 3: Exact Non-Asymptotic } \mathcal{E}_{\mathrm{MMD}}}.
\end{align}
By abstracting the absolute universal constants ($2C$ and $K_H$) and the logarithmic confidence factor ($\sqrt{\log(48/\delta)}$) into the asymptotic $\mathcal{O}(\cdot)$ notation, we exactly recover the generalized statement of Theorem~\ref{thm:kmm-dependent} presented in the main text. This completes the proof.
\end{proof}

\section{Gaussian Process Effective Sample Size Derivations}
\label{app:gp_derivations}

This appendix provides the rigorous mathematical derivations for the Gaussian Process (GP) effective sample size introduced in Section~\ref{sec:density-smoothness}. We first establish the general marginalised covariance for a GP evaluated over correlated inputs. To mathematically formalise the fundamental smoothness trade-off, we then derive exact closed-form expressions and asymptotic limits for two contrasting covariance families: the infinitely smooth Squared-Exponential (SE) kernel and the rough Matérn-1/2 (Exponential) kernel. Finally, we contrast these stochastic realisations with the asymptotic behaviour of a general fixed deterministic function.

\subsection{Marginalized Covariance for GP Realizations (Squared-Exponential Case)}
\label{app:gp_se_derivation}

Suppose $u$ is a realization of a zero-mean GP with a squared-exponential covariance $k_h(\mathbf{x}, \mathbf{x}') = \sigma_u^2 \exp(-\|\mathbf{x}-\mathbf{x}'\|^2 / 2h^2)$. To maintain theoretical consistency with the compact domain $\mathcal{X}$ (Assumption~\ref{ass:covariate_shift}), we formally assume the Gaussian inputs are truncated to a sufficiently large compact bounding box. Following standard truncation arguments in GP learning theory \citep{srinivas2009gaussian}, the exponentially small probability mass lost in the tails ensures our analytical moments hold up to a negligible approximation error. 

If the covariates are drawn from a stationary GP $\mathbf{X}_s \sim \mathcal{N}(\bm{\mu}, \sigma_x^2 \mathbf{I}_d)$, the difference between two inputs is strictly Gaussian:
\begin{equation}
\mathbf{X}_s - \mathbf{X}_t \sim \mathcal{N}\left(\mathbf{0}, 2\sigma_x^2\{1 - r(\|s-t\|)\}\mathbf{I}_d\right).
\end{equation}
Integrating over the distribution of the $d$-dimensional inputs, the prior-averaged kernel covariance becomes:
\begin{equation}
\mathbb{E}_{\mathbf{X}} [k_h(\mathbf{X}_s, \mathbf{X}_t)] = \sigma_u^2 \det\left( \mathbf{I}_d + \frac{2\sigma_x^2\{1 - r(\|s-t\|)\}}{h^2}\mathbf{I}_d \right)^{-1/2}.
\end{equation}
Subtracting the independent-input baseline gives the exact covariance of the centred process:
\begin{equation}
\label{eq:app-gp-centered-covariance}
\bar{\gamma}_u(s,t) = \sigma_u^2 \left[ \det\left( \mathbf{I}_d + \frac{2\sigma_x^2\{1 - r(\|s-t\|)\}}{h^2}\mathbf{I}_d \right)^{-1/2} - \det\left( \mathbf{I}_d + \frac{2\sigma_x^2}{h^2}\mathbf{I}_d \right)^{-1/2} \right].
\end{equation}

Letting $a_h = 2\sigma_x^2/h^2$ and $r_k = r(k) = \exp(-k^2/2\ell_x^2)$, the determinant simplifies to $(1 + a_h\{1-r_k\})^d$. Consequently, the autocorrelation function in $\mathbb{R}^d$ takes the exact closed form:
\begin{equation}
\label{eq:gp-centered-rho-vector}
\bar{\rho}_u(k) = \frac{ \left(1 + a_h\{1 - r_k\}\right)^{-d/2} - \left(1 + a_h\right)^{-d/2} }{ 1 - \left(1 + a_h\right)^{-d/2} }.
\end{equation}
Defining $\kappa_{h,d} = (1+a_h)^{-d/2}$, the infinite autocorrelation sum becomes:
\begin{equation}
\bar{S}(a_h, \ell_x, d) = \sum_{k=1}^{\infty} \bar{\rho}_u(k) = \sum_{k=1}^{\infty} \frac{ (1+a_h(1-r_k))^{-d/2} - \kappa_{h,d} }{ 1 - \kappa_{h,d} }.
\end{equation}

\textbf{Short-Range Dependence Limit ($\ell_x \ll 1$):} The first lag dominates the sum. Using a first-order expansion, we find:
\begin{equation}
\bar{S}(a_h, \ell_x) = L(a_h) \exp\left(-\frac{1}{2\ell_x^2}\right) + \mathcal{O}\left(\exp\left(-\frac{1}{\ell_x^2}\right)\right), \quad \text{where } L(a_h) = \frac{\sqrt{1+a_h}+1}{2(1+a_h)}.
\end{equation}

\textbf{Long-Range Dependence Limit ($\ell_x \gg 1$):} The sum is approximated by the integral via the Euler-Maclaurin formula:
\begin{equation}
\bar{S}(a_h, \ell_x) = \ell_x \mathcal{C}_d(a_h) - \frac{1}{2} + \mathcal{O}(\ell_x^{-1}),
\end{equation}
where the constant $\mathcal{C}_d(a_h)$ explicitly captures the GP smoothness effect:
\begin{equation}
\mathcal{C}_d(a_h) = \int_0^\infty \frac{ \left(1 + a_h\{1 - \exp(-u^2/2)\}\right)^{-d/2} - \left(1 + a_h\right)^{-d/2} }{ 1 - \left(1 + a_h\right)^{-d/2} } \,du.
\end{equation}

Analyzing $\mathcal{C}_d(a_h)$ yields two sub-regimes:
1. If $a_h \ll 1$ (Smooth GP), Taylor expansion yields $\mathcal{C}_d(a_h) = \sqrt{\frac{\pi}{2}} - \frac{3a_h}{4}\left(\sqrt{\frac{\pi}{2}} - \frac{\sqrt{\pi}}{2}\right) + \mathcal{O}(a_h^2)$.
2. If $a_h \gg 1$ (Rough GP), asymptotic analysis yields $\mathcal{C}_d(a_h) = \frac{\log a_h}{\sqrt{2a_h}} + \mathcal{O}(a_h^{-1/2})$.

\subsection{Effective Sample Size for the Matérn Kernel Family (e.g., Exponential Kernel)}
\label{app:matern_derivation}

To demonstrate that our effective sample size framework seamlessly extends beyond the infinitely smooth Squared-Exponential (SE) kernel, we analyse the Matérn family of covariance functions. The Matérn class is governed by a smoothness parameter $\nu > 0$. As $\nu \to \infty$, it recovers the SE kernel. However, for finite $\nu$, the functions are rougher (only $\lfloor \nu - 0.5 \rfloor$ times mean-square differentiable).

A widely used instantiation is the Matérn-$1/2$ kernel, which corresponds to the Exponential (or Laplace) covariance function: $k_{1/2}(\mathbf{x}, \mathbf{x}') = \sigma_u^2 \exp\left( -\|\mathbf{x} - \mathbf{x}'\| / h \right)$. This models rough, non-differentiable processes.

For scalar inputs ($d=1$), we can derive an exact analytical closed form for the marginalised covariance, similar to the SE case. As established in \eqref{eq:app-gp-centered-covariance}, the difference between two inputs is $\Delta X = X_s - X_t \sim \mathcal{N}(0, s_k^2)$, where the variance is $s_k^2 = 2\sigma_x^2\{1 - r(k)\}$. The prior-averaged kernel covariance requires evaluating the Gaussian integral over the absolute value:
\begin{equation}
\mathbb{E}_{X} [k_{1/2}(X_s, X_t)] = \int_{-\infty}^{\infty} \sigma_u^2 \exp\left(-\frac{|x|}{h}\right) \frac{1}{\sqrt{2\pi s_k^2}} \exp\left(-\frac{x^2}{2s_k^2}\right) dx.
\end{equation}
By completing the square in the exponent and exploiting the symmetry of the Gaussian distribution, this integral evaluates exactly to:
\begin{equation}
\mathbb{E}_{X} [k_{1/2}(X_s, X_t)] = \sigma_u^2 \exp\left(\frac{s_k^2}{2h^2}\right) \operatorname{erfc}\left( \frac{s_k}{\sqrt{2}h} \right),
\end{equation}
where $\operatorname{erfc}(z) = \frac{2}{\sqrt{\pi}} \int_z^\infty e^{-t^2} dt$ is the complementary error function.

To express the induced autocorrelation $\bar{\rho}_u(k)$, let $\gamma_k = \frac{s_k}{\sqrt{2}h} = \frac{\sigma_x}{h}\sqrt{1 - r_k}$, and let $\gamma_\infty = \frac{\sigma_x}{h}$ denote the independent-input baseline limit. The exact centered autocorrelation for the Exponential GP is:
\begin{equation}
\bar{\rho}_{u,\text{Exp}}(k) =  \frac{ \exp(\gamma_k^2)\operatorname{erfc}(\gamma_k) - \exp(\gamma_\infty^2)\operatorname{erfc}(\gamma_\infty) }{ 1 - \exp(\gamma_\infty^2)\operatorname{erfc}(\gamma_\infty) }.
\label{eq:gp-exponential-rho}
\end{equation}

\paragraph{Asymptotic Smoothness Trade-off.} 
This closed-form derivation mathematically formalizes the smoothness trade-off. For large input deviations relative to the length-scale ($\gamma_k \gg 1$), the complementary error function admits the strict asymptotic expansion:
\begin{equation}
\exp(\gamma_k^2)\operatorname{erfc}(\gamma_k) = \frac{1}{\gamma_k \sqrt{\pi}} \left( 1 - \mathcal{O}(\gamma_k^{-2}) \right).
\end{equation}
Although single-lag autocorrelations for both kernels asymptotically scale as $\mathcal{O}(1/\gamma_k)$ for large input deviations (since $a_h\{1-r_k\} = 2\gamma_k^2$, yielding $(1+2\gamma_k^2)^{-1/2} = \frac{1}{\sqrt{2}\gamma_k}(1 - \mathcal{O}(\gamma_k^{-2}))$ for the SE model), their cumulative persistence sums behave fundamentally differently across lags. Because the SE kernel models an infinitely differentiable process, its spectral density decays super-exponentially; in the continuum limit, this parabolic smoothness near zero lag produces an extended cumulative dependence with an additional logarithmic enhancement $\mathcal{C}_d(a_h) = \frac{\log a_h}{\sqrt{2a_h}} + \mathcal{O}(a_h^{-1/2})$. Conversely, the non-differentiable Matérn-1/2 process introduces high-frequency roughness that enforces faster sequence decorrelation across neighboring points, eliminating this logarithmic inflation and resulting in a smaller infinite sum $\bar{S}$ and a larger, more favorable effective sample size.

\subsection{Asymptotic Analysis of the GP Effective Sample Size for Fixed Functions}
\label{app:general_fixed_function}

To contextualise the GP realisation results, consider a general fixed deterministic function evaluated over the same spatio-temporal data. Suppose the covariates are generated by a stationary Gaussian process $\mathbf{X}_s \sim \mathcal{N}(\bm{\mu}, \sigma_x^2 \mathbf{I}_d)$ with spatial/temporal covariance:
\begin{equation*}
\operatorname{Cov}(\mathbf{X}_s, \mathbf{X}_t) = \sigma_x^2 \mathbf{I}_d \exp\left(-\frac{\|s-t\|^2}{2\ell_x^2}\right),
\end{equation*}
where $\ell_x$ determines the persistence length. Assuming $f$ is continuously differentiable with bounded second derivatives, a first-order Taylor expansion around the mean $\bm{\mu}$ yields the leading term:
\begin{equation*}
\operatorname{Cov}(f(\mathbf{X}_s), f(\mathbf{X}_t)) = \sigma_x^2 \|\nabla f(\bm{\mu})\|^2 r(\|s-t\|) + \mathcal{O}(\sigma_x^4).
\end{equation*}
Consequently, the leading term of the function's autocorrelation strictly matches the input correlation up to higher-order residuals: $\rho_f(k) = r(k) + \mathcal{O}(\sigma_x^2)$.

For a squared-exponential input autocorrelation, the infinite autocorrelation sum is explicitly given by the Jacobi theta function $\vartheta_3$:
\begin{align*}
S(\ell_x) = \sum_{k=1}^{\infty} \exp\left(-\frac{k^2}{2\ell_x^2}\right) = \frac{1}{2} \left[ \vartheta_3\left(0, \exp\left\{-\frac{1}{2\ell_x^2}\right\}\right) - 1 \right].
\end{align*}
Therefore, the exact asymptotic effective sample size is expressed as:
\begin{align*}
n_{\mathrm{eff}}(\ell_x) = \frac{n}{1 + 2S(\ell_x)} = \frac{n}{\vartheta_3\left(0, \exp\left\{-1/(2\ell_x^2)\right\}\right)}.
\end{align*}
In the short-range limit ($\ell_x \to 0$), the first lag dominates, yielding $n_{\mathrm{eff}} = n(1 - \mathcal{O}(\exp(-1/2\ell_x^2)))$. Conversely, under long-range dependence ($\ell_x \to \infty$), approximating the sum with an integral via the Euler-Maclaurin formula gives $1 + 2S(\ell_x) = \sqrt{2\pi}\ell_x + \mathcal{O}(1)$. Thus, the effective sample size scales asymptotically as $n_{\mathrm{eff}}(\ell_x) = \frac{n}{\sqrt{2\pi}\ell_x + \mathcal{O}(1)}$.

\section{Extension to Hypothesis Classes via Empirical Processes}
\label{app:rademacher}

While Theorem~\ref{thm:kmm-dependent} provides a pointwise guarantee for a fixed regression function $m$ or a fixed loss $\ell$, evaluating a data-dependent learned predictor requires a simultaneous uniform guarantee over a hypothesis class $\mathcal{F}$. For independent data, this is governed by standard Rademacher complexity. For dependent $\beta$-mixing sequences, applying a naive substitution $n \rightsquigarrow n_{\mathrm{eff}}$ is theoretically insufficient. Instead, uniform convergence must be established using \emph{Block Rademacher Complexity} \citep{mohri2009rademacher, kuznetsov2015learning}.

Following the blocking construction from Lemma~\ref{lem:hilbert-mixing-blocking}, we partition the sequence into alternating blocks of size $q$, with $L = \lfloor n/(2q) \rfloor$. By Berbee's coupling lemma, we decouple the dependent blocks into an independent block sequence $B_1^\ast, \dots, B_L^\ast$, where each block contains $q$ observations: $B_j^\ast = (\mathbf{X}_{j,1}^\ast, \dots, \mathbf{X}_{j,q}^\ast)$. The empirical Block Rademacher Complexity of $\mathcal{F}$ is defined as:
\begin{equation}
\label{eq:block-rademacher}
\mathfrak{R}_{L}(\mathcal{F}, q) = \frac{1}{L} \mathbb{E}_{\bm{\sigma}} \left[ \sup_{f \in \mathcal{F}} \sum_{j=1}^{L} \sigma_j \left( \frac{1}{q} \sum_{k=1}^{q} f(\mathbf{X}_{j,k}^\ast) \right) \right],
\end{equation}
where $\sigma_1, \dots, \sigma_L$ are independent Rademacher variables.

Applying McDiarmid's inequality to the coupled independent blocks and accounting for odd and even sub-sequences via the triangle inequality, the uniform deviation over $\mathcal{F}$ is bounded with probability at least $1-\delta$ by:
\begin{equation}
\label{eq:uniform-mixing-bound}
\sup_{f\in\mathcal{F}} \left| \frac{1}{n}\sum_{i=1}^n f(\mathbf{X}_i) - \mathbb{E}[f(\mathbf{X})] \right| \le 2 \mathfrak{R}_{L}(\mathcal{F}, q) + 3 A \sqrt{\frac{\log(4/\delta)}{2 L}} + 2 L \beta(q),
\end{equation}
where $A = \sup_{f \in \mathcal{F}} (\sup_{\mathbf{x}} f(\mathbf{x}) - \inf_{\mathbf{x}} f(\mathbf{x}))$ denotes the uniform bound on the function range. By choosing the block size to satisfy $q = \lceil \frac{\log n + \log(4 c_1 / \delta)}{c_2} \rceil$, the mixing penalty becomes negligible: $2 L \beta(q) \le \delta/2$. The number of effective independent blocks $L = \lfloor n/(2q) \rfloor = \mathcal{O}(n / \log n)$ then governs the sample-complexity geometry of the uniform bound.

\paragraph{Application to RKHS and GP Function Classes.}
For an RKHS ball $\mathcal{F}_R = \{f \in \mathcal{H} : \|f\|_{\mathcal{H}} \le R\}$ with a kernel bounded by $C^2$, let $\bar{\Phi}_j = \frac{1}{q} \sum_{k=1}^q \Phi(\mathbf{X}_{j,k}^\ast)$ be the block feature mean of the $j$-th block. Since $\|\Phi(\mathbf{x})\|_{\mathcal{H}} \le C$, it follows that $\|\bar{\Phi}_j\|_{\mathcal{H}} \le C$. By Cauchy-Schwarz, the block Rademacher complexity evaluates cleanly to:
\begin{equation}
\mathfrak{R}_{L}(\mathcal{F}_R, q) = \frac{1}{L} \mathbb{E}_{\bm{\sigma}} \left[ \sup_{\|f\|_{\mathcal{H}} \le R} \left\langle f, \sum_{j=1}^L \sigma_j \bar{\Phi}_j \right\rangle_{\mathcal{H}} \right] \le \frac{R}{L} \mathbb{E}_{\bm{\sigma}} \left\| \sum_{j=1}^L \sigma_j \bar{\Phi}_j \right\|_{\mathcal{H}} \le \frac{RC}{\sqrt{L}}.
\end{equation}
Notice that replacing the nominal sample size $n$ with the effective number of blocks $L = \mathcal{O}(n / \log n)$ inflates the statistical complexity, mirroring the effective sample size deflation observed in our pointwise bounds. 

This formulation mathematically connects KMM risk estimation with empirical process theory under dependent sampling. While the block size $q$ is determined entirely by the physical mixing rate of the covariate process $\{\mathbf{X}_t\}$, functional smoothness directly controls the complexity radius $R = \|f\|_{\mathcal{H}}$: smoother function classes enforce smaller RKHS norms, thereby substantially reducing the complexity numerator $\mathfrak{R}_L$. However, as established in Section~\ref{sec:density-smoothness}, smoother functions simultaneously retain more intra-sequence autocorrelation, deflating the pointwise variance-based $n_{\mathrm{eff}}$. Across both pointwise and uniform bounds, the density-ratio overlap penalty $B$ remains an irreducible multiplicative scale, confirming that neither hypothesis smoothness nor complex empirical-process machinery can bypass severe support mismatch.

\section{Artificial Data Experimental Results and Implementation Details}
\label{app:additional_experiments}

This section provides a comprehensive breakdown of the synthetic experimental setups, expanded empirical validations, and rigorous robustness stress tests that were omitted from the main text due to space constraints. We first detail the exact data-generating processes, followed by the validation of the Target $Y$ effective sample size. We then present full-page heatmaps of our theoretical calibration and conclude with extensive ablations over hyperparameters, non-differentiable functional families, and non-stationary correlation structures.

\subsection{Detailed Synthetic Experimental Setup}
\label{app:exp_setup}

To thoroughly evaluate the statistical properties of our dependent KMM framework, we construct a highly controlled synthetic environment where ground-truth expectations, density ratios, and effective sample sizes are analytically tractable. 

\paragraph{Data-Generating Process for Covariates}
We generate continuous covariate sequences $X_t \in \mathbb{R}$ for our 1D benchmark sweeps and hyperparameter ablations (extended to $\mathbf{X}_t \in \mathbb{R}^2$ for the 2D spatial calibration heatmaps in Appendix~\ref{app:heatmaps}). To simulate temporal or spatial persistence, source covariates are sampled along a stationary Gaussian trajectory:
\begin{equation}
X_t^{\mathrm{tr}} = \mu_{\mathrm{tr}} + \tau_{\mathrm{tr}}\xi_t, \quad \text{where} \quad \bm{\xi} \sim \mathcal{N}(\mathbf{0}, \mathbf{\Sigma}).
\end{equation}
The covariance matrix $\mathbf{\Sigma} \in \mathbb{R}^{n_{\mathrm{tr}} \times n_{\mathrm{tr}}}$ enforces a Squared-Exponential (RBF) autocorrelation structure across sequence indices:
\begin{equation}
\operatorname{Cov}(\xi_s, \xi_t) = \sigma_0^2 \exp\left(-\frac{|s-t|^2}{2\ell_x^2}\right).
\end{equation}
Here, $\ell_x$ dictates the input correlation length and $\sigma_0^2 = 1.0$ is the baseline variance. Target covariates $X_t^{\mathrm{te}}$ are sampled under covariate shift with marginal mean $\mu_{\mathrm{te}}$. In our baseline benchmark sweeps, we set default sample sizes $n_{\mathrm{tr}} = n_{\mathrm{te}} = 96$, shift parameters $\mu_{\mathrm{tr}} = 0.0$, $\mu_{\mathrm{te}} = 0.85$, scaling factors $\tau_{\mathrm{tr}} = \tau_{\mathrm{te}} = 1.0$, and default sequence correlation length $\ell_x = 3.0$.

\paragraph{Response Surfaces and Estimands}
To evaluate the GP-induced smoothness correction, the true response is tested under two distinct structural environments:
\begin{enumerate}
    \item \textbf{Matched GP Realization}: The raw response $f_{\mathrm{raw}}$ is synthesized via Random Fourier Features (using 512 Fourier basis functions) matching a stationary Gaussian Process with length-scale $h$ ($h = 1.5$ by default). To enforce realistic bounded support, the function is passed through a non-linear activation:
    \begin{equation}
    f_0(x) = 0.5 + 0.4\tanh(0.9x + f_{\mathrm{raw}}(x)).
    \end{equation}
    Observed targets include additive Gaussian noise $Y = \operatorname{clip}(f_0(X) + \varepsilon, 0, 1)$ with $\varepsilon \sim \mathcal{N}(0, \sigma_\varepsilon^2)$ and noise level $\sigma_\varepsilon = 0.04$.
    \item \textbf{Deterministic Misspecified Surface}: To verify robustness outside GP priors, we also evaluate a fixed deterministic harmonic function $f_0(\mathbf{x}) = \sin(\omega x_1) + 0.5\cos(2\omega x_2)$ with spatial frequency $\omega \in [0.5, 3.0]$.
\end{enumerate}
For target mean estimation, we evaluate the raw response $Y$. For target risk estimation, we utilize a fixed, imperfect predictor $g(x) = 0.65 f_0(x) + 0.15$ and evaluate the squared loss $\ell(g(x), y) = (g(x) - y)^2$. 

Ground-truth target expectations $\theta_Y = \mathbb{E}_{P_{\mathrm{te}}}[Y]$ and $\theta_{\mathrm{Risk}} = \mathbb{E}_{P_{\mathrm{te}}}[\ell(g(X), Y)]$ are computed via high-precision Monte Carlo integration over $150{,}000$ independent target samples drawn from $P_{\mathrm{te}}$.

\paragraph{KMM Optimization and Variogram Calibration}
Kernel Mean Matching weights $\mathbf{w} \in \mathbb{R}^{n_{\mathrm{tr}}}$ are estimated by solving the quadratic program:
\begin{equation}
\min_{\mathbf{w}} \frac{1}{2}\mathbf{w}^\top \left(\mathbf{K}_{ss} + \lambda \mathbf{I}\right) \mathbf{w} - \bm{\kappa}^\top \mathbf{w} \quad \text{s.t.} \quad 0 \le w_i \le B, \quad \sum_{i=1}^{n_{\mathrm{tr}}} w_i = n_{\mathrm{tr}},
\end{equation}
where $\kappa_i = \frac{n_{\mathrm{tr}}}{n_{\mathrm{te}}} \sum_{j=1}^{n_{\mathrm{te}}} k(X_i^{\mathrm{tr}}, X_j^{\mathrm{te}})$. We utilize an RBF kernel $k(x, x') = \exp(-(x-x')^2 / (2\sigma^2))$ with bandwidth $\sigma = 1.0$, ridge regularization $\lambda = 10^{-4}$, and default clipping bound $B = 15.0$. The optimization problem is solved using the interior-point solver in \texttt{cvxopt}.

To isolate confidence radius calibration from empirical variogram estimation noise during finite sweeps, the functional length-scales $h_Y$ and $h_\ell$ are pre-calibrated via empirical variogram regression over an independent calibration pool of 2,500 source observations. All Monte Carlo sweeps are conducted over $N = 300$ independent trials initialized with random seed 42.

\paragraph{Baseline Formulations and Implementation}
For robust benchmarking, we compare our proposed GP-calibrated KMM radius against multiple standard estimators:
\begin{itemize}
    \item \textbf{Unweighted}: Direct sample average over source data $\frac{1}{n_{\mathrm{tr}}} \sum_{i=1}^{n_{\mathrm{tr}}} Y_i^{\mathrm{tr}}$, which ignores covariate shift.
    \item \textbf{Nominal $n$}: The classical i.i.d. KMM confidence interval, which weights samples via KMM but uses the nominal sample size $n_{\mathrm{tr}}$ in Wald's variance estimator.
    \item \textbf{Input $n_{\mathrm{eff}}$}: A dependence correction that penalizes variance using the raw input autocorrelation $r_k$: $n_{\mathrm{eff}}(\ell_x) = n_{\mathrm{tr}} / [1 + 2\sum_{k=1}^{n_{\mathrm{tr}}-1}(1 - k/n_{\mathrm{tr}})r_k]$, ignoring functional smoothness.
    \item \textbf{Block-Bootstrap}: A moving-block bootstrap with block length $q = \lceil n_{\mathrm{tr}}^{1/3} \rceil$ ($q=6$ at default $n=96$) using 200 bootstrap resamples.
    \item \textbf{Strict Bound (Theorem~\ref{thm:kmm-dependent})}: The non-asymptotic concentration bound evaluating the conservative radius $B \sqrt{\log(6/\alpha) / (2 n_{\mathrm{eff}}^{\mathrm{GP}})}$.
    \item \textbf{Oracle IW}: Importance weighting using the true analytical density ratio $w^*(x) = p_{\mathrm{te}}(x) / p_{\mathrm{tr}}(x)$.
    \item \textbf{Oracle i.i.d.}: Target mean and risk evaluated directly on independent target samples drawn from $P_{\mathrm{te}}$.
\end{itemize}

\subsection{Extended Evaluation Metrics and Full Benchmark Results}
\label{app:full_metrics}

In Section~\ref{sec:experiments}, we focused on Coverage and Radius to evaluate the confidence intervals, as point estimates are identical across all KMM-based variance estimators. Table~\ref{tab:main-benchmark-full} provides the complete set of evaluation metrics, including Mean Absolute Error (MAE) for point estimation and the Winkler Score (WS) for comprehensive interval evaluation.

The Winkler Score \citep{winkler1972decision} is a proper scoring rule for prediction intervals that rewards narrow intervals but heavily penalizes intervals that fail to cover the true value. For a nominal $1-\alpha$ confidence interval $[L, U]$ (where $\alpha=0.05$) and a true target value $y$, it is defined as:
\begin{equation}
\text{WS} = (U - L) + \frac{2}{\alpha}(L - y)\mathbb{I}(y < L) + \frac{2}{\alpha}(y - U)\mathbb{I}(y > U).
\end{equation}
Lower WS indicates better interval calibration. As shown in Table~\ref{tab:main-benchmark-full}, while the \textit{Strict Bound (Theorem~\ref{thm:kmm-dependent})} guarantees full coverage, its massive worst-case width results in a severely degraded WS ($>10.0$). Similarly, while \textit{Input $n_{\mathrm{eff}}$} is valid, it produces unnecessarily wide intervals (WS: $0.339$ for $Y$; $0.031$ for Risk). Our proposed GP $n_{\mathrm{eff}}$ provides the optimal balance, achieving the lowest WS ($0.276$ for $Y$; $0.026$ for Risk) among valid estimators by analytically incorporating functional decorrelation.

\begin{table*}[ht]
\centering
\caption{\textbf{Full Benchmark Results including Point Estimation and Winkler Score.} Point estimates (MAE) are identical for Nominal $n$, Block-Bootstrap, Input $n_{\mathrm{eff}}$, GP $n_{\mathrm{eff}}$, and the Strict Bound, as they rely on the exact same KMM weights. An asterisk ($^*$) indicates empirical undercoverage significantly below the nominal 0.95 level ($< 0.925$). Winkler Score (WS, lower is better) penalizes both interval width and undercoverage. Bold highlights the best valid practical estimator (tightest radius and lowest WS while maintaining valid coverage). Strict Thm 1 and Oracle methods are excluded from best-method highlighting.}
\label{tab:main-benchmark-full}
\vspace{1.5mm}
\resizebox{\textwidth}{!}{%
\begin{tabular}{@{}lcccccccc@{}}
\toprule
& \multicolumn{4}{c}{\textbf{Target Mean ($Y$)}} & \multicolumn{4}{c}{\textbf{Target Risk ($L$)}} \\
\cmidrule(lr){2-5} \cmidrule(lr){6-9}
\textbf{Method} & \textbf{MAE} & \textbf{Coverage} & \textbf{Radius} & \textbf{WS} $\downarrow$ & \textbf{MAE} & \textbf{Coverage} & \textbf{Radius} & \textbf{WS} $\downarrow$ \\
\midrule
Unweighted         & 0.1138 & 0.017$^*$ & 0.0362 & 3.182 & 0.0048 & 0.080$^*$ & 0.0019 & 0.119 \\
Nominal $n$        & 0.0196 & 0.917$^*$ & 0.0626 & 0.134 & 0.0025 & 0.853$^*$ & 0.0055 & 0.024 \\
Block-Bootstrap    & 0.0196 & 1.000     & 0.4538 & 0.908 & 0.0025 & 0.967     & 0.0133 & \textbf{0.026} \\
Input $n_{\mathrm{eff}}$ & 0.0196 & 1.000     & 0.1696 & 0.339 & 0.0025 & 0.987     & 0.0150 & 0.031 \\
\textbf{GP $n_{\mathrm{eff}}$ (Ours)} & 0.0196 & 1.000 & \textbf{0.1380} & \textbf{0.276} & 0.0025 & 0.970 & 0.0120 & \textbf{0.026} \\
Strict Thm 1       & 0.0196 & 1.000     & 5.2189 & 10.438& 0.0025 & 1.000     & 5.1220 & 10.244 \\
\midrule
Oracle IW          & 0.0334 & 0.477$^*$ & 0.0275 & 0.590 & 0.0027 & 0.627$^*$ & 0.0033 & 0.033 \\
Oracle i.i.d.      & 0.0115 & 0.903$^*$ & 0.0257 & 0.072 & 0.0010 & 0.917$^*$ & 0.0024 & 0.006 \\
\bottomrule
\end{tabular}}
\end{table*}

\subsection{Validation under Deterministic Misspecification}
\label{app:target_det_validation}

In the main text (Figure~\ref{fig:neff_validation}), we demonstrated that our closed-form GP effective sample size formula accurately matches empirical variance when the underlying response surface is drawn from a matched Gaussian Process. 

To verify that this theoretical result is robust and not merely an artifact of Gaussian assumptions, Figure~\ref{fig:neff_det} presents the validation on a deterministic harmonic surface $f_0(\mathbf{x}) = \sin(\omega x_1) + 0.5\cos(2\omega x_2)$. Despite the absence of a stochastic GP prior generating the labels, the empirical variance-based $n_{\mathrm{eff}}$ for both Target $Y$ and Risk evaluation remains tightly concentrated along the diagonal identity line. 
This indicates that the GP surrogate can remain informative beyond the matched GP setting, although its accuracy varies with the dependence and functional regimes considered below.

\begin{figure}[!htb]
\centering
\includegraphics[width=0.6\textwidth]{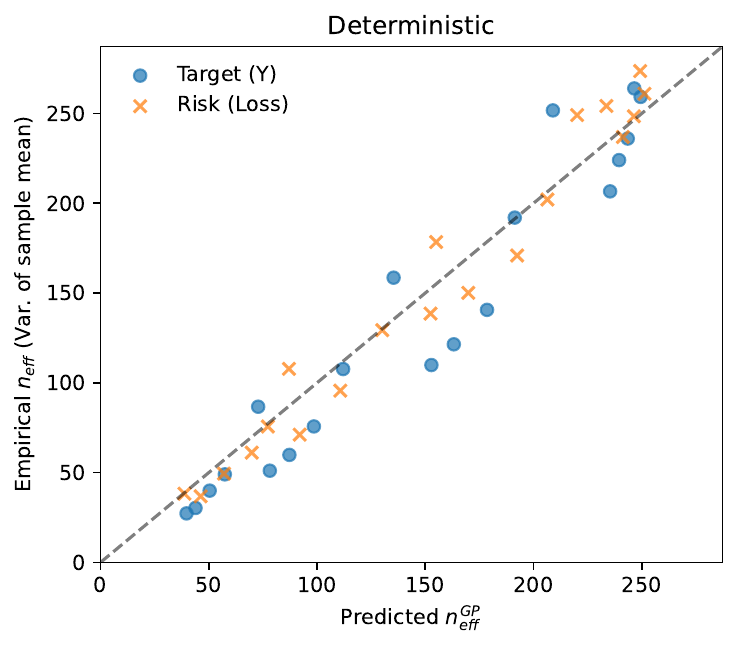}
\caption{\textbf{Validation under Deterministic Misspecification.} Empirical variance-based $n_{\mathrm{eff}}$ vs. our predicted $n_{\mathrm{eff}}^{\mathrm{GP}}$ for both Target $Y$ and Risk when evaluated on a deterministic, non-GP harmonic surface. The surrogate formula remains highly predictive.}
\label{fig:neff_det}
\end{figure}

\subsection{Effective Sample Size Heatmaps}
\label{app:heatmaps}

To ensure that the excellent calibration observed in our scatter plots is not an artifact of a specific, "lucky" hyperparameter combination, we evaluate the ratio of the empirical effective sample size to our predicted effective sample size: $n_{\mathrm{eff}}^{\mathrm{emp}} / n_{\mathrm{eff}}^{\mathrm{GP}}(h)$. A ratio of exactly $1.0$ indicates perfect theoretical calibration. 

We compute this ratio across a dense, two-dimensional grid varying both the input dependence $\ell_x \in [0.5, 4.0]$ and the functional complexity parameters ($h \in [0.5, 4.0]$ for GP realizations, and spatial frequency $\omega \in [0.5, 3.0]$ for deterministic surfaces). 


Figures~\ref{fig:heatmap_gp} and~\ref{fig:heatmap_det} display these heatmaps across the full parameter grids. The empirical to-predicted ratio is mostly within approximately $[0.7,1.25]$, although larger deviations occur in some strong-dependence and smooth-function regimes. Ratios below one indicate that the closed-form approximation overestimates the empirical effective sample size in these settings. Overall, the results show that the GP approximation tracks the empirical effective sample size across a broad range of regimes, while also identifying conditions under which its calibration deteriorates.

\begin{figure*}[!htb]
\centering
\includegraphics[width=0.9\textwidth]{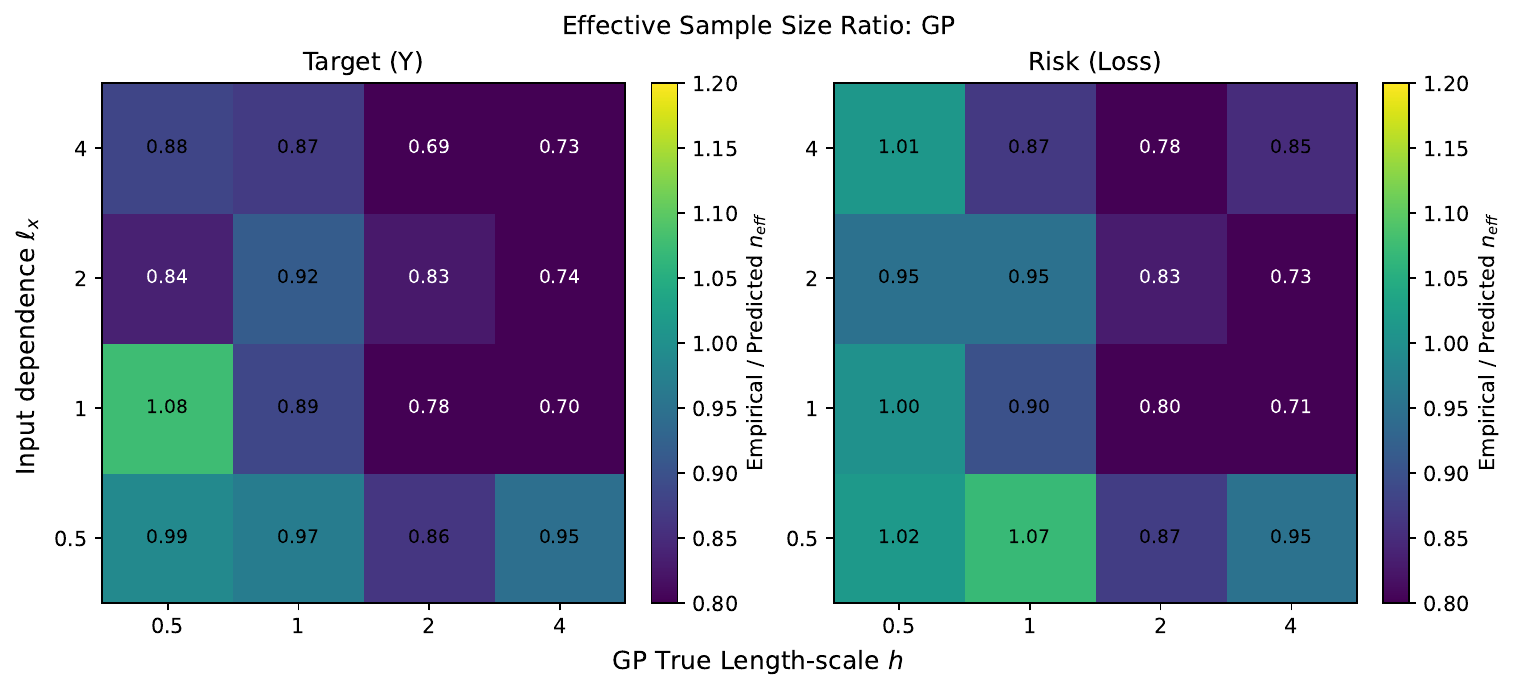}
\caption{\textbf{GP Realizations (Full Grid).}
Ratio of the empirical to predicted $n_{\mathrm{eff}}$ across diverse GP
length-scales ($h$) and input dependencies ($\ell_x$). Ratios near $1.0$
indicate close calibration. The approximation tracks the empirical effective sample size across much of the grid, with larger deviations in some strong-dependence and smooth-function regimes.}
\label{fig:heatmap_gp}
\end{figure*}

\begin{figure*}[!htb]
\centering
\includegraphics[width=0.9\textwidth]{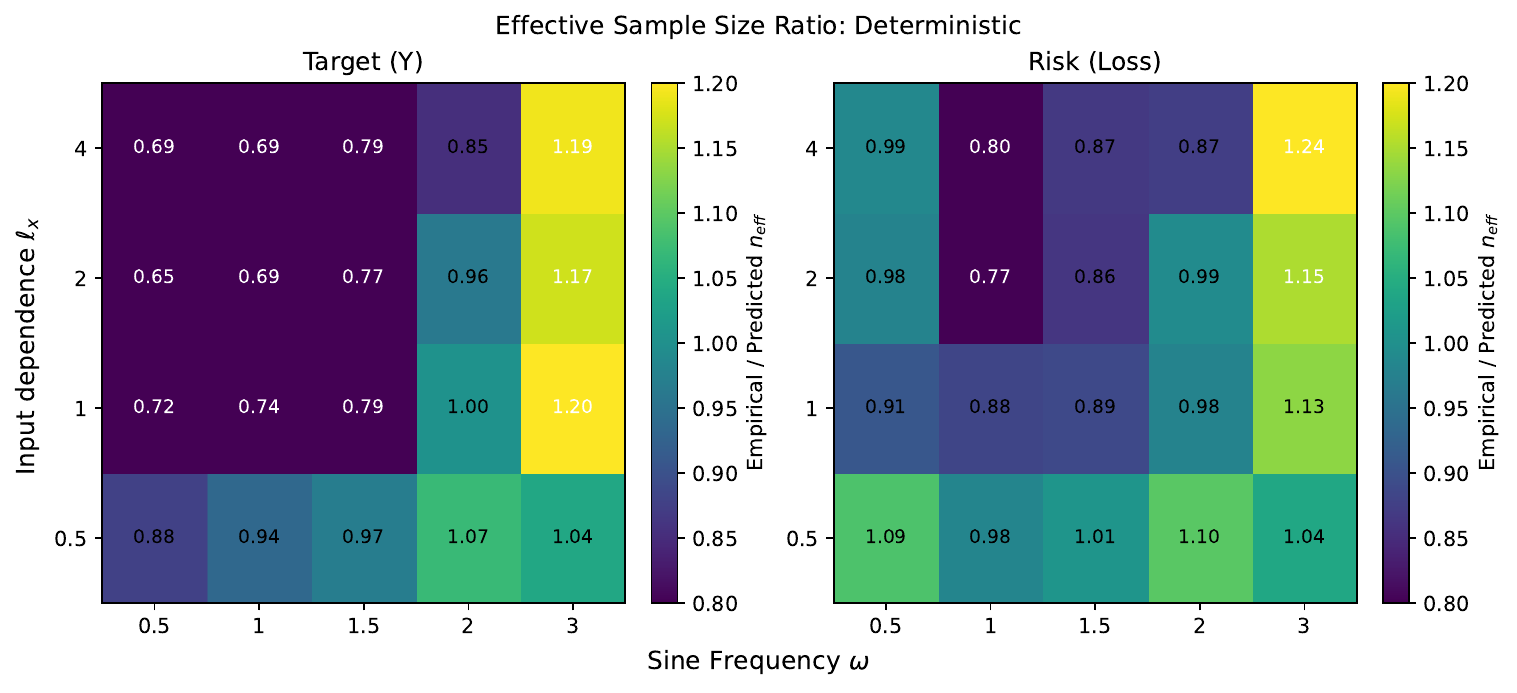}
\caption{\textbf{Deterministic Misspecification (Full Grid).} Ratio of the empirical to predicted $n_{\mathrm{eff}}$ when the true function is a deterministic harmonic surface with varying frequency $\omega$. The GP surrogate remains informative beyond the matched stochastic-GP setting, although its calibration varies across dependence and frequency regimes.}
\label{fig:heatmap_det}
\end{figure*}

\subsection{Hyperparameters Ablations}
\label{app:param_ablations}

To further dissect the behavior of the GP-induced effective sample size, we evaluate the KMM confidence intervals across comprehensive sweeps of sample size ($n$, Table~\ref{tab:n_scaling_risk}), input dependence severity ($\ell_x$), functional smoothness ($h$), and maximum clipping bounds ($B$). Across all ablations, we report interval coverage, interval radius, and the Winkler Score (WS).

We evaluate interval calibration using the Winkler Score (WS; detailed in Appendix~\ref{app:full_metrics}), which penalizes intervals that are either overly wide or fail to cover the target.

To evaluate validity, we track empirical coverage over $N = 300$ independent Monte Carlo repetitions. For nominal $0.95$ intervals, the binomial two-sigma lower bound is approximately $0.95 - 1.96 \sqrt{0.95 \times 0.05 / 300} \approx 0.925$. We flag any coverage falling below this threshold with an asterisk ($^*$) as statistically significant undercoverage.

\begin{itemize}
    \item \textbf{Sample Size on Risk ($n$, Table~\ref{tab:n_scaling_risk})}: 
    Nominal~$n$ fails across all evaluated sample sizes (collapsing to $0.647$ at $n=32$ and reaching only $0.913^*$ at $n=256$), demonstrating that sample size alone cannot cure dependence-induced undercoverage. At $n=32$, finite-sample estimation violates the moderate-deviation burn-in threshold $n/(\log n)^4 \ge \text{const}$ (Theorem~\ref{thm:kmm-dependent}), exhibiting expected transient undercoverage ($0.850$). Across all operational regimes ($n \ge 48$), GP~$n_{\mathrm{eff}}$ reliably maintains nominal coverage ($\ge 0.943$, reaching $0.997$ at $n=256$) while consistently yielding the sharpest valid intervals and superior Winkler Scores (up to $21\%$ narrower radius than Input~$n_{\mathrm{eff}}$).
    
    \item \textbf{Input Dependence on Risk ($\ell_x$, Table~\ref{tab:ablation-ellx})}: 
    As correlation length scales from $\ell_x = 1.5$ to $6.0$, raw covariate autocorrelation intensifies and Nominal~$n$ fails across all configurations (dropping to $0.767^*$ coverage at $\ell_x=4.0$ and remaining undercovered throughout). In contrast, GP~$n_{\mathrm{eff}}$ dynamically adapts to sequence persistence. Across all moderate-to-strong dependence regimes ($\ell_x \ge 2.0$), GP~$n_{\mathrm{eff}}$ achieves robust valid coverage ($\ge 0.947$) while consistently yielding approximately $21\%$ narrower intervals and superior Winkler Scores compared to the overly conservative Input~$n_{\mathrm{eff}}$. At weak dependence ($\ell_x=1.5$), where decorrelation is fast and empirical estimation adds relative noise, GP~$n_{\mathrm{eff}}$ exhibits a minor coverage dip ($0.923^*$), making Input~$n_{\mathrm{eff}}$ the preferred valid estimator ($0.953$), though GP~$n_{\mathrm{eff}}$ still achieves the lowest Winkler Score ($0.0159$).
    
    \item \textbf{Functional Smoothness on Risk ($h$, Table~\ref{tab:ablation-h})}: 
    This sweep empirically validates the Smoothness-Dependence Tradeoff (Section~\ref{sec:density-smoothness}). Because naive Input~$n_{\mathrm{eff}}$ accounts for raw coordinate autocorrelation, it produces an entirely static variance correction that ignores response smoothness, yielding overly conservative intervals. Meanwhile, Nominal~$n$ fails across all bandwidths (consistently under-covering below $0.918^*$, dropping to $0.827^*$). In stark contrast, GP~$n_{\mathrm{eff}}$ dynamically adapts to the functional length-scale $h$, identifying that the error surface decorrelates faster than raw inputs. Consequently, GP~$n_{\mathrm{eff}}$ dominates across all evaluated length-scales, maintaining robust coverage ($\ge 0.970$) while shrinking the confidence radius by $17\%$ to $28\%$ and achieving substantially lower Winkler Scores.
    
    \item \textbf{Clipping Bound on Risk ($B$, Table~\ref{tab:ablation-B})}: 
    The clipping parameter $B$ dictates the maximum allowable importance weight in KMM, controlling the bias-variance trade-off under domain overlap. As $B$ scales from $5.0$ to $30.0$, Nominal~$n$ fails across all thresholds (dropping to $0.780^*$ at $B=5.0$ and $0.790^*$ at $B=30.0$). In contrast, GP~$n_{\mathrm{eff}}$ demonstrates complete stability across all bounds, preserving a consistent variance penalty regardless of the clipping level. This empirically confirms the theoretical separation of density-ratio overlap ($B$) from sequence persistence in Theorem~\ref{thm:kmm-dependent}. Across the entire sweep, GP~$n_{\mathrm{eff}}$ maintains robust valid coverage ($\ge 0.947$, reaching $0.980$) while consistently shrinking interval radius by approximately $21\%$ and yielding substantially lower Winkler Scores than Input~$n_{\mathrm{eff}}$.
\end{itemize}

\begin{table*}[!htbp]
\centering
\caption{\textbf{Sample-size scaling of Target Risk intervals ($n \in \{32, \dots, 256\}$).} Bold highlights the best valid practical estimator (tightest radius and lowest Winkler Score among methods maintaining valid coverage). Asterisks ($^*$) indicate significant empirical undercoverage below the nominal 0.95 level ($< 0.925$). Strict Thm 1 is excluded from best-method highlighting due to worst-case conservatism.}
\label{tab:n_scaling_risk}
\vspace{1.5mm}
\resizebox{0.70\textwidth}{!}{
\begin{tabular}{@{}llccc@{}}
\toprule
$\bm{n}$ & \textbf{Method} & \textbf{Coverage} & \textbf{Radius} & \textbf{Winkler Score} $\downarrow$ \\
\midrule
\multirow{5}{*}{32} 
& Unweighted         & 0.717$^*$ & 0.0026 & 0.0201 \\
& Nominal $n$        & 0.647$^*$ & 0.0047 & 0.0394 \\
& Input $n_{\mathrm{eff}}$ & 0.893$^*$ & 0.0123 & 0.0325 \\
& GP $n_{\mathrm{eff}}$ (Ours) & 0.850$^*$ & 0.0098 & 0.0313 \\
\midrule
\multirow{5}{*}{48} 
& Unweighted         & 0.727$^*$ & 0.0022 & 0.0143 \\
& Nominal $n$        & 0.773$^*$ & 0.0048 & 0.0275 \\
& Input $n_{\mathrm{eff}}$ & 0.910$^*$ & 0.0128 & 0.0290 \\
& \textbf{GP $n_{\mathrm{eff}}$ (Ours)} & \textbf{0.943} & \textbf{0.0101} & \textbf{0.0260} \\
\midrule
\multirow{5}{*}{64} 
& Unweighted         & 0.773$^*$ & 0.0020 & 0.0124 \\
& Nominal $n$        & 0.773$^*$ & 0.0046 & 0.0218 \\
& Input $n_{\mathrm{eff}}$ & 0.980     & 0.0124 & 0.0258 \\
& \textbf{GP $n_{\mathrm{eff}}$ (Ours)} & \textbf{0.947} & \textbf{0.0098} & \textbf{0.0217} \\
\midrule
\multirow{5}{*}{96} 
& Unweighted         & 0.770$^*$ & 0.0016 & 0.0101 \\
& Nominal $n$        & 0.803$^*$ & 0.0043 & 0.0188 \\
& Input $n_{\mathrm{eff}}$ & 0.973     & 0.0117 & 0.0251 \\
& \textbf{GP $n_{\mathrm{eff}}$ (Ours)} & \textbf{0.957} & \textbf{0.0092} & \textbf{0.0213} \\
\midrule
\multirow{5}{*}{128} 
& Unweighted         & 0.750$^*$ & 0.0014 & 0.0083 \\
& Nominal $n$        & 0.823$^*$ & 0.0038 & 0.0155 \\
& Input $n_{\mathrm{eff}}$ & 0.983     & 0.0104 & 0.0220 \\
& \textbf{GP $n_{\mathrm{eff}}$ (Ours)} & \textbf{0.970} & \textbf{0.0082} & \textbf{0.0181} \\
\midrule
\multirow{5}{*}{192} 
& Unweighted         & 0.750$^*$ & 0.0011 & 0.0071 \\
& Nominal $n$        & 0.887$^*$ & 0.0033 & 0.0109 \\
& Input $n_{\mathrm{eff}}$ & 0.987     & 0.0091 & 0.0187 \\
& \textbf{GP $n_{\mathrm{eff}}$ (Ours)} & \textbf{0.980} & \textbf{0.0072} & \textbf{0.0152} \\
\midrule
\multirow{5}{*}{256} 
& Unweighted         & 0.743$^*$ & 0.0010 & 0.0062 \\
& Nominal $n$        & 0.913$^*$ & 0.0031 & 0.0076 \\
& Input $n_{\mathrm{eff}}$ & 1.000     & 0.0085 & 0.0170 \\
& \textbf{GP $n_{\mathrm{eff}}$ (Ours)} & \textbf{0.997} & \textbf{0.0067} & \textbf{0.0134} \\
\bottomrule
\end{tabular}}
\end{table*}

\begin{table*}[!htbp]
\centering
\caption{\textbf{Hyperparameter Ablation: Input Dependence ($\ell_x \in [1.5, 6.0]$) on Target Risk.} Asterisks ($^*$) indicate significant empirical undercoverage below the nominal 0.95 level ($< 0.925$). Bold highlights the best valid practical estimator (tightest radius and lowest Winkler Score among methods maintaining valid coverage). Strict Thm 1 is excluded from best-method highlighting due to worst-case conservatism.}
\label{tab:ablation-ellx}
\vspace{2mm}
\resizebox{0.70\textwidth}{!}{%
\begin{tabular}{@{}llccc@{}}
\toprule
$\bm{\ell_x}$ & \textbf{Method} & \textbf{Coverage} & \textbf{Radius} & \textbf{Winkler Score} $\downarrow$ \\
\midrule
1.5 
& Unweighted         & 0.843$^*$ & 0.0016 & 0.0061 \\
& Nominal $n$        & 0.840$^*$ & 0.0041 & 0.0161 \\
& \textbf{Input $n_{\mathrm{eff}}$} & \textbf{0.953} & \textbf{0.0078} & \textbf{0.0176} \\
& GP $n_{\mathrm{eff}}$ (Ours) & 0.923$^*$ & 0.0062 & 0.0159 \\
\midrule
2.0 
& Unweighted         & 0.843$^*$ & 0.0016 & 0.0068 \\
& Nominal $n$        & 0.870$^*$ & 0.0043 & 0.0143 \\
& Input $n_{\mathrm{eff}}$ & 0.987     & 0.0096 & 0.0204 \\
& \textbf{GP $n_{\mathrm{eff}}$ (Ours)} & \textbf{0.957} & \textbf{0.0076} & \textbf{0.0170} \\
& Strict Thm 1       & 1.000     & 4.1488 & 8.2975 \\
\midrule
3.0 
& Unweighted         & 0.767$^*$ & 0.0016 & 0.0091 \\
& Nominal $n$        & 0.803$^*$ & 0.0040 & 0.0184 \\
& Input $n_{\mathrm{eff}}$ & 0.977     & 0.0110 & 0.0226 \\
& \textbf{GP $n_{\mathrm{eff}}$ (Ours)} & \textbf{0.947} & \textbf{0.0086} & \textbf{0.0191} \\
\midrule
4.0 
& Unweighted         & 0.700$^*$ & 0.0016 & 0.0154 \\
& Nominal $n$        & 0.767$^*$ & 0.0039 & 0.0216 \\
& Input $n_{\mathrm{eff}}$ & 0.980     & 0.0120 & 0.0246 \\
& \textbf{GP $n_{\mathrm{eff}}$ (Ours)} & \textbf{0.950} & \textbf{0.0095} & \textbf{0.0210} \\
\midrule
5.0 
& Unweighted         & 0.680$^*$ & 0.0016 & 0.0164 \\
& Nominal $n$        & 0.773$^*$ & 0.0039 & 0.0244 \\
& Input $n_{\mathrm{eff}}$ & 0.980     & 0.0134 & 0.0282 \\
& \textbf{GP $n_{\mathrm{eff}}$ (Ours)} & \textbf{0.953} & \textbf{0.0106} & \textbf{0.0236} \\
\midrule
6.0 
& Unweighted         & 0.620$^*$ & 0.0016 & 0.0183 \\
& Nominal $n$        & 0.793$^*$ & 0.0042 & 0.0199 \\
& Input $n_{\mathrm{eff}}$ & 0.997     & 0.0161 & 0.0322 \\
& \textbf{GP $n_{\mathrm{eff}}$ (Ours)} & \textbf{0.987} & \textbf{0.0127} & \textbf{0.0259} \\
\bottomrule
\end{tabular}}
\end{table*}

\begin{table*}[!htbp]
\centering
\caption{\textbf{Hyperparameter Ablation: Functional Smoothness ($h \in [1.0, 3.0]$) on Target Risk.} Asterisks ($^*$) indicate significant empirical undercoverage below the nominal 0.95 level ($< 0.925$). Bold highlights the best valid practical estimator (tightest radius and lowest Winkler Score among methods maintaining valid coverage).}
\label{tab:ablation-h}
\vspace{2mm}
\resizebox{0.70\textwidth}{!}{%
\begin{tabular}{@{}llccc@{}}
\toprule
$\bm{h}$ & \textbf{Method} & \textbf{Coverage} & \textbf{Radius} & \textbf{Winkler Score} $\downarrow$ \\
\midrule
\multirow{4}{*}{1.00} 
& Unweighted         & 0.173$^*$ & 0.0023 & 0.0749 \\
& Nominal $n$        & 0.910$^*$ & 0.0062 & 0.0194 \\
& Input $n_{\mathrm{eff}}$ & 0.993     & 0.0167 & 0.0338 \\
& \textbf{GP $n_{\mathrm{eff}}$ (Ours)} & \textbf{0.983} & \textbf{0.0129} & \textbf{0.0271} \\
\midrule
\multirow{4}{*}{1.25} 
& Unweighted         & 0.047$^*$ & 0.0018 & 0.1597 \\
& Nominal $n$        & 0.917$^*$ & 0.0072 & 0.0183 \\
& Input $n_{\mathrm{eff}}$ & 0.993     & 0.0195 & 0.0395 \\
& \textbf{GP $n_{\mathrm{eff}}$ (Ours)} & \textbf{0.990} & \textbf{0.0156} & \textbf{0.0320} \\
\midrule
\multirow{4}{*}{1.50} 
& Unweighted         & 0.037$^*$ & 0.0015 & 0.1371 \\
& Nominal $n$        & 0.827$^*$ & 0.0057 & 0.0233 \\
& Input $n_{\mathrm{eff}}$ & 0.983     & 0.0154 & 0.0316 \\
& \textbf{GP $n_{\mathrm{eff}}$ (Ours)} & \textbf{0.980} & \textbf{0.0128} & \textbf{0.0272} \\
\midrule
\multirow{4}{*}{1.75} 
& Unweighted         & 0.137$^*$ & 0.0028 & 0.1304 \\
& Nominal $n$        & 0.907$^*$ & 0.0070 & 0.0188 \\
& Input $n_{\mathrm{eff}}$ & 0.997     & 0.0189 & 0.0378 \\
& \textbf{GP $n_{\mathrm{eff}}$ (Ours)} & \textbf{0.990} & \textbf{0.0136} & \textbf{0.0281} \\
\midrule
\multirow{4}{*}{2.00} 
& Unweighted         & 0.277$^*$ & 0.0017 & 0.0536 \\
& Nominal $n$        & 0.860$^*$ & 0.0057 & 0.0220 \\
& Input $n_{\mathrm{eff}}$ & 0.987     & 0.0154 & 0.0323 \\
& \textbf{GP $n_{\mathrm{eff}}$ (Ours)} & \textbf{0.970} & \textbf{0.0116} & \textbf{0.0257} \\
\midrule
\multirow{4}{*}{3.00} 
& Unweighted         & 0.490$^*$ & 0.0018 & 0.0292 \\
& Nominal $n$        & 0.877$^*$ & 0.0057 & 0.0187 \\
& Input $n_{\mathrm{eff}}$ & 0.990     & 0.0153 & 0.0310 \\
& \textbf{GP $n_{\mathrm{eff}}$ (Ours)} & \textbf{0.983} & \textbf{0.0113} & \textbf{0.0238} \\
\bottomrule
\end{tabular}}
\end{table*}

\begin{table*}[!htbp]
\centering
\caption{\textbf{Hyperparameter Ablation: Clipping Bound ($B \in [5.0, 30.0]$) on Target Risk.} Asterisks ($^*$) indicate significant empirical undercoverage below the nominal 0.95 level ($< 0.925$). Bold highlights the best valid practical estimator (tightest radius and lowest Winkler Score among methods maintaining valid coverage).}
\label{tab:ablation-B}
\vspace{2mm}
\resizebox{0.70\textwidth}{!}{%
\begin{tabular}{@{}llccc@{}}
\toprule
$\bm{B}$ & \textbf{Method} & \textbf{Coverage} & \textbf{Radius} & \textbf{Winkler Score} $\downarrow$ \\
\midrule
\multirow{4}{*}{5.0} 
& Unweighted         & 0.767$^*$ & 0.0016 & 0.0100 \\
& Nominal $n$        & 0.780$^*$ & 0.0030 & 0.0154 \\
& Input $n_{\mathrm{eff}}$ & 0.987     & 0.0082 & 0.0168 \\
& \textbf{GP $n_{\mathrm{eff}}$ (Ours)} & \textbf{0.963} & \textbf{0.0065} & \textbf{0.0141} \\
\midrule
\multirow{4}{*}{7.5} 
& Unweighted         & 0.767$^*$ & 0.0016 & 0.0100 \\
& Nominal $n$        & 0.803$^*$ & 0.0035 & 0.0147 \\
& Input $n_{\mathrm{eff}}$ & 0.993     & 0.0095 & 0.0192 \\
& \textbf{GP $n_{\mathrm{eff}}$ (Ours)} & \textbf{0.980} & \textbf{0.0075} & \textbf{0.0156} \\
\midrule
\multirow{4}{*}{10.0} 
& Unweighted         & 0.767$^*$ & 0.0016 & 0.0100 \\
& Nominal $n$        & 0.837$^*$ & 0.0038 & 0.0150 \\
& Input $n_{\mathrm{eff}}$ & 0.990     & 0.0103 & 0.0209 \\
& \textbf{GP $n_{\mathrm{eff}}$ (Ours)} & \textbf{0.970} & \textbf{0.0082} & \textbf{0.0172} \\
\midrule
\multirow{4}{*}{15.0} 
& Unweighted         & 0.767$^*$ & 0.0016 & 0.0100 \\
& Nominal $n$        & 0.803$^*$ & 0.0042 & 0.0171 \\
& Input $n_{\mathrm{eff}}$ & 0.987     & 0.0112 & 0.0231 \\
& \textbf{GP $n_{\mathrm{eff}}$ (Ours)} & \textbf{0.967} & \textbf{0.0089} & \textbf{0.0191} \\
\midrule
\multirow{4}{*}{20.0} 
& Unweighted         & 0.767$^*$ & 0.0016 & 0.0100 \\
& Nominal $n$        & 0.797$^*$ & 0.0043 & 0.0184 \\
& Input $n_{\mathrm{eff}}$ & 0.987     & 0.0118 & 0.0245 \\
& \textbf{GP $n_{\mathrm{eff}}$ (Ours)} & \textbf{0.960} & \textbf{0.0093} & \textbf{0.0203} \\
\midrule
\multirow{4}{*}{30.0} 
& Unweighted         & 0.767$^*$ & 0.0016 & 0.0100 \\
& Nominal $n$        & 0.790$^*$ & 0.0045 & 0.0207 \\
& Input $n_{\mathrm{eff}}$ & 0.980     & 0.0122 & 0.0254 \\
& \textbf{GP $n_{\mathrm{eff}}$ (Ours)} & \textbf{0.947} & \textbf{0.0096} & \textbf{0.0214} \\
\bottomrule
\end{tabular}}
\end{table*}

\subsection{Kernel Family and Roughness Ablation (Mat\'ern Processes)}
\label{app:matern}

The derivation of our closed-form GP $n_{\mathrm{eff}}$ formula relies on a Squared-Exponential (RBF) covariance assumption, which models an infinitely differentiable process. A theoretical concern is whether applying this RBF-derived correction yields overly optimistic (under-covered) intervals if the true underlying data-generating process is physically rough or non-differentiable (e.g., admitting a Mat\'ern covariance structure).

To stress-test this, we evaluate KMM over data generated from exact spectral Mat\'ern realizations. The Mat\'ern family is parameterized by a smoothness parameter $\nu$. While $\nu \to \infty$ recovers the RBF kernel, smaller values ($\nu = 5/2, 3/2, 1/2$) yield increasingly rough, non-differentiable processes. Mat\'ern-1/2, for example, represents a continuous but nowhere-differentiable path (equivalent to an Exponential covariance). In this experiment, our GP $n_{\mathrm{eff}}$ estimator continues to assume an RBF geometry, deliberately inducing a smoothness misspecification.

Table~\ref{tab:matern-ablation} demonstrates the robustness of the RBF-derived surrogate evaluated over Target Risk. Across all Mat\'ern kernels, GP $n_{\mathrm{eff}}$ maintains substantially lower Winkler Scores and tighter intervals than the conservative Input $n_{\mathrm{eff}}$. For Mat\'ern-5/2, the empirical coverage of our method dips slightly to $0.913$. However, the theoretical benchmark \textit{Oracle i.i.d.} (which evaluates risk on genuinely independent target samples) drops to an identical $0.917$. This confirms that this mild degradation reflects the inherent sample complexity of evaluating small samples on this specific manifold, rather than a failure of the effective sample size approximation. 

Even on the extremely rough, nowhere-differentiable Mat\'ern-1/2 process, our misspecified RBF formula maintains robust valid coverage ($0.993$) while shrinking the confidence radius by approximately $27\%$ relative to the raw input correction.

\begin{table*}[!htbp]
\centering
\caption{\textbf{Kernel Type Ablation (Roughness Stress Test) on Target Risk.} Asterisks ($^*$) denote significant empirical undercoverage below the nominal 0.95 level ($< 0.925$). GP $n_{\mathrm{eff}}$ maintains valid interval coverage even when the true generating process is non-differentiable, consistently outperforming Input $n_{\mathrm{eff}}$ in Winkler Score (WS). Note that the slight undercoverage on Mat\'ern-5/2 is matched by the Oracle i.i.d. benchmark, indicating inherent finite-sample variance rather than a failure of the GP correction.}
\label{tab:matern-ablation}
\vspace{2mm}
\resizebox{0.78\textwidth}{!}{%
\begin{tabular}{@{}llccc@{}}
\toprule
\textbf{Data Kernel} & \textbf{Method} & \textbf{Coverage} & \textbf{Radius} & \textbf{WS} $\downarrow$ \\
\midrule
\multirow{5}{*}{RBF (Smooth)} 
& Unweighted               & 0.110$^*$ & 0.0022 & 0.113 \\
& Nominal $n$              & 0.903$^*$ & 0.0076 & 0.023 \\
& Input $n_{\mathrm{eff}}$ & 0.993\phantom{$^*$} & 0.0206 & 0.042 \\
& \textbf{GP $\bm{n_{\mathrm{eff}}}$ (Ours)} & \textbf{0.987}\phantom{$^*$} & \textbf{0.0147} & \textbf{0.031} \\
\midrule
\multirow{5}{*}{Mat\'ern-5/2} 
& Unweighted               & 0.623$^*$ & 0.0019 & 0.019 \\
& Nominal $n$              & 0.850$^*$ & 0.0059 & 0.029 \\
& Input $n_{\mathrm{eff}}$ & 0.960\phantom{$^*$} & 0.0160 & 0.036 \\
& \textbf{GP $\bm{n_{\mathrm{eff}}}$ (Ours)} & \textbf{0.913}$^*$ & \textbf{0.0095} & \textbf{0.028} \\
& \textit{Oracle i.i.d.}   & \textit{0.917}$^*$ & \textit{0.0062} & \textit{0.018} \\
\midrule
\multirow{5}{*}{Mat\'ern-3/2} 
& Unweighted               & 0.080$^*$ & 0.0028 & 0.164 \\
& Nominal $n$              & 0.900$^*$ & 0.0065 & 0.020 \\
& Input $n_{\mathrm{eff}}$ & 1.000\phantom{$^*$} & 0.0175 & 0.035 \\
& \textbf{GP $\bm{n_{\mathrm{eff}}}$ (Ours)} & \textbf{0.990}\phantom{$^*$} & \textbf{0.0128} & \textbf{0.026} \\
\midrule
\multirow{5}{*}{Mat\'ern-1/2 (Rough)} 
& Unweighted               & 0.057$^*$ & 0.0023 & 0.132 \\
& Nominal $n$              & 0.923$^*$ & 0.0068 & 0.019 \\
& Input $n_{\mathrm{eff}}$ & 1.000\phantom{$^*$} & 0.0184 & 0.037 \\
& \textbf{GP $\bm{n_{\mathrm{eff}}}$ (Ours)} & \textbf{0.993}\phantom{$^*$} & \textbf{0.0134} & \textbf{0.027} \\
\bottomrule
\end{tabular}}
\end{table*}

\subsection{Robustness to Non-Stationary Covariates}
\label{app:nonstat}

A core structural assumption in standard mixing theory is the strict stationarity of the covariate sampling process $\mathbf{X}_t$. In realistic ecological or spatio-temporal monitoring, however, the effective correlation length $\ell_x(t)$ frequently fluctuates due to non-uniform sampling velocity, terrain transitions, or seasonal variations.

To stress-test our framework against structural non-stationarity, we evaluate KMM over covariate sequences generated via a non-stationary Gibbs covariance kernel \citep{gibbs1998bayesian}. The Gibbs kernel generalizes the stationary RBF covariance by allowing the local correlation length $\ell_x(t)$ to vary continuously across sequence indices:
\begin{equation}
\operatorname{Cov}(X_s, X_t) = \sigma_0^2 \sqrt{\frac{2\ell_x(s)\ell_x(t)}{\ell_x(s)^2 + \ell_x(t)^2}} \exp\left( - \frac{|s-t|^2}{\ell_x(s)^2 + \ell_x(t)^2} \right).
\end{equation}
We evaluate three sampling regimes with sample size $n_{\mathrm{tr}} = 128$:
\begin{enumerate}
    \item \textbf{Stationary Baseline}: Fixed persistence length $\ell_x(t) \equiv 2.5$.
    \item \textbf{Abrupt Switch}: A sharp structural regime break where the sequence abruptly shifts from dense oversampling ($\ell_x = 0.5$ for $t \le n_{\mathrm{tr}}/2$) to long-range persistent dependence ($\ell_x = 4.5$ for $t > n_{\mathrm{tr}}/2$), with sequence average $\bar{\ell}_x = 2.5$.
    \item \textbf{Smooth Wave}: A continuous harmonic oscillation simulating periodic variations in sampling density: $\ell_x(t) = 2.5 + 2.0\sin(2\pi t / n_{\mathrm{tr}})$, with average $\bar{\ell}_x = 2.5$.
\end{enumerate}

Throughout all non-stationary trials, our estimator deliberately remains blind to the time-varying profile $\ell_x(t)$, simply plugging the sequence average $\bar{\ell}_x = 2.5$ into the stationary GP $n_{\mathrm{eff}}$ formula. 

As reported in Table~\ref{tab:nonstationary-ablation}, because KMM evaluates risk as a trajectory-wide integral, local fluctuations in dependence length are safely bounded by the sequence-averaged length scale. While Nominal $n$ systematically under-covers on Target Risk across all configurations (collapsing to $0.880^*$ under abrupt switches), our GP $n_{\mathrm{eff}}$ maintains robust valid coverage ($\ge 0.983$) across both stationary and non-stationary profiles. Furthermore, GP $n_{\mathrm{eff}}$ achieves a $17.5\%$ tighter confidence radius and superior Winkler Scores compared to the overly conservative Input $n_{\mathrm{eff}}$, confirming that the proposed correction remains safe even when strict stationarity is violated.

\begin{table*}[!htbp]
\centering
\caption{\textbf{Robustness to Non-Stationary Covariates on Target Risk.} Asterisks ($^*$) indicate significant empirical undercoverage below the nominal 0.95 level ($< 0.925$). Covariates are generated via a non-stationary Gibbs kernel with mean dependence $\bar{\ell}_x = 2.5$. Plugging the trajectory-average correlation length into our stationary formula maintains valid coverage ($\ge 0.983$) and yields the lowest Winkler Score (WS) among valid estimators across all non-stationary regimes.}
\label{tab:nonstationary-ablation}
\vspace{2mm}
\resizebox{0.78\textwidth}{!}{%
\begin{tabular}{@{}llccc@{}}
\toprule
\textbf{Dependence Profile} & \textbf{Method} & \textbf{Coverage} & \textbf{Radius} & \textbf{WS} $\downarrow$ \\
\midrule
\multirow{5}{*}{Stationary} 
& Unweighted               & 0.700$^*$ & 0.0020 & 0.013 \\
& Nominal $n$              & 0.900$^*$ & 0.0048 & 0.014 \\
& Input $n_{\mathrm{eff}}$ & 0.993\phantom{$^*$} & 0.0120 & 0.024 \\
& \textbf{GP $\bm{n_{\mathrm{eff}}}$ (Ours)} & \textbf{0.990}\phantom{$^*$} & \textbf{0.0099} & \textbf{0.020} \\
\midrule
\multirow{5}{*}{Abrupt Switch} 
& Unweighted               & 0.717$^*$ & 0.0020 & 0.014 \\
& Nominal $n$              & 0.880$^*$ & 0.0046 & 0.014 \\
& Input $n_{\mathrm{eff}}$ & 0.997\phantom{$^*$} & 0.0114 & 0.023 \\
& \textbf{GP $\bm{n_{\mathrm{eff}}}$ (Ours)} & \textbf{0.990}\phantom{$^*$} & \textbf{0.0094} & \textbf{0.019} \\
\midrule
\multirow{5}{*}{Smooth Wave} 
& Unweighted               & 0.750$^*$ & 0.0020 & 0.011 \\
& Nominal $n$              & 0.893$^*$ & 0.0046 & 0.016 \\
& Input $n_{\mathrm{eff}}$ & 0.997\phantom{$^*$} & 0.0115 & 0.023 \\
& \textbf{GP $\bm{n_{\mathrm{eff}}}$ (Ours)} & \textbf{0.983}\phantom{$^*$} & \textbf{0.0095} & \textbf{0.020} \\
\bottomrule
\end{tabular}}
\end{table*}

\section{Real Data Experiments Details}
\label{app:real_data_details}

\subsection{Semi-Synthetic Overlap Stress Test on California Housing}
\label{app:california_overlap}

The controlled synthetic experiments isolate dependence and evaluate the dependence-induced effective sample size developed in the main text. We use a separate semi-synthetic experiment to isolate a different component of the problem: deterioration of source--target overlap. Real covariates and geographic coordinates are retained from the California Housing dataset, while outcomes are generated from a known conditional mechanism shared by source and target. This allows overlap to be varied without simultaneously introducing conditional shift.

Let $q_0(x)$ denote the empirical distribution over the California Housing covariates. We define a standardized overlap score:
\[
g(x) = \operatorname{std}\!\left(0.75\,X_{\mathrm{MedInc}} + 0.45\,X_{\mathrm{Latitude}} + 0.25\,X_{\mathrm{AveRooms}}\right),
\]
which is truncated to $[-3,3]$. Source and target distributions are constructed by symmetric exponential tilting:
\begin{equation}
q_S^{(\gamma)}(x) \propto q_0(x)\exp\{-\gamma g(x)/2\}, \qquad q_T^{(\gamma)}(x) \propto q_0(x)\exp\{+\gamma g(x)/2\}.
\label{eq:california_tilting}
\end{equation}
Consequently,
\begin{equation}
\frac{q_T^{(\gamma)}(x)}{q_S^{(\gamma)}(x)} \propto \exp\{\gamma g(x)\},
\label{eq:california_ratio}
\end{equation}
so the population density ratio is analytically known.

We consider shift severities $\gamma \in \{0, 0.5, 1, 1.5, 2, 3\}$. At $\gamma=0$, source and target coincide; increasing $\gamma$ progressively reduces overlap. For each value of $\gamma$, we draw $n_S = n_T = 256$ observations independently with replacement from the corresponding tilted empirical distributions. Results are averaged over 100 repetitions.

Let the standardized California Housing predictors be denoted by $X$. We construct:
\[
\mu(X) = 0.2 + 0.6\,\sigma\{\eta(X)\},
\]
where $\sigma$ is the logistic function and
\begin{align}
\eta(X) ={}& 0.75X_{\mathrm{MedInc}} + 0.25\sin(X_{\mathrm{HouseAge}}) + 0.20\tanh(X_{\mathrm{AveRooms}}) \nonumber\\
&- 0.15\tanh(X_{\mathrm{AveOccup}}) + 0.20\sin(X_{\mathrm{Latitude}}) + 0.20\cos(X_{\mathrm{Longitude}}) \nonumber\\
&+ 0.12X_{\mathrm{MedInc}}X_{\mathrm{AveRooms}}.
\end{align}
The observed response is
\[
Y(s) = \mu(X(s)) + \varepsilon(s),
\]
where $\varepsilon(s)$ is a bounded spatial random field generated using random Fourier features:
\[
\varepsilon(s) = 0.12\tanh\{Z(s)\}.
\]
The Fourier coefficients are independently sampled from symmetric zero-mean Gaussian distributions in each repetition. Hence the residual process has zero expectation across realizations and is spatially correlated within a realization. Since $\mu(X) \in [0.2, 0.8]$ and $\varepsilon(s) \in [-0.12, 0.12]$, the response remains bounded in $[0.08, 0.92]$ by construction. Across the final experiment, the realized response range was $[0.091, 0.917]$.

The same conditional mechanism is used for source and target for all $\gamma$. Thus, the experiment changes covariate overlap while keeping the conditional response mechanism invariant. The target estimand is the population target mean:
\begin{equation}
\mu_T^{(\gamma)} = \sum_x q_T^{(\gamma)}(x)\mu(x),
\label{eq:california_target}
\end{equation}
which is known from the semi-synthetic construction.

\paragraph{Estimators and KMM Configuration}
We compare three estimators: the unweighted source mean, self-normalized oracle importance weighting using the known ratio in Eq.~\eqref{eq:california_ratio}, and KMM using source and unlabeled target covariates. The KMM weights satisfy:
\[
0 \leq w_i \leq B, \qquad \sum_{i=1}^{n_S} w_i = n_S,
\]
with clipping bound $B = 10$. We use an RBF kernel with bandwidth $h = 8$ and ridge regularization $\lambda = 10^{-2}$.

These KMM hyperparameters were selected in a separate no-shift sanity check at $\gamma=0$, using only covariate balance and weight stability and without using outcomes. We evaluated $h \in \{1, 2, 4, 8\}$ and $\lambda \in \{10^{-4}, 10^{-3}, 10^{-2}\}$. The selected configuration gave mean weight ESS $215.9$ out of 256, mean maximum weight $3.16$, no weights at the upper bound, and reduced empirical MMD$^2$ from $5.81 \times 10^{-4}$ to $2.1 \times 10^{-5}$ in the sanity experiment.

\paragraph{Overlap and Weight Diagnostics}
We quantify empirical source--target discrepancy using MMD$^2$ and weight concentration using:
\begin{equation}
\operatorname{ESS}_w = \frac{\left(\sum_i w_i\right)^2}{\sum_i w_i^2}.
\label{eq:california_weight_ess}
\end{equation}
This quantity is a diagnostic of importance-weight concentration and is distinct from the dependence-induced effective sample size $n_{\mathrm{eff}}$ central to our theoretical results.

\begin{figure}[!ht]
\centering
\includegraphics[width=\columnwidth]{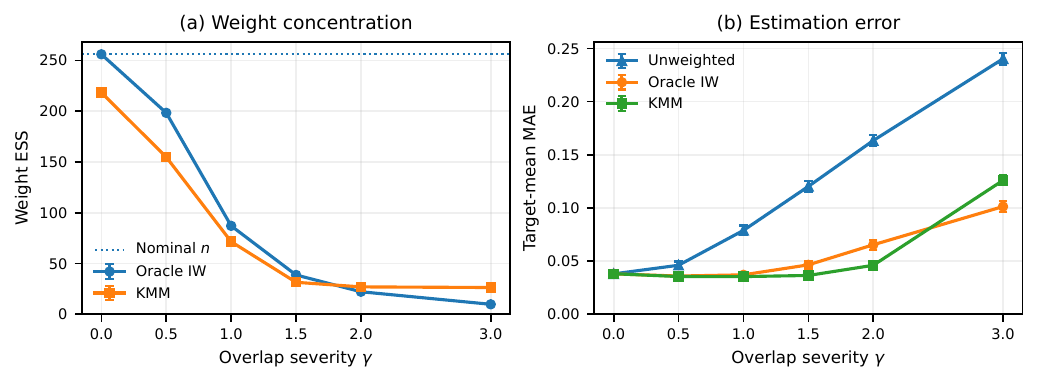}
\caption{\textbf{Controlled Overlap Stress Test on California Housing Covariates.} Increasing $\gamma$ progressively separates source and target distributions. \textbf{Left:} Oracle and KMM weight ESS decrease as overlap deteriorates. \textbf{Right:} KMM controls target-mean error under moderate shift, but its bounded weights cannot fully reproduce the density ratio under severe overlap loss. Weight ESS measures weight concentration and is distinct from the dependence-induced $n_{\mathrm{eff}}$. Error bars show one standard error across 100 repetitions.}
\label{fig:california_overlap}
\end{figure}

\begin{table}[!ht]
\centering
\caption{\textbf{Semi-Synthetic Overlap Stress Test on California Housing Covariates.} Increasing $\gamma$ separates source and target distributions. ESS denotes weight effective sample size ($\operatorname{ESS}_w$), not the dependence-induced $n_{\mathrm{eff}}$. MAE is computed over 100 repetitions.}
\label{tab:california_overlap}
\vspace{1.5mm}
\resizebox{\columnwidth}{!}{%
\begin{tabular}{@{}lccccccc@{}}
\toprule
$\bm{\gamma}$ & \textbf{MMD$\bm{^2}$} & \textbf{Oracle ESS} & \textbf{KMM ESS} & \textbf{Oracle Max $\bm{w}$} & \textbf{Unweighted MAE} & \textbf{Oracle MAE} & \textbf{KMM MAE} \\
\midrule
0.0 & 0.0007 & 256.0 & 218.3 & 1.0  & 0.0380 & 0.0380 & 0.0381 \\
0.5 & 0.0048 & 198.3 & 154.8 & 4.3  & 0.0461 & 0.0359 & \textbf{0.0355} \\
1.0 & 0.0176 & 87.0  & 71.5  & 15.5 & 0.0789 & 0.0371 & \textbf{0.0354} \\
1.5 & 0.0393 & 38.7  & 31.5  & 36.6 & 0.1203 & 0.0465 & \textbf{0.0365} \\
2.0 & 0.0704 & 22.1  & 26.9  & 58.9 & 0.1634 & 0.0653 & \textbf{0.0460} \\
3.0 & 0.1549 & 9.8   & 26.3  & 92.8 & 0.2403 & \textbf{0.1013} & 0.1260 \\
\bottomrule
\end{tabular}%
}
\end{table}

\paragraph{Results and Discussion}
The construction produces the intended progressive deterioration in overlap (Figure~\ref{fig:california_overlap} and Table~\ref{tab:california_overlap}). Mean empirical MMD$^2$ increases from $0.0007$ at $\gamma=0$ to $0.1549$ at $\gamma=3$, while oracle weight ESS falls from $256$ to $9.8$. KMM weight ESS similarly decreases from $218.3$ to $26.3$. Across all replicates, $\gamma$ is strongly positively associated with MMD$^2$ (Spearman $\rho = 0.986$) and negatively associated with both oracle and KMM weight ESS ($\rho = -0.937$ and $-0.975$, respectively).

At $\gamma=0$, all three estimators have essentially identical error: MAE is $0.0380$ for the unweighted and oracle estimators and $0.0381$ for KMM. This confirms that weighting does not introduce unnecessary distortion when source and target coincide.

As overlap deteriorates, the unweighted estimator rapidly degrades, whereas reweighting substantially reduces target-mean error. At $\gamma=2$, for example, MAE is $0.1634$ without weighting, $0.0653$ with oracle weighting, and $0.0460$ with KMM, yielding an MAE reduction of approximately $72\%$ relative to the unweighted baseline.

The severe-overlap regime illustrates the limitation of interpreting weight ESS as a measure of successful transfer. At $\gamma=3$, the mean maximum oracle weight is $92.8$ and oracle weight ESS falls to $9.8$. KMM is constrained by $B=10$ and consequently retains a larger ESS of $26.3$. Nevertheless, its MAE is $0.1260$, compared with $0.1013$ for oracle weighting. The larger KMM ESS reflects more stable weight concentration, not better approximation of the target density ratio. Bounding the weights limits variance amplification but can leave residual reweighting bias when adequate overlap would require more extreme weights. Consistent with this pattern, KMM weight ESS is negatively associated with KMM absolute error across replicates (Spearman $\rho = -0.391$), but this association does not imply that larger ESS alone guarantees smaller target error.

\paragraph{The Role of Clipping Bound $B$ in Theorem 1}
The sweep exposes a direct balance--stability trade-off. When $\gamma=0$, $B^*=1$ and increasing the admissible weight bound has essentially no effect. At $\gamma=1$, the empirical results similarly stabilize once $B$ approaches $B^* \approx 19.85$.

Under stronger overlap loss, however, the required population ratio becomes much larger than the weight bounds considered. At $\gamma=2$, increasing $B$ from $5$ to $20$ reduces postweighting MMD$^2$ from $0.0162$ to $0.0047$ and reduces bias magnitude from $0.0611$ to $0.0013$. This improvement in balance is accompanied by a decrease in weight ESS from $52.1$ to $15.1$. Increasing $B$ further provides little additional RMSE improvement, illustrating the finite-sample bias--variance trade-off induced by the weight constraint (Figure 6).

The same phenomenon is more pronounced at $\gamma=3$, where $B^*=4739.9$ and even $B=100$ represents only approximately $2.1\%$ of the population ratio bound. Increasing $B$ from $5$ to $100$ reduces post-weighting MMD$^2$ from $0.0768$ to $0.0288$ and bias magnitude from $0.1533$ to $0.0465$, while weight ESS falls from $51.8$ to $4.0$. Thus, a larger weight ESS does not by itself indicate successful target correction: restrictive bounds can retain a comparatively large ESS precisely because they prevent the weights required to remove source--target imbalance.

These results clarify that the bounded-density-ratio assumption in Theorem~1 is substantive rather than merely technical. When $B < B^*(\gamma)$, the admissible KMM weights cannot represent the population density ratio, leaving residual RKHS imbalance. Relaxing the bound reduces this imbalance and its associated bias, but simultaneously increases weight concentration and sampling variability.

\begin{figure*}[ht!]
\centering
\includegraphics[width=0.3\textwidth]{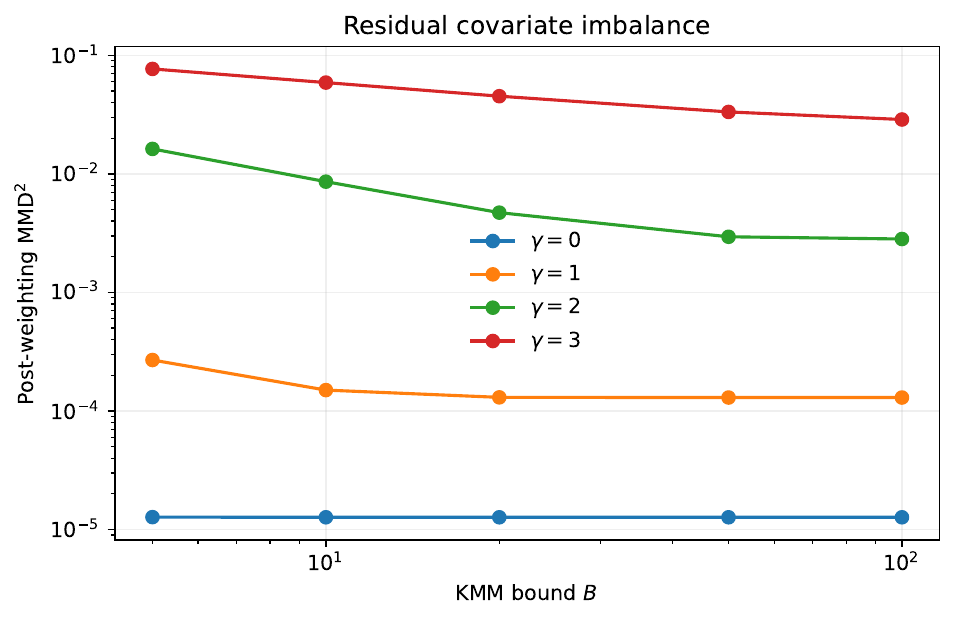}
\hfill
\includegraphics[width=0.3\textwidth]{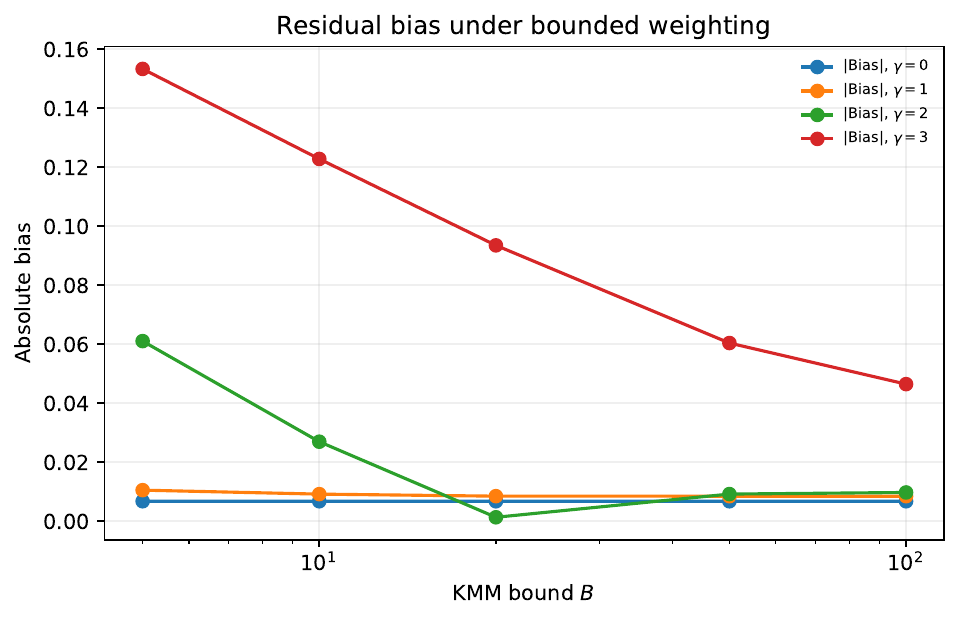}
\hfill
\includegraphics[width=0.3\textwidth]{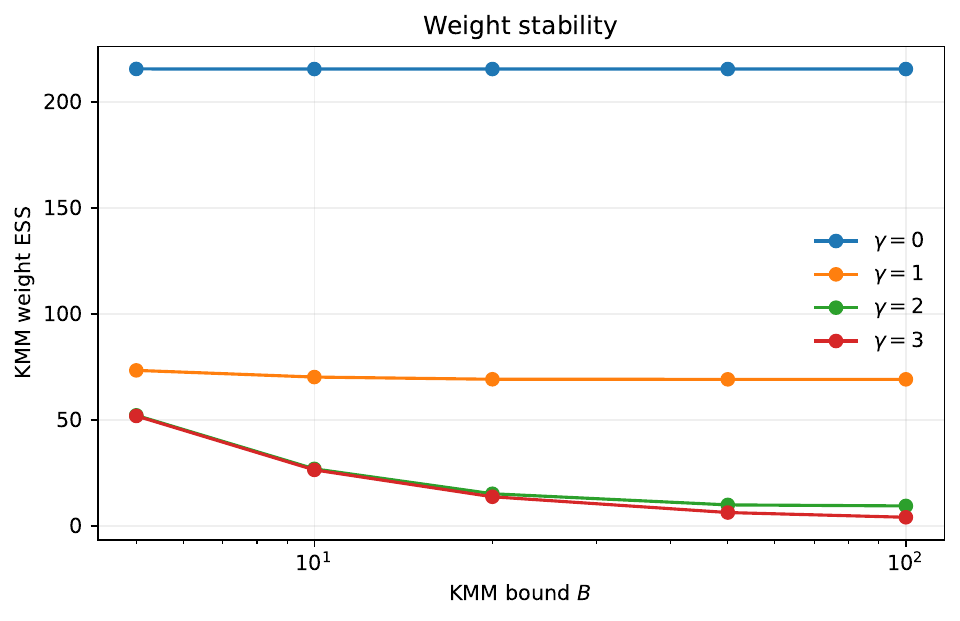}

\caption{
\textbf{Effect of the KMM clipping bound $B$ under deteriorating overlap.}
Increasing $B$ allows KMM to achieve better source--target balance and reduces residual estimation bias, but at the cost of increasingly
concentrated weights and lower weight ESS.
When the population density-ratio bound $B^*$ is moderate ($\gamma=1$), the results stabilize once $B$ reaches $B^*$.Under severe overlap loss ($\gamma=2,3$), $B^*$ is much larger than
the considered clipping levels, exposing a pronounced balance--stability trade-off.
Weight ESS here measures weight concentration and is distinct from the dependence-induced $n_{\mathrm{eff}}$ used elsewhere in the paper.}
\label{fig:california_B_sweep}
\end{figure*}

\subsection{Three-Domain Real-Data Benchmark: Point Estimation}
\label{app:real_point}

In Section~\ref{sec:real_benchmark}, our primary focus is on confidence interval calibration under data dependence. For completeness, Table~\ref{tab:real_point} reports point estimation metrics across the three real-data monitoring domains: NOAA weather stations, PM$_{2.5}$ monitoring stations, and EPA ozone monitoring stations.

We compare the unweighted source mean, logistic importance weighting (Log-IW), and KMM for two distinct estimands:
\begin{enumerate}
    \item \textbf{Target Response Mean:} $\mu_T = \mathbb{E}_T[Y]$.
    \item \textbf{Prediction Squared-Error Risk:} $R_T(f) = \mathbb{E}_T[(Y - f(X))^2]$ for the frozen random-forest predictor $f$.
\end{enumerate}

\paragraph{Datasets and response scaling.}
All responses are clipped and normalized to $[0,1]$; consequently, the reported errors are in normalized units, not physical units. NOAA uses daily station observations from 2025, with response
$Y=\operatorname{clip}((\mathrm{TMAX}+30)/80,0,1)$, where
$\mathrm{TMAX}$ is measured in degrees Celsius. Predictors are minimum temperature, precipitation, and sine/cosine encodings of day of year. The Beijing PM$_{2.5}$ dataset uses
$Y=\operatorname{clip}(\mathrm{PM}_{2.5}/500,0,1)$, with concentration measured in $\mu\mathrm{g}/\mathrm{m}^3$.
Predictors are temperature, pressure, dew point, rainfall, wind speed, SO$_2$, NO$_2$, CO, O$_3$, and sine/cosine encodings of day of year.
EPA Ozone uses 2022 daily measurements of parameter 44201: the daily maximum eight-hour ozone concentration, normalized as
$Y=\operatorname{clip}(\mathrm{O}_3/0.2,0,1)$, with ozone in ppm. Temperature and wind measurements are joined by monitoring-site
identifier and calendar date; predictors are temperature, wind speed, and sine/cosine encodings of day of year.

\paragraph{Predictor and weighting methods.}
The random forest is trained separately from the evaluation tasks and frozen thereafter. It uses 100 trees, a minimum leaf size of 10, all predictors as candidate features, and random seed
20261005. Missing predictor values are handled by median imputation in the prediction pipeline, and predictions are clipped to $[0,1]$.
KMM uses an RBF kernel with a pooled source--target, covariate-only standardization and median-distance bandwidth. Weights satisfy $0\leq w_i\leq 10$ and $\sum_i w_i=n$.
Logistic importance weighting estimates density ratios by classifying source versus target covariates; ratios are upper-capped at 10 and self-normalized for point estimation. Target outcomes are not used to train the predictor or fit the importance weights.

\paragraph{Point-estimation versus confidence interval tasks.}
The point-estimation results in Table~\ref{tab:real_point} utilize a fundamentally different task construction compared to the rolling-window interval evaluation detailed later. Specifically, point estimation is computed over full-series or full-quarter transfers: 141 NOAA transfer tasks, 48 PM$_{2.5}$ station--quarter tasks, and 50 EPA Ozone full-series transfer tasks. For PM$_{2.5}$, the predictor is trained on 2013--2014 data, with source observations drawn from 2015 and target observations from the corresponding quarters of 2016. In contrast, the confidence interval evaluation (described in Appendix~\ref{app:real_intervals}) relies on a strict rolling-window protocol over 250 independent trials of fixed length $n$. Therefore, the point-estimation errors reported here reflect the overall performance on broad transfer tasks and do not directly correspond to the specific 250 finite-sample segments utilized in the interval calibration experiments.

\paragraph{Evaluation reference.}
Population target means and risks are unknown.
For evaluation, they are replaced by the empirical response mean and empirical squared-error risk on the held-out target sample. MAE is the unweighted average of task-level absolute
estimation errors. Target outcomes are accessed only for final scoring. Tasks can share stations and evaluation periods, so task counts do not represent independent Monte Carlo repetitions.

\begin{table}[!ht]
\centering
\caption{\textbf{Point Estimation on the Three-Domain Real-Data Benchmark.} Mean Absolute Error (MAE) is reported for target response mean $\mu_T = \mathbb{E}_T[Y]$ and squared-error risk $R_T(f) = \mathbb{E}_T[(Y - f(X))^2]$. Bold indicates the lowest MAE within each dataset--estimand pair.}
\label{tab:real_point}
\vspace{1.5mm}
\resizebox{\columnwidth}{!}{%
\begin{tabular}{@{}llccc@{}}
\toprule
\textbf{Data} & \textbf{Estimand} & \textbf{Unweighted} & \textbf{Log-IW} & \textbf{KMM} \\
\midrule
NOAA       & Target Mean ($\mu_T$)  & 0.0503  & 0.0389          & \textbf{0.0379} \\
PM$_{2.5}$ & Target Mean ($\mu_T$)  & 0.0208  & 0.0270          & \textbf{0.0155} \\
EPA Ozone  & Target Mean ($\mu_T$)  & 0.0290  & 0.0245          & \textbf{0.0239} \\
\midrule
NOAA       & Target Risk ($R_T(f)$) & 0.00209 & \textbf{0.00205} & 0.00213 \\
PM$_{2.5}$ & Target Risk ($R_T(f)$) & 0.00214 & \textbf{0.00205} & 0.00244 \\
EPA Ozone  & Target Risk ($R_T(f)$) & 0.00137 & \textbf{0.00109} & 0.00120 \\
\bottomrule
\end{tabular}%
}
\end{table}

\paragraph{Discussion of Point Estimation Results.}
As shown in Table~\ref{tab:real_point}, KMM achieves the lowest target-mean MAE across all three domains, reducing the unweighted error from $0.0503$ to $0.0379$ for NOAA, from $0.0208$ to $0.0155$ for PM$_{2.5}$, and from $0.0290$ to $0.0239$ for EPA Ozone. 
However, the pattern differs for prediction risk: Logistic-IW achieves the lowest risk MAE across all three domains, while KMM does not consistently outperform the unweighted baseline. These results highlight that achieving improved covariate balance does not guarantee uniform improvements in point estimation across all observational transfer tasks.

\subsection{Three-Domain Real-Data Benchmark: Confidence Interval Protocol}
\label{app:real_intervals}

The interval calibration experiment addresses a distinct question: how accounting for dependent data generation alters interval coverage and width around a fixed KMM estimate. GP-based intervals improve empirical coverage relative to nominal intervals, although coverage remains below the nominal 95\% level for NOAA and EPA Ozone. We therefore interpret these results as a real-data stress test of the dependence correction rather than as verification of the theorem's assumptions or finite-sample coverage guarantees.

\paragraph{Rolling-Window Evaluation Protocol.}
For the confidence interval calibration experiment in Section~\ref{sec:real_benchmark}, we adopted a rigorous Monte Carlo rolling-window protocol to ensure statistically robust coverage estimates. For NOAA and EPA Ozone, the data is partitioned into disjoint sets of predictor-training stations, source stations, and held-out target stations (fixed using data-split seed 20261005). 

Over $250$ independent Monte Carlo repetitions (initialized with random seed 42), the evaluation procedure randomly selects a source station and extracts a contiguous time-series window of specific size $n$. The unlabelled target sample is constructed by uniformly sampling up to 1000 observations from the respective target station pool. The KMM weights are optimized on this pair, and the empirical autocorrelations $r_k$ alongside the functional length-scale $h$ are estimated directly from the source window. All confidence intervals are completely fixed before the target outcomes are revealed for final scoring. These are conditional plug-in intervals that treat fitted weights as fixed; they do not separately propagate weight estimation variance or sampling variability in the finite-target reference.

\paragraph{Real-Data Hyperparameter Selection and Configurations.}
Because the target mean response $Y$ and the target loss surface $L$ exhibit different functional smoothness profiles, their optimal balance-variance trade-offs occur at different KMM clipping bounds $B$. To rigorously benchmark the estimators and report the tightest valid intervals, we evaluated a comprehensive grid over the source sample size $n \in \{96, 128, 192, 256\}$ and clipping bound $B \in \{10.0, 15.0, 20.0\}$. 

The representative configurations reported in the main text (Table~\ref{tab:real_gp_coverage}) were selected to maximize valid coverage while minimizing the Winkler Score. Given the presence of unobserved confounders and structural non-stationarity in real-world geospatial data, we relaxed the strict validity threshold slightly to $0.920$. The optimal configurations are as follows:
\begin{itemize}
    \item \textbf{PM$\mathbf{_{2.5}}$}: Target Mean was evaluated at $n=128, B=20.0$; Target Risk at $n=128, B=15.0$.
    \item \textbf{EPA Ozone}: Target Mean was evaluated at $n=256, B=15.0$; Target Risk at $n=256, B=20.0$.
    \item \textbf{NOAA}: Both estimands required the maximum sample capacity due to extreme spatial shift, evaluated at $n=256, B=20.0$.
\end{itemize}

\end{document}